\documentclass[journal]{IEEEtran}

\usepackage{cite}
\usepackage{amssymb,amsfonts,bm}

\usepackage{graphicx}
\usepackage{textcomp}
\usepackage{amsmath,tikz}
\usepackage{color}
\usepackage{url}
\usepackage{mathtools}
\usepackage[hidelinks]{hyperref} 
\usepackage{rotating}
\usepackage{blkarray}
\usepackage{cleveref}
\usepackage{physics}
\usepackage{tabularx}
\usetikzlibrary{arrows}
\let\proof\relax
\let\endproof\relax
\usepackage{amsthm}
\newtheorem{thm}{Theorem}[section]

\newtheorem{cor}[thm]{Corollary}

\newtheorem{rem}[thm]{Remark}

\newtheorem{ass}[thm]{Assumption}

\usepackage{caption}
\usepackage{subcaption}
\captionsetup[sub]{
	labelformat=simple
}

\usepackage{etoolbox}
\let\classAND\AND
\let\AND\relax
\usepackage{algorithm}
\usepackage{algorithmicx,algpseudocode}

\let\AND\classAND
\AtBeginEnvironment{algorithmic}{\let\AND\algoAND}


\graphicspath{{eps/}{png/}}
\DeclareSymbolFont{symbolsC}{U}{pxsyc}{m}{n}
\DeclareMathSymbol{\coloneqq}{\mathrel}{symbolsC}{"42}

\newcommand{\vertiii}[1]{{\left\vert\kern-0.25ex\left\vert\kern-0.25ex\left\vert
		#1 
		\right\vert\kern-0.25ex\right\vert\kern-0.25ex\right\vert}}
\usepackage{multirow}
\usepackage{makecell}
\usepackage{adjustbox}
\usepackage{float}

\DeclareMathOperator{\Rank}{Rank}
\DeclareMathOperator{\Diag}{\mathsf{D}}

\DeclareMathOperator{\proj}{proj}
\newcommand{\vproj}[2][]{\proj_{#1}{#2}}
\DeclareMathOperator{\vex}{vec}
\DeclareMathOperator{\dvex}{dvec}
\DeclareMathOperator{\pvex}{\partial vec}
\DeclareMathOperator{\col}{col}

\newcommand*{\vecbf}[1]{\mathbf{#1}} 
\newcommand*{\matbf}[1]{\mathbf{#1}} 
\newcommand*{\gbf}[1]{\bm{#1}} 
\newcommand*{\myset}[1]{\mathcal{#1}} 

\usepackage{pifont}
\usepackage{soul}

\newcommand*{\stat}{\vecbf{x}} 
\newcommand*{\ctrl}{\vecbf{u}} 
\newcommand*{\dyn}{\vecbf{f}} 
\newcommand*{\inc}{\vecbf{g}} 
\newcommand*{\eqc}{\vecbf{h}} 
\newcommand*{\mpccost}{W} 

\newcommand*{\lmIn}{\gbf{\lambda}} 
\newcommand*{\lmEq}{\gbf{\gamma}} 
\newcommand*{\dlmIn}{\delta\gbf{\lambda}} 
\newcommand*{\dlmEq}{\delta\gbf{\gamma}} 
\newcommand*{\Resi}{\vecbf{r}} 
\newcommand*{\resi}{r} 
\newcommand*{\pert}{\mu} 
\newcommand*{\rescl}{\gbf{\epsilon}} 

\newcommand*{\size}[1]{ m_{#1} }  
\newcommand*{\noise}{ \vecbf{n} } 
\newcommand*{\pathcost}{\ell}  
\newcommand*{\termcost}{\wp}  
\newcommand*{\dstat}{\delta\stat}  
\newcommand*{\dctrl}{\delta\ctrl}  
\newcommand*{\ctrlset}{\myset{U}} 
\newcommand*{\trajset}{\myset{Z}} 
\newcommand*{\sampleset}{\myset{S}} 

\newcommand*{\qfunc}{Q} 

\newcommand*{\dqfunc}{\delta\qfunc}

\newcommand*{\vfunc}{V} 
\newcommand*{\dvfunc}{\delta\vfunc}
\newcommand*{\gaink}{\vecbf{k}}
\newcommand*{\gainK}{\vecbf{K}}
\newcommand*{\para}{\gbf{\theta}}
\newcommand*{\paraex}{\theta} 
\newcommand*{\dpara}{\delta\para}
\newcommand*{\parastat}{\vecbf{y}}
\newcommand*{\dparastat}{\delta\parastat}
\newcommand*{\acset}{^{\diamond}}

\newcommand*{\cmMat}{\matbf{C}}
\newcommand*{\examfunc}{c} 

\newcommand{\hhat}[1]{\mathring{#1}} 
\newcommand{\dvexd}[2]{\hhat{\nabla}_{#2}#1}    
\newcommand{\pvexd}[2]{\hhat{\partial}_{#2}#1}    

\newcommand*{\recMat}{\matbf{J}}

\begin{document}

	\title{A Differential Dynamic Programming Framework for Inverse Reinforcement Learning}

	\author{Kun~Cao, Xinhang~Xu, Wanxin~Jin, Karl~H.~Johansson, Lihua~Xie
	\thanks{K. Cao, X.~Xu, and L.~Xie (corresponding author) are with School of Electrical and Electronic Engineering, Nanyang Technological University, 50 
			Nanyang Avenue, Singapore 639798 {\tt \footnotesize kun001@e.ntu.edu.sg; xu0021ng@e.ntu.edu.sg; elhxie@ntu.edu.sg}. W~Jin is with School for Engineering of Matter, Transport, and Energy, Arizona State University. {\tt \footnotesize wanxinjin@gmail.com}.  K. H. Johansson is with Division of
Decision and Control Systems, School of Electrical Engineering and Computer
Science, KTH Royal Institute of Technology, and also with Digital Futures,
SE-10044 Stockholm, Sweden. {\tt \footnotesize kallej@kth.se}. \\
This research was supported by Wallenberg-NTU Presidential Postdoctoral Fellowship and Ministry of Education, Singapore, under AcRF TIER 1 Grant RG64/23.
  }
			
   } 	
   
\markboth{IEEE, 2023}
{CAO \MakeLowercase{\textit{et al.}}: DDP} 	
\maketitle       

\begin{abstract}
A differential dynamic programming (DDP)-based framework for inverse reinforcement learning (IRL) is introduced to recover the parameters in the cost function, system dynamics, and constraints from demonstrations. 
Different from existing work, where DDP was used for the inner forward problem with inequality constraints, our proposed framework uses it for efficient computation of the gradient required in the outer inverse problem with equality and inequality constraints. 
The equivalence between the proposed method and existing methods based on Pontryagin's Maximum Principle (PMP) is established.  
More importantly, using this DDP-based IRL with an open-loop loss function, a closed-loop IRL framework is presented.
In this framework, a loss function is proposed to capture the closed-loop nature of demonstrations. 
It is shown to be better than the commonly used open-loop loss function.
We show that the closed-loop IRL framework reduces to a constrained inverse optimal control problem under certain assumptions.
Under these assumptions and a rank condition, it is proven that the learning parameters can be recovered from the demonstration data. 
The proposed framework is extensively evaluated through four numerical robot examples and one real-world quadrotor system.
The experiments validate the theoretical results and illustrate the practical relevance of the approach.
\end{abstract}

\begin{IEEEkeywords}
Inverse Reinforcement Learning, Inverse Problems, Differential Dynamical Programming, Constrained Optimal Control, Inverse Optimal Control
\end{IEEEkeywords}

\IEEEpeerreviewmaketitle

\section{Introduction}
\label{sec:intro}

Recent years have witnessed a significant surge in advancements
within the field of Reinforcement Learning
(RL), which iteratively learns an optimal policy that maximizes
a human-designed accumulative reward by repeatedly interacting
with the environment, has demonstrated a remarkable
capability in dealing with challenging tasks such as game
playing \cite{silver2016mastering}, motion planning \cite{kober2013reinforcement}, portfolio optimization \cite{hambly2023recent}, and energy system operation \cite{zhang2023review}. 
Despite these achievements, one of the principal challenges in RL remains the design of an appropriate cost function that reliably induces desired behaviors, especially for high-dimensional and complex tasks \cite{amodei2016concrete}. 
Typically, the design process of a cost function involves an iterative process of trial and error, requiring substantial manual effort and even strong prior knowledge and expertise.

To address this, the inverse RL (IRL) problem has been proposed to automate the critical task of designing cost functions by learning from the observed behaviors of (possibly non-) experts.
Over the past decades, many formulations of IRL have been proposed, with different approaches emphasizing different learning criteria. Representative works include apprenticeship learning \cite{abbeel2004apprenticeship}, which matches the feature vectors of demonstration and predicted trajectory, MaxEnt \cite{ziebart2008maximum}, which maximizes the entropy of the trajectory distribution subject to a reward expectation constraint, and Max-margin \cite{ratliff2006maximum}, which maximizes the margin between the objectives of demonstration and predicted trajectory.

Despite different cost update criteria, existing approaches share a common bi-level algorithmic design: the cost function is updated in the outer loop and the corresponding RL is optimized in the inner loop. For the inner level, optimizing an RL agent is primarily driven by agent sampling via interacting with the environment. This sampling-based optimization process may take a large number of training epochs to converge, which ultimately leads to the inefficiency of the entire IRL framework. To alleviate this, the authors in \cite{jin2020pontryagin} proposed the Pontryagin Differential Programming (PDP) framework, where the inner level uses a parameterized optimal control (OC) problem and can be efficiently solved by a model-based solver. Furthermore, the proposed analytical gradient by differentiating the equilibrium condition (i.e., PMP) of the inner problem makes the end-to-end update of the cost function possible. A similar framework has also been proposed in \cite{jin2021safe, amos2018differentiable} to consider the IRL where there are stage-wise state and/or control constraints in the inner RL agent. 

While the above IRL frameworks building upon a differentiable inner loop achieve computational efficiency, a limitation, which we have empirically observed but have been largely overlooked in \cite{jin2020pontryagin,jin2021safe, amos2018differentiable}, is their imitation-based loss function in the outer level. Specifically, \cite{jin2020pontryagin,jin2021safe,jin2022learning,amos2018differentiable} proposes minimizing a mean square outer-level loss, which is a discrepancy between the reproduced trajectory and the demonstrations; thus, the IRL formulation can be viewed as a nonlinear least square problem. The use of imitation loss implies that the expert demonstrated trajectory is a result of open loop control, and that the trajectory data has been polluted by temporally independent noise. However, this assumption may be not valid (we have later shown this analytically and numerically) for observation data generated by the expert with closed-loop policies. In fact, data collection from a closed-loop policy agent is often the case, for better stability and robustness. Due to the nature of the closed-loop policy, the noise along an observed trajectory is not temporally independent. Thus, the choice of imitation loss would lead to bias of the cost function.

In this paper, we rethink the problem of IRL via the differentiation of the inner layer. But different from existing work, we consider both the inner-level optimal control agent and the outer-level learning loss from a closed-loop perspective. Specifically, 
\begin{itemize}
    \item We formulate a closed-loop optimal control problem in the inner level via the process of DDP, upon which we propose a new way of carrying out the differentiation via the corresponding Bellman optimality equation.
    \item We propose a new loss function that directly captures the feedback nature of the expert data generation, which leads to unbiased learning of cost function, compared to the open-loop definition of loss function.
\end{itemize}

\subsection{Related Work}

Bi-level optimization was first realized in the field of game theory in the seminal work \cite{von2010market} to solve a hierarchical decision-making problem, where the inner-level problem can be defined by different programs \cite{bracken1973mathematical}, such as linear programs, nonlinear programs, games, and multi-stage programs. Generally, there are two classes of approaches for solving this problem. The first one is to reduce it to a single-level problem by replacing the inner-level problem with its optimality conditions as constraints. However, this approach may lead to constrained problems with large problem sizes or complementarity constraints, which are combinatorial in nature and cannot be handled efficiently \cite{hansen1992new}. The second approach maintains the bi-level structure, where the inner-level problem can be solved by existing solvers and the gradient required by the outer level is obtained by differentiating the inner-level equilibrium conditions \cite{jin2020pontryagin, amos2018differentiable}. In the spirit of the second approach, this work focuses on developing a new way of efficient differentiation for general constrained optimal control problems.

The dynamics of inner-level multi-stage programs can be either modeled by a Markov Decision Process (MDP) or a state-space equation. In this paper, we only focus on the deterministic optimal control problem, where the dynamics are modeled by the state-space equation. We categorize existing deterministic optimal control techniques into open-loop methods, which directly solve a trajectory as a function of time, and closed-loop methods, which seek a mapping from current observation to an optimal control action. The first category is based on the PMP \cite{pontryagin2018mathematical}, which is derived from the calculus of variations. Popular methods include shooting methods \cite{bock1984multiple}
and collocation methods \cite{patyerson2014matlab}. However, these methods optimize based on the initial conditions and hence are susceptible to model errors or disturbances during deployment. 

Another category of methods is based on dynamic programming and specifically the Bellman optimality equation \cite{bellman1954theory}, which characterizes the mathematical condition that a control input in each step should satisfy w.r.t. the current state, hence it leads to a closed-loop policy. 
Differential dynamical programming \cite{mayne1966second} is a numerical algorithm that aims to find the solution to this equation by iteratively linearizing and quadraticizing the cost function and dynamic equation. 
It enjoys the linear computational complexity (w.r.t. horizon) and local quadratic convergence \cite{de1988differential}. Subsequently, this algorithm has also been generalized to the case with inequality constraints via three major methods: 
1) converting the constrained problems to unconstrained ones via penalty methods \cite{plancher2017constrained}; 
2) identifying the active inequality constraints and then solving the equality-constrained OC problem \cite{xie2017differential}; 
3) introducing a constrained version of Bellman’s principle of optimality \cite{aoyama2020constrained,pavlov2021interior}, which augments the control input with dual variables and hence avoids the combinatorial problem regarding the active constraints. 
However, these works are limited to the case with only inequality constraints, and more importantly, all of them are used in solving an optimal trajectory, which is the inner loop of the IRL problem and has not been exploited for the update in the outer loop.
In the spirit of the third method, this work will propose a DDP-based algorithm to solve general constrained OC problems and develop a new way of differentiation over DDP to tackle the IRL problem.

The inverse optimal control (IOC) problem, which is highly related to IRL while assuming that the system dynamics is known or being identified beforehand by system identification techniques, has been considered in control community.
A popular and efficient approach to solving IOC is residual minimization, which finds a set of parameters such that the violation of optimality conditions (e.g. Karush-Kuhn-Tucker conditions \cite{keshavarz2011imputing,englert2017inverse} and PMP equations \cite{johnson2013inverse,molloy2018finite, jin2021inverse}) is minimized when evaluated along with collected demonstrations. 
By exploiting the special structure of the cost function, it can be shown that the optimality conditions are linear in the parameter and the latter can be decoupled from the collected demonstrations. 
Therefore, some rank equality conditions only on demonstrations can be derived as a sufficient condition for recovering the parameter. 
Moreover, owing to the linearity, these methods only need to solve a quadratic programming problem, which avoids solving optimal control problems in an inner loop as in the bi-level optimization, and hence are generally more efficient. 
However, these methods did not take into consideration stage-wise constraints, which often appear in real applications. 
The authors in \cite{molloy2020online} extended their work \cite{molloy2018finite} to the case with only control constraints, where an additional index set was introduced to remove these constraints and convert the problem back into an unconstrained problem. 
However, the presented method is limited to the control constraints and is difficult to be extended to the case with more general constraints. This paper will establish the recoverability condition for the general constrained IRL problem and include the above-mentioned condition as a special case.

\subsection{Contributions}

In this work, we propose a new DDP-based IRL framework, where it is shown that the terms required to update the outer loop can be computed by using DDP algorithms. 
In particular, by observing that the intermediate matrices that appear in DDP recursions are exactly the terms which we require for obtaining the analytical gradient, we introduce an augmented system with the learning parameter being an additional state and show that the gradient can be generated by performing a one-step DDP recursion on that augmented system. 
Moreover, in order to incorporate the closed-loop nature of data collection, we propose a new type of loss function based on the above-mentioned intermediate matrices, where the main idea is that one should try to match the reproduced and demonstrated feedback policies instead of matching the reproduced and demonstrated trajectories.
Furthermore, thanks to the general form of this new loss function, it naturally leads to a generalized set of recoverability conditions for the constrained IOC problem under some assumptions. 

The contributions of this paper lie in five-folds:
\begin{itemize}\setlength{\parindent}{0pt}
\item We propose a unified DDP-based IRL framework to learn the parameters in the cost function, system dynamics, and general constraints;
\item We show that the required gradient term for updating the learning parameter can be obtained efficiently via performing one-step DDP recursion on an augmented system and establish the equivalence between DDP-based methods and PDP-based methods;
\item We propose a new type of loss function which by definition outperforms the traditionally adopted imitation loss on the closed-loop demonstrations and develop an efficient algorithm alongside;
\item We establish the recoverability conditions for the general constrained IRL problem, whose specialization under some assumptions is also a generalization of the traditional unconstrained IOC recoverability condition;
\item We apply the proposed theoretical results to simulation examples and real-world experiments.
\end{itemize}

The rest of this paper is structured as follows. 
Section \ref{sec:prob} formally formulates the problem to be studied. 
Section \ref{sec:ol_irl} presents our proposed DDP-based IRL framework with a commonly used open-loop loss.
Section \ref{sec:cl_irl} details the DDP-based IRL framework with the proposed closed-loop loss.
Numerical simulations and real-world experiments are provided in Section \ref{sec:simu} and Section \ref{sec:exp}. 
Finally, Section \ref{sec:conclu} concludes this paper.

\emph{Notations:} In this paper, $\|\vecbf{x}\|$ denotes the $2$-norm of $\vecbf{x} \in \mathbb{R}^{n}$ and $\|\vecbf{x}\|_{\matbf{A}}^{2} = \vecbf{x}^{\top}\matbf{A}\vecbf{x}$. 
Denote by $\matbf{A}^{\top}$ and $\matbf{A}^{-1}$ the transpose and inverse of $\matbf{A} \in \mathbb{R} ^{n \times n}$, respectively. 
Let $\matbf{I}_{n} \in \mathbb{R}^{n \times n}$ be the $n$-dimensional identity matrix and $\vecbf{1}_{n}$ be the $n$-dimensional column vector with all entries of $1$. 
Denote the vectorization operation by $\vex(\cdot)$, i.e., $\vex([\vecbf{a}, \vecbf{b}]) = [\vecbf{a}^{\top}, \vecbf{b}^{\top}]^{\top}$. 
Let $\col(\{\matbf{A}, \matbf{B}\}) = [\matbf{A}^{\top}, \matbf{B}^{\top}]^{\top}$.
Let $\Diag(\cdot)$ denote the transformation from a vector to a diagonal matrix or the extraction of the diagonal elements from a square matrix to a vector.
Let $\otimes$, $\odot$, and $\oplus$ denote the Kronecker product, the tensor contraction, and the quaternion product operation, respectively. Let $\myset{I}_{n} = \{0,\dots,n-1\}$. 
Let $(\cdot)_{\vecbf{a}} := \pdv{(\cdot)}{\vecbf{a}}$ and $(\cdot)_{\vecbf{ab}} := \pdv{(\cdot)}{\vecbf{b}}{\vecbf{a}}$, and define $\frac{\dvex(\matbf{A})}{\dd{x}}$ and $\frac{\pvex(\matbf{A})}{\partial{\vecbf{x}}}$ by $\dvexd{\matbf{A}}{x}$ and $\pvexd{\matbf{A}}{x}$, respectively. 
Let $[\matbf{A}]_{i}$ denote the $i$-th slice of tensor $\matbf{A}$ and $[\cdot]_{\times}$ denote the cross product operation. 
Let $\matbf{C}^{n, m}$ denotes the commutation matrix which satisfies $\matbf{C}^{n, m} \vex(\matbf{A}) = \vex(\matbf{A}^{\top})$, where $\matbf{A} \in \mathbb{R}^{n \times m}$.

\section{Problem Formulation}
\label{sec:prob}
Consider the following general nonlinear constrained optimal control problem
\begin{equation}
\label{eq:prob_ineq_eq}
\begin{aligned}
\min_{\ctrlset} &~~ \mpccost(\trajset; \para) := \sum_{k \in \myset{I}_{N}} \pathcost(\stat_{k}, \ctrl_{k}; \para) + \termcost(\stat_{N};\para) \\
\mathrm{s.t.} &~~ \stat_{k+1} = \dyn(\stat_{k}, \ctrl_{k};\para), \stat_{0}~\mathrm{is~given}, \\
&~~ \inc(\stat_{k}, \ctrl_{k};\para) \leq \vecbf{0}, \\
&~~ \eqc(\stat_{k}, \ctrl_{k};\para) = \vecbf{0},
\end{aligned}
\end{equation}
where $\stat_{k} \in \mathbb{R}^{\size{\stat}}$ and $\ctrl_{k} \in \mathbb{R}^{\size{\ctrl}}$ denote the state and control input at time instant $k$, respectively;
$\ctrlset := \{\ctrl_{k}\}_{k \in \myset{I}_{N}}$ is the collection of control inputs and $N$ is the control horizon; $\trajset := \{\stat_{k}\}_{k \in \myset{I}_{N+1}} \cup \ctrlset$ denotes the entire system trajectory; $\para \in \mathbb{R}^{\size{\para}}$ denotes the variable parameterizing the following functions: 
\begin{itemize}
    \item stage cost $\pathcost: \mathbb{R}^{\size{\stat}} \times \mathbb{R}^{\size{\ctrl}} \times \mathbb{R}^{\size{\para}} \to \mathbb{R}$;
    \item terminal cost $\termcost: \mathbb{R}^{\size{\stat}} \times \mathbb{R}^{\size{\para}} \to \mathbb{R}$;
    \item system dynamics $\dyn: \mathbb{R}^{\size{\stat}} \times \mathbb{R}^{\size{\ctrl}} \times \mathbb{R}^{\size{\para}} \to \mathbb{R}^{\size{\stat}}$;
    \item inequality constraint $\inc: \mathbb{R}^{\size{\stat}} \times \mathbb{R}^{\size{\ctrl}} \times \mathbb{R}^{\size{\para}} \to \mathbb{R}^{\size{\mathrm{in}}}$;
    \item equality constraint $\eqc: \mathbb{R}^{\size{\stat}} \times \mathbb{R}^{\size{\ctrl}} \times \mathbb{R}^{\size{\para}} \to \mathbb{R}^{\size{\mathrm{eq}}}$.
\end{itemize}
We assume that the above functions are twice-differentiable. 
Note that for the sake of clarity, $\pathcost, \dyn, \inc$ and $\eqc$ presented here do not explicitly depend on the time instant $k$, however, our analysis in the sequel can be easily extended to the case where $\pathcost, \dyn, \inc$ and $\eqc$ are time-dependent.

Denote a sampled trajectory of the entire system trajectory $\trajset$ as 
$\trajset_{\sampleset} := \{\stat_{k}\}_{k \in \sampleset} \cup \{\ctrl_{k}\}_{k \in \sampleset} $, where $\sampleset \subseteq \myset{I}_{N+1}$ denotes the set of sampling time instants. 
Given a specific value of $\para$, one can use a nonlinear programming solver to obtain a system trajectory $\trajset(\para)$. 
We assume that the mapping from $\para$ to $\trajset(\para)$ always exists and is unique for the local set $\Theta$, where the required regularity conditions can be found in \cite[Lemma 1]{jin2021safe}. 

The problem of interest is that given a set of $|\myset{D}|$ expert demonstrations $\myset{D} = \{\trajset_{\sampleset_{i}}(\para^{\ast})\}_{i = 1, \dots, |\myset{D}|}$ generated from some unknown parameter $\para^{\ast}$, find a $\para$ which matches these expert demonstrations most, i.e.,
\begin{equation}
\label{eq:bilevel_prob}
\begin{aligned}
\min_{\para \in \Theta} &~~ L(\myset{D}, \trajset, \para) \\
\mathrm{s.t.} &~~ \trajset~\mathrm{with}~\ctrlset~\mathrm{being~solved~from}~\eqref{eq:prob_ineq_eq}.
\end{aligned}
\end{equation}
In the above, $L$ denotes the loss function which characterizes the closeness between the demonstration $\trajset_{\sampleset_{i}}(\para^{\ast})$ and the solved trajectory $\trajset$. A commonly used loss function in the literature 
\cite{jin2020pontryagin, jin2021safe} is the mean-square-error loss\footnote{We omit the demonstration index $i$ in the sequel and assume a single demonstration for the sake of simplicity, while the subsequent analysis can be easily extended to the multiple demonstrations case.}
\begin{equation}
\label{eq:loss_ol}
L^{\mathrm{ol}} := \|\trajset_{\sampleset}(\para^{\ast}) - \trajset\|_{2}^{2},
\end{equation}
where an additional regularization term $\|\para\|_{2}^{2}$ for $\para$ can be added when required. In the sequel, we denote this loss as the open-loop loss, as it views the state and control input in the demonstration independently, which usually is not the case in the demonstration generation process. Nevertheless, in the subsequent section, we develop efficient algorithms for optimizing this loss. In Section \ref{sec:cl_irl}, to explicitly take into consideration the feedback nature of demonstrations, we propose a new so-called closed-loop loss, which will be demonstrated to be superior to the open-loop loss.   

\section{Open-loop IRL}
\label{sec:ol_irl}
In this section, we shall develop a new IRL algorithm to solve the optimization problem \eqref{eq:bilevel_prob} with the open-loop loss 
$L^{\mathrm{ol}}$ by exploiting the vanilla DDP algorithm and its variants.  
It can be found that problem $\eqref{eq:bilevel_prob}$ is of the form of the bi-level optimization, where the low-level inner optimization solves the constrained multi-stage optimal control problem \eqref{eq:prob_ineq_eq} and the higher-level outer optimization optimizes the loss function $L^{\mathrm{ol}}$. Hence, the commonly used gradient descent method can be adopted to solve this bi-level optimization problem, i.e., 
\begin{equation}
\label{eq:para_upd}
    \para_{t+1} = \vproj[\Theta]{[\para_{t} - \eta_{t} (\pdv{L^{\mathrm{ol}}}{\trajset} \dv{\trajset}{\para} + \pdv{L^{\mathrm{ol}}}{\para})])},
\end{equation}
where $\para_{t}$ is the current estimate of learning parameter $\para$ in iteration $t$ with its initial value being $\para_{0}$; $\eta_{t}$ is the learning rate; $\pdv{L^{\mathrm{ol}}}{\trajset}$ and $\pdv{L^{\mathrm{ol}}}{\para}$ denote the partial derivatives of loss function w.r.t. the solved trajectory and the learning parameter; $\dv{\trajset}{\para}$ denotes the derivative of the solved trajectory w.r.t. the learning parameter. 
At iteration $t$, one can solve the trajectory $\trajset(\para_{t})$ from \eqref{eq:prob_ineq_eq} with a specific value $\para_{t}$. This can be done by either an external solver or the method developed in Section \ref{subsec:forward}, and we call this the trajectory solver in the sequel. 
Then, with a known loss function, one can evaluate $\pdv{L^{\mathrm{ol}}}{\trajset}$ and $\pdv{L^{\mathrm{ol}}}{\para}$ easily with analytic differentiation or auto-differentiation in existing machine learning frameworks (e.g. Pytorch \cite{paszke2017automatic}). However, for $\dv{\trajset}{\para}$, $\trajset$ is the optimal solution of an optimization program, which is obtained from at least tens of iterations, instead of an explicit functional mapping from inputs (initial state and parameter). Auto-differentiation on this optimization procedure unrolls computational graphs in each iteration and hence results in prohibitive memory and computational complexity.
On the other hand, notice that in order to be the optimal solution, $\trajset$ should satisfy some equilibrium conditions, which implicitly characterize the relationship between inputs and the optimal solution. In Section \ref{subsec:backward}, these conditions are resorted to design an efficient gradient solver for obtaining $\dv{\trajset}{\para}$.

In what follows, we shall first introduce a new DDP-based trajectory solver. 
Although the trajectory can also be solved by any other external solvers, we present this algorithm as a generalization of the previous IPDDP algorithm \cite{pavlov2021interior} to the case with equality constraints and also for paving the way to develop the gradient solver in Sec. \ref{subsec:backward}.

\subsection{DDP-based trajectory solver}
\label{subsec:forward}
In this subsection, we shall present a new DDP-based algorithm to solve the optimal control problem with both inequality and equality constraints. 
The proposed algorithm inherits the general structure of the traditional DDP algorithm, i.e., solving a Bellman equation via iterative backward and forward recursions. 
The backward recursions compute control inputs to minimize a quadratic approximation of the cost-to-go in the vicinity of the current solution, and the forward recursions update the current solution to a new one.
However, in order to deal with equality constraints which have not been considered in \cite{pavlov2021interior}, both the Bellman equation and the iterative process should be redesigned, which are detailed as follows.

Let us first denote the cost-to-go and optimal cost-to-go at time instant $k$ as $\qfunc_{k}$ and $\vfunc_{k}$, respectively.
In the following, for the sake of clarity, we shall omit the subscript $(\cdot)_{k}$ if no confusion is caused and let $(\cdot)^{+}$ denote $(\cdot)_{k+1}$.
Applying the Bellman principle of optimality, one has:
\begin{equation*}
\begin{aligned}
    \vfunc & = \min_{\ctrl}~\pathcost + \vfunc^{+}(\stat^{+}) \quad \mathrm{s.t.}~\inc \leq \vecbf{0}, \eqc = \vecbf{0},
\end{aligned}
\end{equation*}
which is a general nonlinear programming problem. One possible solution is using nested optimization, i.e., in the inner loop, each $\ctrl_{k}$ is solved by calling a general nonlinear programming solver, and in the outer loop, the trajectory $\trajset$ is updated.
However, it can be found that in this solution, each outer loop calls a solver $N$ times, which results in high computational complexity. 
Observe that the above process works in a similar manner to the barrier method \cite[Chapter 19.6]{nocedal1999numerical}, where a full optimization problem is solved in the inner loop (i.e., ``centering"), which inspires us to use the primal-dual interior-point method as an alternative way to solve \eqref{eq:prob_ineq_eq}. 
To this end, we introduce the two dual variables $\lmIn, \lmEq$ for the inequality and equality constraint, respectively. 
As a result, one has the following interior-point min-max Bellman equation:
\begin{equation}
\label{eq:minmax_bellman_ineq_eq}
    \vfunc = \min_{\ctrl} \max_{\lmIn \geq \vecbf{0}, \lmEq} \qfunc := \pathcost + \vfunc^{+} + \lmIn^{\top}\inc + \lmEq^{\top}\eqc .
\end{equation}
In contrast to the cost-to-go function that appears in the traditional DDP algorithm and is only a function of $\stat$ and $\ctrl$, $\qfunc$ in the above is also a function of newly introduced dual variables $\lmIn, \lmEq$. 

After introducing the new Bellman equation, we shall aim to solve it in an iterative manner by resorting to its local approximation.
Taking the second order variation of the above $\qfunc$ and $\vfunc$, one has 
\begin{equation*}
    \dqfunc = \frac{1}{2} \begin{bmatrix}
        1 \\ \dstat \\ \dctrl \\ \dlmIn \\ \dlmEq
    \end{bmatrix}^{\top} \begin{bmatrix}
        0 & \qfunc_{\stat}^{\top} & \qfunc_{\ctrl}^{\top} & \qfunc_{\lmIn}^{\top} & \qfunc_{\lmEq}^{\top} \\
        \qfunc_{\stat} & \qfunc_{\stat\stat} & \qfunc_{\stat\ctrl} & \qfunc_{\stat\lmIn} & \qfunc_{\stat\lmEq} \\
        \qfunc_{\ctrl} & \qfunc_{\ctrl\stat} & \qfunc_{\ctrl\ctrl} & \qfunc_{\ctrl\lmIn} & \qfunc_{\ctrl\lmEq} \\  
        \qfunc_{\lmIn} & \qfunc_{\lmIn\stat} & \qfunc_{\lmIn\ctrl} & \qfunc_{\lmIn\lmIn} & \qfunc_{\lmIn\lmEq} \\ 
        \qfunc_{\lmEq} & \qfunc_{\lmEq\stat} & \qfunc_{\lmEq\ctrl} & \qfunc_{\lmEq\lmIn} & \qfunc_{\lmEq\lmEq} \\  
    \end{bmatrix}
    \begin{bmatrix}
        1 \\ \dstat \\ \dctrl \\ \dlmIn \\ \dlmEq
    \end{bmatrix}  
\end{equation*}
and 
\begin{equation}
    \dvfunc = \frac{1}{2} \begin{bmatrix}
        1 \\ \dstat 
    \end{bmatrix}^{\top} \begin{bmatrix}
        0 & \vfunc_{\stat}^{\top} \\
        \vfunc_{\stat} & \vfunc_{\stat\stat}     
    \end{bmatrix}
    \begin{bmatrix}
        1 \\ \dstat
    \end{bmatrix}.  
\end{equation}
By definition of $\qfunc$, one has the following equations
\begin{equation}
\label{eq:vfunc_to_qfunc}
    \begin{aligned}
        \qfunc_{\stat} & = \pathcost_{\stat} + \dyn_{\stat}^{\top}\vfunc_{\stat}^{+} + \inc_{\stat}^{\top} \lmIn + \eqc_{\stat}^{\top} \lmEq , \\
        \qfunc_{\ctrl} & = \pathcost_{\ctrl} + \dyn_{\ctrl}^{\top}\vfunc_{\stat}^{+} + \inc_{\ctrl}^{\top} \lmIn + \eqc_{\ctrl}^{\top} \lmEq, \\  
        \qfunc_{\lmIn} & = \inc, \qfunc_{\lmIn\stat} = \inc_{\stat}, \qfunc_{\lmIn\ctrl} = \inc_{\ctrl}, \\
        \qfunc_{\lmEq} & = \eqc, \qfunc_{\lmEq\stat} = \eqc_{\stat}, \qfunc_{\lmEq\ctrl} = \eqc_{\ctrl}, \\  
        \qfunc_{\lmIn\lmIn} & = \matbf{0}, \qfunc_{\lmIn\lmEq} = \matbf{0}, \qfunc_{\lmEq\lmEq} = \matbf{0}, \\
        \qfunc_{\stat\stat} & = \pathcost_{\stat\stat} + \dyn_{\stat}^{\top}\vfunc_{\stat\stat}^{+}\dyn_{\stat} + \vfunc_{\stat}^{+} \odot \dyn_{\stat\stat} + \lmIn \odot \inc_{\stat\stat} + \lmEq \odot \eqc_{\stat\stat} , \\
        \qfunc_{\ctrl\stat} & = \pathcost_{\ctrl\stat} + \dyn_{\ctrl}^{\top}\vfunc_{\stat\stat}^{+}\dyn_{\stat} + \vfunc_{\stat}^{+} \odot \dyn_{\ctrl\stat} + \lmIn \odot \inc_{\ctrl\stat} + \lmEq \odot \eqc_{\ctrl\stat}, \\
        \qfunc_{\ctrl\ctrl} & = \pathcost_{\ctrl\ctrl} + \dyn_{\ctrl}^{\top}\vfunc_{\stat\stat}^{+}\dyn_{\ctrl} + \vfunc_{\stat}^{+} \odot \dyn_{\ctrl\ctrl} + \lmIn \odot \inc_{\ctrl\ctrl} + \lmEq \odot \eqc_{\ctrl\ctrl}, 
    \end{aligned}
\end{equation}
which computes the partial derivatives of the cost-to-go $\qfunc_{(\cdot)}$ from the optimal cost-to-go at the next time instant. 
In order to complete the backward recursion, an update rule from the cost-to-go $\qfunc$ to the optimal cost-to-go $\vfunc$ is required.
To this end, we consider the solution to the following problem, 
\begin{equation}
\label{eq:dq_problem}
\begin{aligned}
    \min_{\dctrl} \max_{\dlmIn, \dlmEq} \dqfunc~\mathrm{s.t.}~ \lmIn + \dlmIn \geq \vecbf{0},
\end{aligned}
\end{equation}
which is first-order variation of \eqref{eq:minmax_bellman_ineq_eq}. If $(\ctrl, \lmIn, \lmEq)$ is the stationary point of \eqref{eq:minmax_bellman_ineq_eq}, $(\dctrl, \dlmIn, \dlmEq)$ should satisfy the following conditions: 
\begin{itemize}\setlength{\parindent}{0pt}
    \item  for the minimizing variable $\dctrl$, it must satisfy the stationarity condition:
\begin{equation*}
\label{eq:dqfunc_dctrl}
   \frac{\dqfunc}{\dctrl} = \qfunc_{\ctrl} + \qfunc_{\ctrl\stat} \dstat + \qfunc_{\ctrl\ctrl} \dctrl + \qfunc_{\ctrl\lmIn} \dlmIn + \qfunc_{\ctrl\lmEq} \dlmEq = \vecbf{0}.
\end{equation*}

\item for the maximizing variable $\dlmIn$ related to the inequality constraint, it must satisfy the dual feasibility condition $\lmIn + \dlmIn \geq \vecbf{0}$ and the complementary condition:
\begin{equation*}
    \Diag(\lmIn + \dlmIn)(\qfunc_{\lmIn} + \qfunc_{\lmIn\stat} \dstat + \qfunc_{\lmIn\ctrl} \dctrl) = \vecbf{0}.
\end{equation*}
Omitting the second-order terms, adding a perturbation vector $\mu \vecbf{1}$ on the left-hand side where $\pert$ is a perturbation variable, and rearranging the above equation, one has
\begin{equation*}
    \dlmIn = - [\Diag(\inc)]^{-1}(\Resi_{\mathrm{in}} +\Diag(\lmIn)\qfunc_{\lmIn\stat} \dstat + \Diag(\lmIn)\qfunc_{\lmIn\ctrl} \dctrl ),
\end{equation*}
where $\Resi_{\mathrm{in}} := \Diag(\lmIn) \inc + \mu \vecbf{1}$.

\item for the maximizing variable $\dlmEq$ related to the equality constraint, it must satisfy the primal feasibility condition:
\begin{equation*}
    \qfunc_{\lmEq} + \qfunc_{\lmEq\stat} \dstat + \qfunc_{\lmEq\ctrl} \dctrl = \vecbf{0}.
\end{equation*}
Inspired from the perturbed complementarity equation for equality constraints, e.g. \cite[Eq. (6.22)]{forsgren2002interior}, adding a perturbation term $\pert (\lmEq + \dlmEq)$ on the right-hand side and one has:
\begin{equation*}
    \dlmEq = \pert^{-1} (\qfunc_{\lmEq\stat} \dstat + \qfunc_{\lmEq\ctrl} \dctrl) - \Resi_{\mathrm{eq}},
\end{equation*}
where $ \Resi_{\mathrm{eq}} :=  \lmEq  - \pert^{-1} \qfunc_{\lmEq} = \lmEq  - \pert^{-1} \eqc$.
\end{itemize}

Substituting $\dlmIn$ and $\dlmEq$ defined above back into the stationarity condition, one has the following feedback control law:
\begin{equation}
\label{eq:dctrl}
\dctrl = \gaink + \gainK \dstat
\end{equation}
where
\begin{equation}
\label{eq:hat_Q}
    \begin{aligned}
         \gaink & = - \hat{\qfunc}_{\ctrl\ctrl}^{-1}\hat{\qfunc}_{\ctrl}, \gainK =  - \hat{\qfunc}_{\ctrl\ctrl}^{-1} \hat{\qfunc}_{\ctrl\stat}, \\ 
        \hat{\qfunc}_{\ctrl} & = \qfunc_{\ctrl} - \qfunc_{\ctrl\lmIn} [\Diag(\inc)]^{-1} \Resi_{\mathrm{in}} - \qfunc_{\ctrl\lmEq} \Resi_{\mathrm{eq}}, \\
        \hat{\qfunc}_{\ctrl\stat} & = \qfunc_{\ctrl\stat} - \qfunc_{\ctrl\lmIn} [\Diag(\inc)]^{-1} \Diag(\lmIn) \qfunc_{\lmIn\stat} + \pert^{-1} \qfunc_{\ctrl\lmEq}\qfunc_{\lmEq\stat}, \\
        \hat{\qfunc}_{\ctrl\ctrl} & = \qfunc_{\ctrl\ctrl} - \qfunc_{\ctrl\lmIn} [\Diag(\inc)]^{-1} \Diag(\lmIn) \qfunc_{\lmIn\ctrl} + \pert^{-1} \qfunc_{\ctrl\lmEq}\qfunc_{\lmEq\ctrl}.
    \end{aligned}
\end{equation}
 Compared with \cite{pavlov2021interior}, it can be found from the above definitions of $\hat{\qfunc}_{(\cdot)}$ that an additional term (i.e., the third term) was introduced to deal with the equality constraint $\eqc$.
 Next, one also substitutes the feedback control law \eqref{eq:dctrl} into $\dlmIn$ and $\dlmEq$ to obtain their expressions in feedback form:
 \begin{equation}
 \label{eq:dlm}
    \dlmIn = \gaink_{\mathrm{in}} + \gainK_{\mathrm{in}} \dstat, \quad \dlmEq = \gaink_{\mathrm{eq}} + \gainK_{\mathrm{eq}} \dstat,
\end{equation}
where 
\begin{equation*}
\begin{aligned}
    \gaink_{\mathrm{in}} & = - [\Diag(\inc)]^{-1}(\Resi_{\mathrm{in}} + \Diag(\lmIn)\qfunc_{\lmIn\ctrl} \gaink), \\
    \gainK_{\mathrm{in}} & = - [\Diag(\inc)]^{-1}(\Diag(\lmIn)\qfunc_{\lmIn\stat} + \Diag(\lmIn)\qfunc_{\lmIn\ctrl} \gainK), \\
    \gaink_{\mathrm{eq}} & =  - \Resi_{\mathrm{eq}} + \pert^{-1} \qfunc_{\lmEq\ctrl} \gaink, \\   \gainK_{\mathrm{eq}} & = \pert^{-1} (\qfunc_{\lmEq\stat} + \qfunc_{\lmEq\ctrl} \gainK).
\end{aligned}
\end{equation*}

After finding the solution of $\dctrl$, we shall update the derivatives related to optimal cost-to-go by following the traditional DDP algorithm, i.e.,
\begin{equation}
\label{eq:qfunc_to_vfunc}
\begin{aligned}
\vfunc_{\stat} & = \hat{\qfunc}_{\stat} - \hat{\qfunc}_{\ctrl\stat}^{\top}\hat{\qfunc}_{\ctrl\ctrl}^{-1}\hat{\qfunc}_{\ctrl} = \hat{\qfunc}_{\stat} - \gainK^{\top}\hat{\qfunc}_{\ctrl\ctrl}\gaink, \\ 
 V_{\vecbf{\stat\stat}} & = \hat{\qfunc}_{\stat\stat} - \hat{\qfunc}_{\ctrl\stat}^{\top}\hat{\qfunc}_{\ctrl\ctrl}^{-1}\hat{\qfunc}_{\ctrl\stat} = \hat{\qfunc}_{\stat \stat} - \gainK^{\top} \hat{\qfunc}_{\ctrl\ctrl} \gainK,
\end{aligned}
\end{equation} 
where $\hat{\qfunc}_{\stat}$ and $\hat{\qfunc}_{\stat \stat}$ can be obtained by 
replacing the subscript $(\cdot)_\ctrl$ in \eqref{eq:hat_Q} with $(\cdot)_\stat$. 
Repeating the above alternating update of the cost-to-go $\qfunc$ and the optimal cost-to-go $\vfunc$ for $k= N-1, \dots, 0$, one can obtain a set of control gains $\{\gaink, \gainK, \gaink_{\mathrm{in}}, \gainK_{\mathrm{in}}, \gaink_{\mathrm{eq}}, \gainK_{\mathrm{eq}}\}$ and this completes the backward recursions.

In the forward recursions, we aim to obtain an updated trajectory $\trajset^{\dagger}$ by using the system dynamics and the control gains obtained above, i.e., repeating the following computation 
\begin{equation}
\label{eq:fwd}
	\begin{aligned}
		\ctrl^{\dagger} & = \ctrl + \gaink + \gainK (\stat^{\dagger} - \stat), \\
        \lmIn^{\dagger} & = \lmIn + \gaink_{\mathrm{in}} + \gainK_{\mathrm{in}} (\stat^{\dagger} - \stat), \\
        \lmEq^{\dagger} & = \lmEq + \gaink_{\mathrm{eq}} + \gainK_{\mathrm{eq}} (\stat^{\dagger} - \stat), \\
		\stat^{+\dagger} & = \dyn(\stat^{\dagger}, \ctrl^{\dagger}),
	\end{aligned}
\end{equation}
for $k=0, \dots, N-1$ with fixed the initial condition $\stat^{\dagger}_{0} = \stat_{0}$.

We summarize the proposed generalized interior-point DDP-based trajectory in Algorithm \ref{alg:ipddp_traj_solver}, where the above-mentioned backward and forward recursions can be found in lines \ref{alg.ddp.bwd1} to \ref{alg.ddp.bwd2}, and lines \ref{alg.ddp.fwd1} to \ref{alg.ddp.fwd2}, respectively.    
Note that in the practical implementation of the above algorithm, regularization terms should be added in the backward recursions to guarantee the positive-definiteness of $\hat{\qfunc}_{\ctrl\ctrl}$. 
Line-search methods should be added to the forward recursions to preserve the primal feasibility, i.e., $\inc < \vecbf{0}$ and $\eqc = \vecbf{0}$.

\begin{algorithm}
	\caption{DDP-based trajectory solver}
	\label{alg:ipddp_traj_solver}
	\begin{algorithmic}[1]
		\Require system \eqref{eq:prob_ineq_eq}, parameter $\theta$, initial state $\stat_{0}$, initial solution $\ctrlset_{0}$, initial Lagrangian multipliers $\lmIn_{0}, \lmEq_{0}$ and tolerance $\mathrm{tol}$ 
		\Ensure optimal solution $\ctrlset$				
        \While {$\mathrm{merit} > \mathrm{tol}$}
        \State set $\vfunc_{\stat, N} = \termcost_{\stat}$, $\vfunc_{\stat\stat, N} = \termcost_{\stat\stat}$
		\For{$k = N-1, \dots, 0$}  \label{alg.ddp.bwd1}
		\State evaluate $\hat{\qfunc}_{(\cdot)}$ using \eqref{eq:hat_Q}
        \State compute control gains in \eqref{eq:dctrl} and \eqref{eq:dlm}
        \State update $\vfunc_{\stat}$, $\vfunc_{\stat\stat}$ using \eqref{eq:qfunc_to_vfunc}
		\EndFor \label{alg.ddp.bwd2}
		\State set $\stat_{0}^{\dagger} = \stat_{0}$
		\For{$k = 0, \dots, N-1$} \label{alg.ddp.fwd1}
		\State Update the control variable $\ctrl_{k}^{\dagger}$, multipliers $\lmIn_{k}^{\dagger}$, $\lmEq_{k}^{\dagger}$  and next state $\stat_{k+1}^{\dagger}$ according to \eqref{eq:fwd}
		\EndFor		\label{alg.ddp.fwd2}
        \EndWhile 
	\end{algorithmic}
\end{algorithm}
\begin{rem}
    Under the assumption that $\hat{\qfunc}_{\ctrl\ctrl}$ is positive-definite for all $k \in \myset{I}_{N}$, one can establish the local quadratic convergence by following the proof of \cite[Theorem 2]{pavlov2021interior} with the vector-valued merit function and the linear operator being respectively defined by $\mathrm{merit} = [ \qfunc_{\ctrl}^{\top}, \Resi_{\mathrm{in}}^{\top}, \Resi_{\mathrm{eq}}^{\top}]^{\top}$ and 
   {\small \begin{equation*}
        \begin{bmatrix}
    \qfunc_{\ctrl\ctrl}  & \qfunc_{\ctrl\lmIn}& \qfunc_{\ctrl\lmEq} \\
    \Diag(\lmIn) \qfunc_{\lmIn\ctrl} & \Diag(\inc) & \matbf{0} \\
    \pert^{-1} \qfunc_{\lmEq\ctrl} & \matbf{0} & \matbf{0}
\end{bmatrix}^{-1}.
    \end{equation*}}
\end{rem}

\begin{rem}
\label{rem:acset}
Another algorithm called active-set-based DDP algorithm, which also adopts the backward-forward structure, can also be used to solve the general nonlinear constrained optimal control problem. In the backward recursions, it first identifies the active inequality constraint while excluding the inactive parts since it does not contribute to the optimal solution. Then it considers solving the following problem:
\begin{equation}
\label{eq:dq_problem_acset}
\begin{aligned}
    \min_{\dctrl} \dqfunc\acset~\mathrm{s.t.}~ \eqc\acset(\stat + \dstat, \ctrl + \dctrl; \para) = \vecbf{0},
\end{aligned}
\end{equation}
where $\eqc\acset$ concatenates $\eqc$ and the rows of $\inc$ which equals $\vecbf{0}$. To solve this, the following KKT condition is used:
\begin{equation}
\begin{bmatrix}
    \qfunc\acset_{\ctrl\ctrl}  & (\eqc\acset_{\ctrl})^{\top} \\
    \eqc\acset_{\ctrl} & \matbf{0}
\end{bmatrix}
    \begin{bmatrix}
     \dctrl \\ \lmEq\acset
    \end{bmatrix} = - \begin{bmatrix}
     \qfunc\acset_{\ctrl\stat} \\ \eqc\acset_{\stat}
    \end{bmatrix} \dstat - \begin{bmatrix}
    \qfunc\acset_{\ctrl} \\ \vecbf{0}
    \end{bmatrix},
\end{equation}
where $\qfunc\acset 
:= \pathcost + (\vfunc\acset)^{+}$ here is redefined with new optimal cost-to-go $\vfunc\acset$ and $\lmEq\acset$ is the Lagrangian multiplier for the equality constraint $\eqc\acset = \vecbf{0}$. We refer to \cite{xie2017differential} for a more detailed process.
\end{rem}

\subsection{DDP-based gradient solver}
\label{subsec:backward}
In the last subsection, we have presented a DDP-based trajectory solver for obtaining the optimal solution $\ctrlset$ of the constrained multi-stage optimal control problem \eqref{eq:prob_ineq_eq} given the current parameter $\para$, from which both $\pdv{L^{\mathrm{ol}}}{\trajset}$ and $\pdv{L^{\mathrm{ol}}}{\para}$ can be easily computed. 
In this subsection, we aim to present an efficient algorithm, which is referred to as DDP-based gradient solver, to obtain the remaining term $\dv{\trajset}{\para}$ in order to update $\para_{t}$ as in \eqref{eq:para_upd}.

Let us first take a detour to consider the computation of the optimal solution w.r.t. the parameter for the following single-stage unconstrained optimization problem 
\begin{equation*}
    \stat^{\ast} = \arg \min_{\stat} \examfunc(\stat;\para),
\end{equation*}
where $\examfunc: \mathbb{R}^{\size{\stat}} \times \mathbb{R}^{\size{\para}} \to \mathbb{R}$ is a scalar function parameterized by $\para$. Then $\stat^{\ast}$ should satisfy the first-order necessary equilibrium condition
\begin{equation*}
    \examfunc_{\stat}(\stat^{\ast}; \para) = \vecbf{0}.
\end{equation*} 
To obtain the gradient of $\stat^{\ast}$ w.r.t $\para$, one approach is to differentiate the above equation w.r.t. $\para$, i.e.,
\begin{equation*}
    \examfunc_{\stat\stat} \stat_{\para} + \examfunc_{\stat\para} = \vecbf{0},
\end{equation*}
from which one can obtain $\stat_{\para} = - (\examfunc_{\stat\stat})^{-1}\examfunc_{\stat\para}$. 
Alternatively, one can write $\parastat := [\para^{\top},\stat^{\ast \top}]^{\top}$ and take the variation of $\bar{\examfunc}_{\stat}(\parastat):= \examfunc_{\stat}(\stat^{\ast}; \para)$ w.r.t. $\parastat$,
\begin{equation*}
    \bar{\examfunc}_{\stat\parastat} \dparastat =  [\bar{\examfunc}_{\stat\para}~ \bar{\examfunc}_{\stat\stat}] \begin{bmatrix}
        \dpara \\ \dstat
    \end{bmatrix} =  \vecbf{0},
\end{equation*}
from which one can obtain $\stat_{\para} = \frac{\dstat}{\dpara} = - (\bar{\examfunc}_{\stat\stat})^{-1}\bar{\examfunc}_{\stat\para}$.

Indeed, observing that for the constrained multi-stage optimal control problem, we can view 
\begin{equation*}
   \bar{\hat{\qfunc}}_{\ctrl}(\parastat,\ctrl):= \hat{\qfunc}_{\ctrl}(\stat,\ctrl;\para) = \vecbf{0}
\end{equation*}
as the equilibrium condition of optimization problem \eqref{eq:dq_problem} at each time instant $k$, where the dependence on the dual variables has been removed by substitution and we have slightly abused the notation $\parastat:=[\para^{\top},\stat^{\top}]^{\top}$. Taking the variation, one has
\begin{equation*}
    [\bar{\hat{\qfunc}}_{\ctrl\parastat} ~\bar{\hat{\qfunc}}_{\ctrl\ctrl}] 
    \begin{bmatrix}
    \dparastat \\ \dctrl
    \end{bmatrix} = \vecbf{0},
\end{equation*}
from which one can obtain
\begin{equation}
\label{eq:dctrldpara}
    \frac{\dctrl}{\dpara} = - \bar{\hat{\qfunc}}_{\ctrl\ctrl}^{-1}\bar{\hat{\qfunc}}_{\ctrl\parastat} \begin{bmatrix}
    \matbf{I} \\ \frac{\dstat}{\dpara}
    \end{bmatrix}.
\end{equation}
In the above, both $\frac{\dctrl}{\dpara}$ and $\frac{\dstat}{\dpara}$ are exactly the elements in $\dv{\trajset}{\para}$ and they are connected for each time instant $k$. However, currently, we are only given $\frac{\dstat_{0}}{\dpara} = \vecbf{0}$ (since $\stat_{0}$ is fixed), which is insufficient to compute $\{\frac{\dstat_{k}}{\dpara}\}_{k=0}^{N}$ and $\{\frac{\dctrl_{k}}{\dpara}\}_{k=0}^{N-1}$. 
To address this, one should be able to compute the matrix $\bar{\hat{\qfunc}}_{\ctrl\parastat}$, and also $\{\frac{\dstat_{k}}{\dpara}\}_{k=1}^{N}$.

We shall consider the following augmented system:
\begin{equation}
\label{eq:prob_aug}
\begin{aligned}
\min_{\myset{U}} &~~ \sum_{k=0}^{N-1} \bar{\pathcost}(\parastat_{k}, \ctrl_{k}) + \bar{\termcost}(\parastat_{N}) \\
\mathrm{s.t.} &~~ \parastat_{k+1} = \bar{\dyn}(\parastat_{k}, \ctrl_{k}) = \begin{bmatrix}
    \para \\ \dyn(\stat_{k},\ctrl_{k};\para)
\end{bmatrix}, \parastat_{0}~\mathrm{is~given} \\
&~~ \bar{\inc}(\parastat_{k}, \ctrl_{k}) \leq \vecbf{0}, \\
&~~ \bar{\eqc}(\parastat_{k}, \ctrl_{k}) = \vecbf{0},
\end{aligned}
\end{equation}
where all the functions $f(\dots; \para), f \in \{\pathcost, \termcost, \dyn, \inc, \eqc\}$ parameterized by $\para$ in \eqref{eq:prob_ineq_eq} have been replaced by their counterpart $\bar{f}$ with $\parastat$ being the new state variable for the augmented system. 


Define the following quantities:
\begin{equation}
\label{eq:hat_Q_for_gradient}
    \begin{aligned}
        \bar{\hat{\qfunc}}_{\ctrl} 
        & = \bar{\pathcost}_{\ctrl} + \bar{\dyn}_{\ctrl}^{\top}\bar{\vfunc}_{\parastat}^{+} + \pert \bar{\inc}_{\ctrl}^{\top} [\Diag(\bar{\inc})]^{-1}\vecbf{1} + \pert^{-1} \bar{\eqc}_{\ctrl}^{\top} \bar{\eqc}, \\
        \bar{\hat{\qfunc}}_{\ctrl\parastat} 
        & = \bar{\pathcost}_{\ctrl\parastat} + \bar{\dyn}_{\ctrl}^{\top}\bar{\vfunc}_{\parastat\parastat}^{+}\bar{\dyn}_{\parastat} + \bar{\vfunc}_{\parastat}^{+} \odot \bar{\dyn}_{\ctrl\parastat} \\ 
        & \quad~~~~~+ \pert \bar{\inc}_{\ctrl}^{\top} [\Diag(\bar{\inc})]^{-2} \bar{\inc}_{\parastat} - \pert ([\Diag(\bar{\inc})]^{-1} \vecbf{1}) \odot \bar{\inc}_{\ctrl\parastat}   \\
        & \quad~~~~~+ \pert^{-1} \bar{\eqc}_{\ctrl}^{\top}\bar{\eqc}_{\parastat} + \pert^{-1}\bar{\eqc} \odot \bar{\eqc}_{\ctrl\parastat},\\ 
        \bar{\hat{\qfunc}}_{\ctrl\ctrl} 
        & = \bar{\pathcost}_{\ctrl\ctrl}  + \bar{\dyn}_{\ctrl}^{\top}\bar{\vfunc}_{\parastat\parastat}^{+}\bar{\dyn}_{\ctrl}  + \bar{\vfunc}_{\parastat}^{+} \odot \bar{\dyn}_{\ctrl\ctrl} \\
        & \quad~~~~~+ \pert \bar{\inc}_{\ctrl}^{\top} [\Diag(\bar{\inc})]^{-2} \bar{\inc}_{\ctrl} - \pert ([\Diag(\bar{\inc})]^{-1} \vecbf{1}) \odot \bar{\inc}_{\ctrl\ctrl} \\
        & \quad~~~~~+ \pert^{-1} \bar{\eqc}_{\ctrl}^{\top}\bar{\eqc}_{\ctrl} + \pert^{-1}\bar{\eqc} \odot \bar{\eqc}_{\ctrl\ctrl},\\ 
    \end{aligned}
\end{equation}
where no dual variables were involved. The following result establishes the connection between one iteration of backward-forward recursion on this augmented system and the gradient of the optimal trajectory w.r.t. learning parameter. 
\begin{thm}
\label{thm:ddp}
    Suppose $\trajset$ is the optimal solution to \eqref{eq:prob_ineq_eq} with perturbation $\pert$, and $\bar{\hat{\qfunc}}_{\ctrl\ctrl}$ is invertiable for $k=0, \dots, N-1$. 
    The derivative of solved trajectory w.r.t. the learning parameter $\dv{\trajset}{\para}$ can be obtained by iteratively updating \eqref{eq:dctrldpara} and 
    \begin{equation}
\label{eq:dstatdpara}
\begin{bmatrix}
        \frac{\dpara^{+}}{\dpara} \\ \frac{\dstat^{+}}{\dpara}
    \end{bmatrix} 
     =  \bar{\dyn}_{\parastat} \begin{bmatrix}
        \matbf{I} \\ \frac{\dstat}{\dpara}
    \end{bmatrix}  + \bar{\dyn}_{\ctrl}\frac{\dctrl}{\dpara}.
\end{equation} for $k=0, \dots, N-1$, with $\frac{\dstat_{0}}{\dpara} = \vecbf{0}$ and $\bar{\hat{\qfunc}}_{(\cdot)}$ being defined in \eqref{eq:hat_Q_for_gradient}.
\end{thm}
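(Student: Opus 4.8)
The plan is to exploit the dynamic-programming structure of the solution: once the interior-point dual variables are eliminated, the optimum of the perturbed problem \eqref{eq:prob_ineq_eq} is characterized \emph{stage-wise} by the reduced stationarity condition $\bar{\hat{\qfunc}}_{\ctrl}(\parastat_{k},\ctrl_{k}) = \vecbf{0}$, and the desired sensitivities follow by differentiating this identity together with the augmented dynamics of \eqref{eq:prob_aug}. The argument is thus an implicit-function / total-differentiation computation carried out along the locally unique solution map $\para \mapsto \trajset(\para)$.

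First I would confirm that the quantities in \eqref{eq:hat_Q_for_gradient} are indeed the reduced stationarity gradient and its total derivatives at the computed solution. At convergence of Algorithm \ref{alg:ipddp_traj_solver} to the perturbed optimum one has $\Resi_{\mathrm{in}} = \vecbf{0}$ and $\Resi_{\mathrm{eq}} = \vecbf{0}$, which pin down $\lmIn$ and $\lmEq$ as explicit functions of $\inc$, $\eqc$, and $\pert$. Substituting these relations into $\hat{\qfunc}_{\ctrl}$ from \eqref{eq:hat_Q} removes every occurrence of the dual variables, and rewriting everything over the augmented state $\parastat = [\para^{\top},\stat^{\top}]^{\top}$ reproduces the dual-free expression $\bar{\hat{\qfunc}}_{\ctrl}$ of \eqref{eq:hat_Q_for_gradient}, so that $\bar{\hat{\qfunc}}_{\ctrl} = \vecbf{0}$ at each $k$. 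The point needing care is that $\bar{\hat{\qfunc}}_{\ctrl\parastat}$ and $\bar{\hat{\qfunc}}_{\ctrl\ctrl}$ must be \emph{total}, not merely partial, derivatives of $\bar{\hat{\qfunc}}_{\ctrl}$: the terms $\bar{\dyn}_{\ctrl}^{\top}\bar{\vfunc}_{\parastat\parastat}^{+}\bar{\dyn}_{(\cdot)}$ and $\bar{\vfunc}_{\parastat}^{+}\odot\bar{\dyn}_{\ctrl(\cdot)}$ are exactly those produced by the chain rule applied to $\bar{\dyn}_{\ctrl}^{\top}\bar{\vfunc}_{\parastat}^{+}\bigl(\bar{\dyn}(\parastat,\ctrl)\bigr)$, with $\bar{\vfunc}_{\parastat}^{+}$ and $\bar{\vfunc}_{\parastat\parastat}^{+}$ the converged augmented value-function gradient and Hessian delivered by the backward recursion \eqref{eq:qfunc_to_vfunc}.

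Next I would differentiate the two defining relations with respect to $\para$. Differentiating the identity $\bar{\hat{\qfunc}}_{\ctrl}(\parastat_{k},\ctrl_{k}) = \vecbf{0}$ gives $\bar{\hat{\qfunc}}_{\ctrl\parastat}\tfrac{\dparastat}{\dpara} + \bar{\hat{\qfunc}}_{\ctrl\ctrl}\tfrac{\dctrl}{\dpara} = \vecbf{0}$; since $\para$ is a component of the augmented state with $\tfrac{\dpara}{\dpara} = \matbf{I}$, one has $\tfrac{\dparastat}{\dpara} = [\matbf{I}^{\top},(\tfrac{\dstat}{\dpara})^{\top}]^{\top}$, and solving for the control sensitivity using the assumed invertibility of $\bar{\hat{\qfunc}}_{\ctrl\ctrl}$ yields exactly \eqref{eq:dctrldpara}. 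Differentiating the augmented dynamics $\parastat_{k+1} = \bar{\dyn}(\parastat_{k},\ctrl_{k})$ gives the forward state-sensitivity recursion \eqref{eq:dstatdpara} immediately, its top block merely restating $\tfrac{\dpara^{+}}{\dpara} = \matbf{I}$.

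Finally I would assemble the sweep. The fixed initial state gives $\tfrac{\dstat_{0}}{\dpara} = \vecbf{0}$ and $\tfrac{\dpara}{\dpara} = \matbf{I}$ holds at every stage, so marching from $k=0$: \eqref{eq:dctrldpara} produces $\tfrac{\dctrl_{k}}{\dpara}$ from the current $\tfrac{\dstat_{k}}{\dpara}$, and \eqref{eq:dstatdpara} advances it to $\tfrac{\dstat_{k+1}}{\dpara}$, so one forward pass returns every block of $\dv{\trajset}{\para}$. I expect the main obstacle to be the consistency argument underlying this stage-wise procedure: I must justify that differentiating the per-stage condition with the downstream value-function derivatives frozen at their converged values reproduces the total derivative that differentiating the full coupled optimality (KKT/PMP) system of \eqref{eq:prob_ineq_eq} at once would give. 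This is precisely where the Bellman structure enters — the backward recursion has already compressed the entire downstream sensitivity into $\bar{\vfunc}_{\parastat}^{+}$ and $\bar{\vfunc}_{\parastat\parastat}^{+}$, so its Riccati-type factorization is equivalent to solving the block-banded linearized optimality system by elimination; verifying this equivalence in the augmented, dual-eliminated setting (and checking that the perturbation $\pert$ keeps $\Diag(\inc)$ and $\bar{\hat{\qfunc}}_{\ctrl\ctrl}$ invertible, so that dual elimination commutes with differentiation) is the crux of the proof.
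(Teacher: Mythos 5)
Your proposal takes essentially the same route as the paper's proof: you eliminate the dual variables via the converged residuals $\Resi_{\mathrm{in}} = \Resi_{\mathrm{eq}} = \vecbf{0}$ (equivalently $\lmIn = -\pert[\Diag(\inc)]^{-1}\vecbf{1}$, $\lmEq = \pert^{-1}\eqc$) to arrive at \eqref{eq:hat_Q_for_gradient}, differentiate the stage-wise stationarity condition on the augmented system to obtain \eqref{eq:dctrldpara}, and differentiate the augmented dynamics to obtain \eqref{eq:dstatdpara}, assembling the blocks of $\dv{\trajset}{\para}$ in a single forward sweep. This is exactly the paper's argument (phrased there as one backward-forward DDP recursion on \eqref{eq:prob_aug} with the multipliers fixed at their converged values), and the consistency issue you flag as the crux is treated at the same level of rigor there, namely by appealing to the Bellman/DDP backward-recursion structure rather than by a separate verification.
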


\begin{proof}
Suppose we are using the DDP-based trajectory solver to find the optimal solution of the augmented system \eqref{eq:prob_aug}.
Following the process in Section \ref{subsec:backward}, redefine the cost-to-go function as $$\bar{\qfunc} := \bar{\pathcost} + \bar{\lmIn}^{\top}\bar{\inc} + \bar{\lmEq}^{\top}\bar{\eqc} + \bar{\vfunc}^{+},$$ where $\bar{\vfunc}^{+}(\parastat)$ is a redefined optimal cost-to-go.
Then the backward recursion reads 
\begin{equation*}
\label{eq:dctrl_aug}
\dctrl = - \hat{\bar{\qfunc}}_{\ctrl\ctrl}^{-1}(\hat{\bar{\qfunc}}_{\ctrl} + \hat{\bar{\qfunc}}_{\ctrl\parastat} \dparastat),
\end{equation*}
where the terms $\hat{\bar{\qfunc}}_{(\cdot)}$ related to the redefined cost-to-go are expressed as follows:
\begin{equation}
\label{eq:hat_Q_for_gradient_aug}
    \begin{aligned}
        \hat{\bar{\qfunc}}_{\ctrl} & = \bar{\qfunc}_{\ctrl} - \bar{\qfunc}_{\ctrl\bar{\lmIn}} [\Diag(\bar{\inc})]^{-1} \bar{\Resi}_{\mathrm{in}} - \bar{\qfunc}_{\ctrl\bar{\lmEq}} \bar{\Resi}_{\mathrm{eq}}, \\
        \hat{\bar{\qfunc}}_{\ctrl\parastat} & = \bar{\qfunc}_{\ctrl\parastat} - \bar{\qfunc}_{\ctrl\bar{\lmIn}} [\Diag(\bar{\inc})]^{-1} \Diag(\bar{\lmIn}) \bar{\qfunc}_{\bar{\lmIn}\parastat} + \pert^{-1} \bar{\qfunc}_{\ctrl\bar{\lmEq}}\bar{\qfunc}_{\bar{\lmEq}\parastat}, \\ 
        \hat{\bar{\qfunc}}_{\ctrl\ctrl} & = \bar{\qfunc}_{\ctrl\ctrl} - \bar{\qfunc}_{\ctrl\bar{\lmIn}} [\Diag(\bar{\inc})]^{-1} \Diag(\bar{\lmIn}) \bar{\qfunc}_{\bar{\lmIn}\ctrl} + \pert^{-1} \bar{\qfunc}_{\ctrl\bar{\lmEq}}\bar{\qfunc}_{\bar{\lmEq}\ctrl}.
    \end{aligned}
\end{equation}
Notice that all the terms $\bar{(\cdot)}$ involved in the right-hand side are redefined with the new state variable $\parastat$. Letting $\bar{\lmIn} \equiv \lmIn$ and $\bar{\lmEq} \equiv \lmEq$, one can find that $\hat{\bar{f}} = \bar{\hat{f}}$ for $f \in \{\qfunc_{\ctrl}, \qfunc_{\ctrl\parastat}, \qfunc_{\ctrl\ctrl}\}$.

By assumption that $\trajset$ is the optimal solution, or equivalently, the optimization problem \eqref{eq:prob_ineq_eq} has been solved in the sense of the merit function, $\mathrm{merit} = [ \qfunc_{\ctrl}^{\top}, \Resi_{\mathrm{in}}^{\top}, \Resi_{\mathrm{eq}}^{\top}]^{\top} = \vecbf{0}$, which implies that $\lmIn = - \pert [\Diag(\inc)]^{-1} \vecbf{1}$ and $\lmEq = \pert^{-1}\eqc$. 
Substituting these to the Lagrange multipliers $\bar{\lmIn}, \bar{\lmEq}$ in \eqref{eq:hat_Q_for_gradient_aug}, one can obtain \eqref{eq:hat_Q_for_gradient}.

After obtaining the matrices $\bar{\hat{\qfunc}}_{\ctrl\parastat}$ and $\bar{\hat{\qfunc}}_{\ctrl\ctrl}$, we are now at the stage of figuring out how to compute $\{\frac{\dstat_{k}}{\dpara}\}_{k=1}^{N}$ in order to compute the rest unknown $\{\frac{\dctrl_{k}}{\dpara}\}_{k=1}^{N-1}$ by virtue of \eqref{eq:dctrldpara}. Note that $\trajset$ is the optimal solution which satisfies the dynamic equation in \eqref{eq:prob_ineq_eq} and hence the augmented trajectory $\bar{\trajset}$ satisfies the dynamic equation in \eqref{eq:prob_aug}. Taking the variation of the latter, one has
\begin{equation}
\begin{aligned}
    \dparastat^{+} := \begin{bmatrix}
        \dpara^{+} \\ \dstat^{+}
    \end{bmatrix} 
    & =     \bar{\dyn}_{\parastat}\dparastat + \bar{\dyn}_{\ctrl}\dctrl \\
    & =
    \begin{bmatrix}
        \matbf{I} & \matbf{0} \\
        \dyn_{\para} & \dyn_{\stat}
    \end{bmatrix} \begin{bmatrix}
        \dpara \\ \dstat
    \end{bmatrix} + \begin{bmatrix}
        \matbf{0} \\ \dyn_{\ctrl}
    \end{bmatrix} \dctrl,
\end{aligned}
\end{equation}
by which we can establish the relationship among $\frac{\dstat^{+}}{\dpara}$, $\frac{\dpara^{+}}{\dpara}$, $\frac{\dstat}{\dpara}$ and $\frac{\dctrl}{\dpara}$.
Therefore, repeatedly evaluating $\frac{\dstat}{\dpara}$ and $\frac{\dctrl}{\dpara}$ according to \eqref{eq:dctrldpara} and \eqref{eq:dstatdpara} for $k=0, \dots, N-1$, one can obtain $\dv{\trajset}{\para}$.
\end{proof}
Theorem \ref{thm:ddp} implies that the gradient of trajectory w.r.t. parameter can be computed by performing a single backward-forward recursion on the augmented system with the augmented optimal trajectory being the initial solution. 
Intuitively, during each iteration in the backward recursion, the solver finds the affine relationship between the variation of input $\dctrl$ and that of augmented state $\dparastat$, which also leads to the affine relationship between the gradients [see \eqref{eq:dctrldpara}]. 
Next, during each iteration in the forward recursion, an affine relationship between the gradients [see \eqref{eq:dstatdpara}] can also be established by utilizing an affine relationship among the variation of augment state at next time $\dparastat^{+}$, that of input $\dctrl$ and that of augmented state $\dparastat$.
Consequently, this DDP-based gradient solver enjoys the linear computational complexity $\myset{O}(N)$. 

\begin{rem}
\label{rem:equiv_ddp_pdp}
As mentioned in Section \ref{sec:intro}, another framework called PDP (and its variant SafePDP) has been proposed in \cite{jin2020pontryagin,jin2021safe} as a gradient solver, where the PMP conditions are differentiated to obtain the implicit relationships between the learning parameter and the optimal trajectory.
Due to the close relationship between dynamic programming and PMP on optimal control problems, it is natural to ask if DDP-based and PDP-based gradient solvers, which are their respective differentiated versions, will inherit this relationship. 
We provide an affirmative answer to this, i.e., the computation of the gradient term from DDP method as in Theorem \ref{thm:ddp} is equivalent to \cite[Theorem 2(c)]{jin2021safe}. 
This can be shown by viewing the Hamiltonian function $L$ and the dual variable $\gbf{\lambda}$ for the dynamics constraint defined in \cite{jin2021safe} as the cost-to-go $\qfunc$ defined in \eqref{eq:minmax_bellman_ineq_eq} and $\vfunc_{\stat}$ defined in \eqref{eq:qfunc_to_vfunc}, respectively. 
In addition, this equivalence will also be validated by numerical simulations in Section \ref{subsec:simu_pdp_ddp_comp}. 
Compared to PDP-based method, our derivation of the one-step DDP on the augmented system is more compact and easier to be interpreted, i.e., the affine relationship among the gradients follows from that among the variations. 
In terms of computation, it has been shown in \cite{jin2021safe} that the PDP-based gradient solver is of computational complexity $\myset{O}(N)$. 
It has been widely perceived that the DDP method is much less efficient than iterative linear quadratic regulator (iLQR) due to the introduction of 3-dimensional tensor $\dyn_{\vecbf{a}\vecbf{b}}$, $\vecbf{a}, \vecbf{b} \in \{\stat, \ctrl\}$, however, the tensor evaluation can be avoided if we view $\vecbf{c} \odot \dyn_{\vecbf{a}\vecbf{b}}$ as a matrix-valued function with $(\vecbf{c}, \vecbf{a}, \vecbf{b})$ as its arguments, which only has the same cost of evaluating other matrix-valued functions (e.g., $\pathcost_{\vecbf{a}\vecbf{b}}$). 
Consequently, as will be shown by numerical simulations in Section \ref{subsec:simu_pdp_ddp_comp}, DDP-based gradient solver consumes less computational time for systems with high dimensions, which benefits from the compact form of our derivation. 
Most importantly, as in the optimal control problem where DDP can provide a closed-loop feedback policy for subsequent control and hence provide a more robust performance than PMP, the proposed DDP-based gradient solver also provides some intermediate matrices as byproducts, which can be further used to construct a new closed-loop loss function. 
As will be seen from Section \ref{sec:cl_irl}, this new loss function leads to a better performance compared to the case using the open-loop loss function.
\end{rem}

On the other hand, if the active-set method was used as the trajectory solver (either the off-the-shelf commercial solver or the active-set DDP-based approach mentioned in Remark \ref{rem:acset}), it is equivalent to solving the equality-constrained ($\eqc\acset = \vecbf{0}$) optimal control problem.
In order to compute the gradient, we consider the following augmented system:
\begin{equation}
\label{eq:prob_aug_acset}
\begin{aligned}
\min_{\myset{U}} &~~ \sum_{k=0}^{N-1} \bar{\pathcost}(\parastat_{k}, \ctrl_{k}) + \bar{\termcost}(\parastat_{N}) \\
\mathrm{s.t.} &~~ \parastat_{k+1} = \bar{\dyn}(\parastat_{k}, \ctrl_{k}), \parastat_{0}~\mathrm{is~given}, \\
&~~ \bar{\eqc}\acset = \vecbf{0},
\end{aligned}
\end{equation}
where $\bar{f}$, $f \in \{\pathcost, \termcost, \dyn\}$ shares the same definitions of those in \eqref{eq:prob_aug} while the active equality constraint is defined as $\bar{\eqc}\acset(\parastat_{k}, \ctrl_{k}) := \eqc\acset(\stat, \ctrl; \para)$.

Define $\bar{\qfunc}\acset := \bar{\pathcost} + (\bar{\vfunc}\acset)^{+}$, by which one can obtain its partial derivatives $\bar{\qfunc}\acset_{\ctrl}, \bar{\qfunc}\acset_{\ctrl\parastat}, \bar{\qfunc}\acset_{\ctrl\ctrl}$ by definition:
\begin{equation}
\label{eq:hat_Q_for_gradient_acset}
    \begin{aligned}
        \bar{\qfunc}\acset_{\ctrl} & = \bar{\pathcost}_{\parastat} + \bar{\dyn}_{\parastat}^{\top}(\bar{\vfunc}_{\parastat}\acset)^{+} , \\
        \bar{\qfunc}\acset_{\ctrl\parastat} & = \bar{\pathcost}_{\ctrl\parastat} + \bar{\dyn}_{\ctrl}^{\top}(\bar{\vfunc}_{\parastat\parastat}\acset)^{+}\dyn_{\parastat} + (\bar{\vfunc}_{\parastat}\acset)^{+} \odot \bar{\dyn}_{\ctrl\parastat}, \\
        \bar{\qfunc}\acset_{\ctrl\ctrl} & = \bar{\pathcost}_{\ctrl\ctrl} +  \bar{\dyn}_{\ctrl}^{\top}(\bar{\vfunc}_{\parastat\parastat}\acset)^{+}\dyn_{\ctrl} + (\bar{\vfunc}_{\parastat}\acset)^{+} \odot \dyn_{\ctrl\ctrl}.
    \end{aligned}
\end{equation}
Additionally, define 
\begin{equation}
    \matbf{A} :=  [\bar{\eqc}\acset_{\ctrl}(\bar{\qfunc}\acset_{\ctrl\ctrl})^{-1} (\bar{\eqc}\acset_{\ctrl})^{\top} ]^{-1}\bar{\eqc}\acset_{\ctrl}(\bar{\qfunc}\acset_{\ctrl\ctrl})^{-1}.
\end{equation}
The following result establishes the relationship between the one-time backward-forward recursion on the augmented system and the gradient of optimal trajectory w.r.t. learning parameter. 

\begin{thm}
\label{thm:ddp_acset}
    Suppose $\trajset$ is the optimal solution to the optimal control problem \eqref{eq:prob_ineq_eq}, $\bar{\qfunc}\acset_{\ctrl\ctrl}$ is invertiable and $\bar{\eqc}\acset_{\ctrl}$ is full row-rank for $k=0, \dots, N-1$. 
    The derivative of solved trajectory w.r.t. the learning parameter $\dv{\trajset}{\para}$ can be obtained by iteratively updating 
    \begin{equation}
\label{eq:dctrldpara_acset}
    \frac{\dctrl}{\dpara} = \begin{bmatrix}(\bar{\qfunc}\acset_{\ctrl\ctrl})^{-1}[\matbf{I} - (\bar{\eqc}\acset_{\ctrl})^{\top} \matbf{A}] &  \matbf{A}^{\top}
    \end{bmatrix}
    \begin{bmatrix}
     \bar{\qfunc}\acset_{\ctrl\parastat} \\ \bar{\eqc}\acset_{\parastat}
    \end{bmatrix} 
    \begin{bmatrix}
    \matbf{I} \\ \frac{\dstat}{\dpara}
    \end{bmatrix}
\end{equation}
    and \eqref{eq:dstatdpara}
 for $k=0, \dots, N-1$, with $\frac{\dstat_{0}}{\dpara} = \vecbf{0}$, where $\qfunc\acset_{(\cdot)}$ is defined in \eqref{eq:hat_Q_for_gradient_acset}.
\end{thm}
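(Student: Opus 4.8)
The plan is to mirror the proof of Theorem \ref{thm:ddp}, replacing the interior-point backward recursion with the active-set KKT recursion of Remark \ref{rem:acset}. First I would run the active-set DDP-based trajectory solver on the augmented system \eqref{eq:prob_aug_acset}, initialized at the optimal augmented trajectory, with the redefined cost-to-go $\bar{\qfunc}\acset := \bar{\pathcost} + (\bar{\vfunc}\acset)^{+}$ whose partial derivatives $\bar{\qfunc}\acset_{\ctrl}, \bar{\qfunc}\acset_{\ctrl\parastat}, \bar{\qfunc}\acset_{\ctrl\ctrl}$ are exactly those listed in \eqref{eq:hat_Q_for_gradient_acset}. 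At each time instant $k$ the backward recursion then coincides with the saddle-point system of Remark \ref{rem:acset}, but with the augmented state $\parastat$ in place of $\stat$:
\begin{equation*}
\begin{bmatrix}
\bar{\qfunc}\acset_{\ctrl\ctrl} & (\bar{\eqc}\acset_{\ctrl})^{\top} \\
\bar{\eqc}\acset_{\ctrl} & \matbf{0}
\end{bmatrix}
\begin{bmatrix} \dctrl \\ \lmEq\acset \end{bmatrix}
= -\begin{bmatrix} \bar{\qfunc}\acset_{\ctrl\parastat} \\ \bar{\eqc}\acset_{\parastat} \end{bmatrix}\dparastat
- \begin{bmatrix} \bar{\qfunc}\acset_{\ctrl} \\ \vecbf{0} \end{bmatrix}.
\end{equation*}

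The central step is to solve this indefinite system for the affine feedback law $\dctrl = \gaink\acset + \gainK\acset\dparastat$ by block (Schur-complement) elimination of the multiplier $\lmEq\acset$. Using the invertibility of $\bar{\qfunc}\acset_{\ctrl\ctrl}$ and the full-row-rank hypothesis on $\bar{\eqc}\acset_{\ctrl}$, the Schur complement $\matbf{S} := \bar{\eqc}\acset_{\ctrl}(\bar{\qfunc}\acset_{\ctrl\ctrl})^{-1}(\bar{\eqc}\acset_{\ctrl})^{\top}$ is invertible, which is precisely what makes $\matbf{A}$ well defined. Eliminating $\lmEq\acset$ and collecting the coefficient of $\dparastat$ yields the feedback gain $\gainK\acset$ in the bracketed matrix form of \eqref{eq:dctrldpara_acset}, after recognizing the identities $\matbf{A} = \matbf{S}^{-1}\bar{\eqc}\acset_{\ctrl}(\bar{\qfunc}\acset_{\ctrl\ctrl})^{-1}$ and $\matbf{A}^{\top} = (\bar{\qfunc}\acset_{\ctrl\ctrl})^{-1}(\bar{\eqc}\acset_{\ctrl})^{\top}\matbf{S}^{-1}$ (by symmetry of $\bar{\qfunc}\acset_{\ctrl\ctrl}$ and $\matbf{S}$).

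Next I would argue that the constant feedback term $\gaink\acset$ vanishes at the optimal trajectory, which plays the role that ``$\mathrm{merit}=\vecbf{0}$'' plays in the proof of Theorem \ref{thm:ddp}. At the optimum the stationarity condition reads $\bar{\qfunc}\acset_{\ctrl} + (\bar{\eqc}\acset_{\ctrl})^{\top}\lmEq\acset = \vecbf{0}$, so $\bar{\qfunc}\acset_{\ctrl} \in \Range((\bar{\eqc}\acset_{\ctrl})^{\top})$; substituting this constant forcing term into the eliminated system and using $[\matbf{I} - (\bar{\eqc}\acset_{\ctrl})^{\top}\matbf{A}](\bar{\eqc}\acset_{\ctrl})^{\top} = \matbf{0}$ shows $\gaink\acset = \vecbf{0}$, so the nominal trajectory is a fixed point of the recursion and only the feedback gain survives. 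The chain rule then gives $\frac{\dctrl}{\dpara} = \gainK\acset\,\frac{\dparastat}{\dpara}$ with $\frac{\dparastat}{\dpara} = \col(\{\matbf{I}, \frac{\dstat}{\dpara}\})$, which is exactly \eqref{eq:dctrldpara_acset}.

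Finally, the forward recursion is inherited verbatim from Theorem \ref{thm:ddp}: since the augmented dynamics $\parastat^{+} = \bar{\dyn}(\parastat, \ctrl)$ in \eqref{eq:prob_aug_acset} coincide with those in \eqref{eq:prob_aug}, taking their variation reproduces \eqref{eq:dstatdpara}, linking $\frac{\dstat^{+}}{\dpara}$ to $\frac{\dstat}{\dpara}$ and $\frac{\dctrl}{\dpara}$. Starting from $\frac{\dstat_{0}}{\dpara} = \vecbf{0}$ and alternating the backward gain \eqref{eq:dctrldpara_acset} with the forward propagation \eqref{eq:dstatdpara} for $k=0,\dots,N-1$ assembles the full $\dv{\trajset}{\para}$. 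I expect the main obstacle to be the clean elimination of the indefinite saddle-point system and the matching of its solution to the exact form involving $\matbf{A}$ — specifically, deriving the invertibility of $\matbf{S}$ from the rank hypothesis and verifying that $\gaink\acset$ drops out, which is the active-set analogue of the merit-function argument in Theorem \ref{thm:ddp}.
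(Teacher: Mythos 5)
Your proposal takes essentially the same route as the paper's proof: apply the active-set DDP to the augmented system \eqref{eq:prob_aug_acset} (no set-identification step needed), write the stage-wise KKT saddle-point system, eliminate the multiplier $\lmEq\acset$ by Schur complement to get the affine law $\dctrl = \bar{\gaink}\acset + \bar{\gainK}\acset\dparastat$, and inherit the forward recursion \eqref{eq:dstatdpara} from Theorem \ref{thm:ddp}. The two steps you spell out explicitly --- well-definedness of $\matbf{A}$ via invertibility of the Schur complement $\matbf{S}$, and the vanishing of the feedforward term $\bar{\gaink}\acset$ at the optimum through $[\matbf{I}-(\bar{\eqc}\acset_{\ctrl})^{\top}\matbf{A}](\bar{\eqc}\acset_{\ctrl})^{\top}=\matbf{0}$ --- are left implicit in the paper's ``following a similar process as that of Theorem \ref{thm:ddp}'', so your write-up is, if anything, slightly more complete than the published argument.
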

\begin{proof}
The result can be established by following a similar process as that of Theorem \ref{thm:ddp}. 
Suppose we use the active-set method to find the optimal solution of the augmented system \eqref{eq:prob_aug_acset}, note that here due to the absence of inequality constraint, the set identification step can be omitted. 
Then from the KKT condition 
\begin{equation}
\begin{bmatrix}
    \bar{\qfunc}\acset_{\ctrl\ctrl}  & (\bar{\eqc}\acset_{\ctrl})^{\top} \\
    \bar{\eqc}\acset_{\ctrl} & \matbf{0}
\end{bmatrix}
    \begin{bmatrix}
     \dctrl \\ \lmEq\acset
    \end{bmatrix} = - \begin{bmatrix}
     \bar{\qfunc}\acset_{\ctrl\parastat} \\ \bar{\eqc}\acset_{\parastat}
    \end{bmatrix} \dparastat - \begin{bmatrix}
    \bar{\qfunc}\acset_{\ctrl} \\ \vecbf{0}
    \end{bmatrix},
\end{equation}
one can obtain the backward recursion
\begin{equation*}
\begin{aligned}
    \dctrl & = \begin{bmatrix}(\bar{\qfunc}\acset_{\ctrl\ctrl})^{-1}[\matbf{I} - (\bar{\eqc}\acset_{\ctrl})^{\top} \matbf{A}] &  \matbf{A}^{\top}
    \end{bmatrix}
    (\begin{bmatrix}
     \bar{\qfunc}\acset_{\ctrl\parastat} \\ \bar{\eqc}\acset_{\parastat}
    \end{bmatrix} 
    \dparastat + \begin{bmatrix}
    \bar{\qfunc}\acset_{\ctrl} \\ \vecbf{0}
    \end{bmatrix}), \\
    & =: \bar{\gaink}\acset + \bar{\gainK}\acset \dparastat.  
\end{aligned}
\end{equation*}
where $\bar{\gaink}\acset$ and $\bar{\gainK}\acset$ are used to update $(\bar{\vfunc}_{\parastat\parastat}\acset)^{+}$ and $(\bar{\vfunc}_{\parastat}\acset)^{+}$ similar to what was done in \eqref{eq:qfunc_to_vfunc}. The forward recursion can be obtained exactly in the same way as in the proof of Theorem \ref{thm:ddp}.
\end{proof}

\begin{rem}
    Similar to Remark \ref{rem:equiv_ddp_pdp}, one can also show that the computation of the gradient term for the optimal control problem from active-set DDP-based method is equivalent to \cite[Theorem 1]{jin2021safe}.
\end{rem}

Note that Theorems \ref{thm:ddp} and \ref{thm:ddp_acset} consider the most general multi-stage constrained optimal control problem and they can be reduced to the unconstrained case by ignoring all the terms related to the constraints (i.e., $\inc, \eqc, \lmIn, \lmEq$ for Theorem \ref{thm:ddp}, and $\eqc\acset, \lmEq\acset$ for Theorem \ref{thm:ddp_acset}), which is detailed as follows.
\begin{cor}
    Suppose $\trajset$ is the optimal solution to the unconstrained optimal control problem \eqref{eq:prob_ineq_eq} with $\inc, \eqc := \vecbf{0}$, and $\bar{\qfunc}_{\ctrl\ctrl}$ is invertiable for $k=0, \dots, N-1$. 
    The derivative of solved trajectory w.r.t. the learning parameter $\dv{\trajset}{\para}$ can be obtained by iteratively updating  \eqref{eq:dctrldpara} and \eqref{eq:dstatdpara}
 for $k=0, \dots, N-1$, with $\frac{\dstat_{0}}{\dpara} = \vecbf{0}$, where $\hat{\qfunc}_{(\cdot)}$ is defined in \eqref{eq:hat_Q_for_gradient} with $\bar{\inc}, \bar{\eqc} := \vecbf{0}$.
\end{cor}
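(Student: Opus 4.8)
The plan is to derive this as a direct specialization of Theorem~\ref{thm:ddp}, since the unconstrained problem is simply \eqref{eq:prob_ineq_eq} with the inequality and equality rows removed. First I would note that when $\inc$ and $\eqc$ are absent there are no dual variables $\lmIn, \lmEq$ to introduce, so the min-max Bellman equation \eqref{eq:minmax_bellman_ineq_eq} collapses to the plain minimization $\vfunc = \min_{\ctrl}\, \pathcost + \vfunc^{+}$, and on the augmented system \eqref{eq:prob_aug} the cost-to-go reduces to $\bar{\qfunc} = \bar{\pathcost} + \bar{\vfunc}^{+}$. Consequently every constraint-correction summand in \eqref{eq:hat_Q} (those carrying $\qfunc_{\ctrl\lmIn}$ or $\qfunc_{\ctrl\lmEq}$) drops out, so that $\hat{\qfunc}_{(\cdot)} = \qfunc_{(\cdot)}$ for $(\cdot) \in \{\ctrl, \ctrl\stat, \ctrl\ctrl\}$, and there is no need for the merit-function reduction $\lmIn = -\pert [\Diag(\inc)]^{-1}\vecbf{1}$, $\lmEq = \pert^{-1}\eqc$ used inside the proof of Theorem~\ref{thm:ddp}.

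Next I would specialize the augmented quantities \eqref{eq:hat_Q_for_gradient} directly. Deleting the inequality terms (those involving $[\Diag(\bar{\inc})]^{-1}$) and the equality terms (those involving $\bar{\eqc}$), which is precisely what setting $\bar{\inc}, \bar{\eqc} := \vecbf{0}$ denotes, leaves
\begin{equation*}
\bar{\hat{\qfunc}}_{\ctrl} = \bar{\pathcost}_{\ctrl} + \bar{\dyn}_{\ctrl}^{\top}\bar{\vfunc}_{\parastat}^{+}, \quad \bar{\hat{\qfunc}}_{\ctrl\parastat} = \bar{\pathcost}_{\ctrl\parastat} + \bar{\dyn}_{\ctrl}^{\top}\bar{\vfunc}_{\parastat\parastat}^{+}\bar{\dyn}_{\parastat} + \bar{\vfunc}_{\parastat}^{+} \odot \bar{\dyn}_{\ctrl\parastat},
\end{equation*}
together with the analogous $\bar{\hat{\qfunc}}_{\ctrl\ctrl}$; these are exactly the standard unconstrained DDP matrices, and in particular $\bar{\hat{\qfunc}}_{\ctrl\ctrl}$ coincides with the $\bar{\qfunc}_{\ctrl\ctrl}$ whose invertibility is assumed. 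Taking the variation of the equilibrium condition $\bar{\hat{\qfunc}}_{\ctrl} = \vecbf{0}$ exactly as in the derivation preceding Theorem~\ref{thm:ddp} then reproduces \eqref{eq:dctrldpara}, while taking the variation of the augmented dynamics $\parastat^{+} = \bar{\dyn}(\parastat, \ctrl)$ reproduces \eqref{eq:dstatdpara} verbatim, since that forward step never involved the constraints.

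The only point needing care — and the nearest thing to an obstacle — is reading $\inc, \eqc := \vecbf{0}$ as the genuine removal of the constraint rows rather than a literal substitution of the zero function, which would render $[\Diag(\bar{\inc})]^{-1}$ singular; because the dual variables are gone, this is purely a notational matter and both recursions remain well defined. With the backward and forward steps established, the remainder of the argument — coupling $\{\frac{\dstat_{k}}{\dpara}\}$ and $\{\frac{\dctrl_{k}}{\dpara}\}$ across stages from the initialization $\frac{\dstat_{0}}{\dpara} = \vecbf{0}$, and collecting them into a single $\myset{O}(N)$ backward-forward sweep — is identical to that of Theorem~\ref{thm:ddp}; the active-set version, Theorem~\ref{thm:ddp_acset}, specializes in the same manner once $\bar{\eqc}\acset$ is likewise removed.
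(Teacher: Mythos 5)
Your proposal is correct and follows exactly the route the paper intends: the paper presents this corollary as a direct reduction of Theorem \ref{thm:ddp} obtained ``by ignoring all the terms related to the constraints,'' which is precisely your specialization of \eqref{eq:hat_Q_for_gradient} and the unchanged variation arguments for \eqref{eq:dctrldpara} and \eqref{eq:dstatdpara}. Your added remark that $\inc, \eqc := \vecbf{0}$ must be read as removal of the constraint rows (so that $[\Diag(\bar{\inc})]^{-1}$ never appears) rather than literal substitution is a sensible clarification of the same argument, not a departure from it.
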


\begin{rem}
    The above result implies that the gradient is computed by performing a single backward-forward traditional DDP recursion on an augmented unconstrained system, i.e., \eqref{eq:prob_aug} with constraints being removed.  
    Similar to Remark \ref{rem:equiv_ddp_pdp}, one can also show that this recursion is equivalent to \cite[Lemma 5.2]{jin2020pontryagin} by viewing the Hamiltonian function $H$ and the dual variable $\gbf{\lambda}$ for the dynamics constraint defined in \cite{jin2021safe} as the cost-to-go $\qfunc := \pathcost + \vfunc^{+}$ and $\vfunc_{\stat}$, respectively.  
\end{rem}

\begin{rem}
\label{rem:barrierDDP}  
    Note that DDP-based gradient solver for constrained problems can be reduced to the one for unconstrained problems, then again the solver for constrained problems involves more terms (i.e., $\bar{\inc}, \bar{\eqc}$-related terms in \eqref{eq:hat_Q_for_gradient}) to deal with these constraints.
    To compute these terms, more symbolic evaluations are performed, which result in longer computational time than that for unconstrained problems. 
    However, it can be easily shown that \eqref{eq:hat_Q_for_gradient} is indeed the intermediate matrices for the unconstrained system with modified stage cost (i.e., $\bar{\pathcost}(\parastat_{k}, \ctrl_{k}) - \pert \vecbf{1}^{\top}\log(-\bar{\inc}) + 1/(2\pert)\|\bar{\eqc}\|^{2}$).
    Therefore, one can also solve the gradient for constrained problems by resorting to the solver for unconstrained problems with a modified objective function. 
    This is consistent with the idea of barrier method in optimization literature and we call this BarrierDDP-based gradient solver.
    In practice, as will be shown in numerical simulations in Section \ref{subsec:simu_pdp_ddp_comp}, the modification in the stage cost does not introduce much overhead for the symbolic evaluation of stage cost while saving significant overhead for that of constraints-related terms. 
\end{rem}

\begin{algorithm}
	\caption{DDP-based gradient solver}
	\label{alg:ipddp_grad_solver}
	\begin{algorithmic}[1]
		\Require system \eqref{eq:prob_ineq_eq}, optimal trajectory $\trajset$, parameter $\theta$, perturbation $\pert$ 
		\Ensure $\dv{\trajset}{\para}$	
        \State construct the augmented system \eqref{eq:prob_aug} \label{alg:irl.ddpgrad1}
        \State set $\bar{\vfunc}_{\parastat, N} = \bar{\termcost}_{\parastat}$, $\bar{\vfunc}_{\parastat\parastat, N} = \bar{\termcost}_{\parastat\parastat}$
        \For{$k = N-1, \dots, 0$}   \label{alg:irl.ddpgradbwd1}
        \State evaluate $\bar{\hat{\qfunc}}_{(\cdot)}$ using \eqref{eq:hat_Q_for_gradient}
        \State compute control gains in \eqref{eq:dctrldpara}
        \State update $\bar{\vfunc}_{\parastat}$, $\bar{\vfunc}_{\parastat\parastat}$ similarly as in \eqref{eq:qfunc_to_vfunc}
        \EndFor    \label{alg:irl.ddpgradbwd2}
        \State set $\frac{\dstat_{0}}{\dpara} = \vecbf{0}$ 
        \For{$k = 0, \dots, N-1$}   \label{alg:irl.ddpgradfwd1}
            \State update $\frac{\dctrl}{\dpara}$  according to \eqref{eq:dctrldpara} 
            \State update $\frac{\dstat^{+}}{\dpara}$ according to \eqref{eq:dstatdpara}
        \EndFor \label{alg:irl.ddpgradfwd2}
        \State collect $\dv{\trajset}{\para}$ \label{alg:irl.ddpgrad2}
	\end{algorithmic}
\end{algorithm}

We summarize our proposed DDP-based gradient solver in Algorithm \ref{alg:ipddp_grad_solver}, where the backward and forward recursions are detailed in lines \ref{alg:irl.ddpgradbwd1} to \ref{alg:irl.ddpgradbwd2}, lines \ref{alg:irl.ddpgradfwd1} to \ref{alg:irl.ddpgradfwd2}, respectively. 

\subsection{Open-loop IRL algorithm}

Equipped with the introduced trajectory solvers and gradient solvers, it is now ready to summarize the entire IRL algorithm with the open-loop loss, as seen in Algorithm \ref{alg:ol_irl}.
Note that Algorithm \ref{alg:ol_irl} only shows the case where interior-point DDP-based gradient solver is adopted, for the case of active-set DDP-based gradient solver, one can replace the involved (optimal) cost-to-go accordingly. 
Furthermore, if the trajectory was solved by any (interior-point, active-set, or traditional) DDP-based trajectory solver, the (optimal) cost-to-go computed in the last iteration of the trajectory solver can be saved and then can be reused in the backward recursion of the gradient solver.

\begin{algorithm}
	\caption{Open-loop IRL Algorithm}
	\label{alg:ol_irl}
	\begin{algorithmic}[1]
		\Require demonstrative trajectories $\myset{D}$, system \eqref{eq:prob_ineq_eq}, loss function $L^{\mathrm{ol}}$, initial parameter $\theta_{0}$, maximum iteration $t_{\max}$
		\Ensure $\theta$	
		\For{$t = 0, \dots, t_{\max}$}	
        \State call external solver, Algorithm \ref{alg:ipddp_traj_solver}, or active-set DDP-based trajectory solver to solve \eqref{eq:prob_ineq_eq} with $\para = \para_{t}$ (perhaps save some intermediate matrices related to the cost-to-go) 
        \State collect $\trajset$
        \State evaluate $\pdv{L^{\mathrm{ol}}}{\trajset}$ and $\pdv{L^{\mathrm{ol}}}{\para}$
        \State call Algorithm \ref{alg:ipddp_grad_solver} to obtain $\dv{\trajset}{\para}$
        \State update $\para_{t}$ according to \eqref{eq:para_upd}
        \State $t \gets t+1$
        \EndFor
	\end{algorithmic}
\end{algorithm}

\begin{rem}
    As shown in Algorithm \ref{alg:ol_irl}, the open-loop IRL algorithm is essentially a first-order gradient-descent algorithm to solve a generic bi-level optimization problem. 
    By following \cite[Theorem 2.1]{ghadimi2018approximation}, one can establish the global convergence result and iteration complexity of the full problem with assumptions on strong convexity and smoothness conditions for the functions in the lower-level optimization problem. 
    However, these conditions are too restrictive for commonly considered unconstrained infinite-horizon linear-quadratic regulator problems, let alone the multi-stage optimal control problem.
    Nevertheless, in practice, if one assumes that for each $\para \in \Theta$, the solution to the lower-level optimization problem always exists and is unique, along with smoothness conditions, the result of local convergence to a stationary point can be easily obtained.
\end{rem}

\section{Closed-loop IRL}
\label{sec:cl_irl}

\begin{figure}[!t]
	\centering
		\includegraphics[scale=0.5]{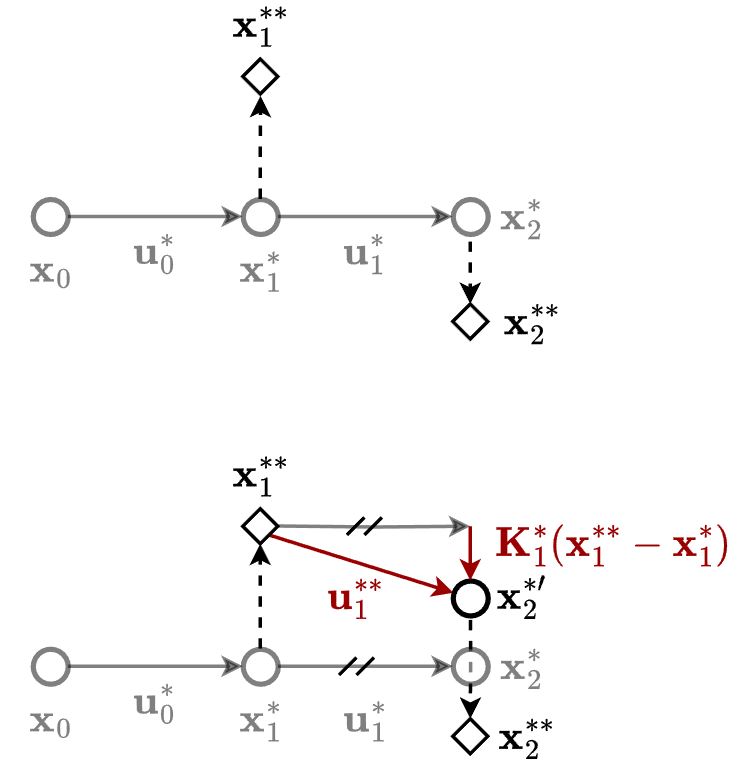}
	\caption{Illustration of collection of open-loop and closed-loop trajectories. 
                    The gray part denotes the nominal optimal trajectory under ideal environments. 
                    For the collection of the open-loop trajectory (top), it is implicitly assumed that the noise process (denoted by the dashed arrow) only affects the measurement afterward. 
                    On the contrary, for the collection of closed-loop trajectory, the next control input will take into consideration this noise and make a correction (denoted by the red arrow).}
	\label{fig:ol_cl}
\end{figure}

\begin{table*}[ht]\centering
\caption{Entire trajectories for the simple example \eqref{eq:prob_example}.}
\resizebox{\linewidth}{!}{\begin{tabular}{||c|| c c c c c||} 
 \hline
 & state at $k=0$ & control at $k=0$ & state at $k=1$ & control at $k=1$ & state at $k=2$ \\ [0.5ex] 
 \hline
   reproduced traj. $\trajset$  & $\stat_{0}$ & $-\frac{2\paraex+1}{2\paraex+3}\stat_{0}$ & $\frac{2}{2\paraex+3}\stat_{0}$ & $-\frac{1}{2\paraex+3}\stat_{0}$ & $\frac{1}{2\paraex+3}\stat_{0}$ \\ 
 \hline
  noise-free demo. $\trajset^{\ast}$  & $\stat_{0}$ & $-\frac{2\paraex^{\ast}+1}{2\paraex^{\ast}+3}\stat_{0}$ & $\frac{2}{2\paraex^{\ast}+3}\stat_{0}$ & $-\frac{1}{2\paraex^{\ast}+3}\stat_{0}$ & $\frac{1}{2\paraex^{\ast}+3}\stat_{0}$ \\ 
 \hline
  noisy demo.  $\trajset^{\ast\ast}$ & $\stat_{0}$ & $-\frac{2\paraex^{\ast}+1}{2\paraex^{\ast}+3}\stat_{0}$ & $\frac{2}{2\paraex^{\ast}+3}\stat_{0} +\noise_{1}$ & $-\frac{1}{2\paraex^{\ast}+3}\stat_{0}-\frac{1}{2}\noise_{1}$ & $\frac{1}{2\paraex^{\ast}+3}\stat_{0}+\frac{1}{2}\noise_{1}+\noise_{2}$ \\ 
 \hline
\end{tabular}}
\label{tab:example}
\end{table*}

In the previous section, we tackle the IRL problem with the open-loop loss $L^{\mathrm{ol}}$, and we can expect that $\para_{t} \to \para^{\ast}$ as $t \to \infty$ if $\para_{0}$ is at the vicinity of $\para^{\ast}$ for noise-free demonstrations. 
It is also expected that this type of least-square formulation can tolerate some noise in the collected demonstration signal. 
However, this formulation implicitly assumes that the noise only appears after the optimal trajectory is solved and executed precisely, or mathematically speaking, it adds some perturbations on the optimal demonstrations $\trajset_{\sampleset_{i}}(\para^{\ast})$ afterward, see Fig. \ref{fig:ol_cl}(a).
However, this is often not the case in the real data collection process, where the action is performed in a feedback manner to counter the uncertainty.

Let us first take a detour to consider the following simple example of an optimal control problem
\begin{equation}
\label{eq:prob_example}
\begin{aligned}
\min_{\myset{U}} &~~ \sum_{k = 0,1} 
\frac{1}{2}(\paraex \stat_{k}^{\top}\stat_{k} + \ctrl_{k}^{\top}\ctrl_{k}) + \frac{1}{2}\stat_{2}^{\top}\stat_{2} \\
\mathrm{s.t.} &~~ \stat_{k+1} = \stat_{k} + \ctrl_{k}, \stat_{0}~\mathrm{is~given},
\end{aligned}
\end{equation}
where $\paraex$ is the parameter to be learned. Solving the above problem, one can find that the optimal feedback policy is given by $\ctrl_{0} = -\frac{2\paraex+1}{2\paraex+3}\stat_{0}, \ctrl_{1} = -\frac{1}{2}\stat_{1}$. 
Therefore, given an estimated parameter $\paraex$ (resp. the true parameter $\paraex^{\ast}$), the reproduced trajectory $\trajset$ (resp. noise-free demonstration $\trajset^{\ast}$) can be explicitly expressed as the second (resp. third) row in Table \ref{tab:example}. 
However, if there is some process and/or measurement noise $\noise_{k}$ (see the fourth and sixth column of the last row in Table \ref{tab:example}), the control input at $k=1$ will change correspondingly (see the fifth column of the last row in Table \ref{tab:example}) and the noisy demonstration $\trajset^{\ast\ast}$ can be obtained. In this case, the open-loop loss $L^{\mathrm{ol}}$ defined in \eqref{eq:loss_ol} can be written as 
\begin{equation*}
\begin{aligned}
   L^{\mathrm{ol}} & = \underbrace{\vecbf{0} + \|\tilde{\paraex}\stat_{0} + \noise_{1}\|^{2}  + \|\frac{1}{2}(\tilde{\paraex}\stat_{0} + \noise_{1}\|^{2}}_{\sum_{k=0,1,2} \|\stat_{k} - \stat_{k}^{\ast\ast}\|_{2}^{2} } \\
   & \quad \quad \quad \quad  + \underbrace{\|\tilde{\paraex}\stat_{0}\|^{2} + \|\frac{1}{2}(\tilde{\paraex}\stat_{0} + \noise_{1}) + \noise_{2}\|^{2}}_{\sum_{k=0,1} \|\ctrl_{k} - \ctrl_{k}^{\ast\ast}\|_{2}^{2} },
\end{aligned}
\end{equation*}
where $\tilde{\paraex} := \frac{2\paraex+1}{2\paraex+3} - \frac{2\paraex^{\ast}+1}{2\paraex^{\ast}+3}$.
By some mathematical operations, one can find that the optimal solution for $L^{\mathrm{ol}}$ is given by $-\frac{3\noise_{1} + \noise_{2}}{5\stat_{0}}$, which means that $\tilde{\paraex} \to 0$ only if the noise is of zero-mean and state-independent and one has collected a sufficiently large amount of data. In other words, nonzero-mean or state-dependent noise, or limited size of data will lead to a biased estimation of $\paraex^{\ast}$.
Furthermore, for either a longer horizon $N>2$ or more general linear system dynamics, $L^{\mathrm{ol}}$ involves higher-order terms of $\paraex$ and one cannot solve the stationary point analytically from $\dv{L^{\mathrm{ol}}}{\paraex} = 0$. However, one has that $\dv{L^{\mathrm{ol}}}{\paraex}|_{\paraex = \paraex^{\ast}}$ is again a linear combination of noise $\{\noise_{k}\}_{k=1}^{N}$, which implies that zero-mean and state-independent noise and a sufficiently large amount of data are necessary conditions for unbiased estimation of $\paraex^{\ast}$.

For a general nonlinear optimal control problem, in addition to the numerically computed nominal optimal open-loop input, an additional feedback term should be implemented to correct the deviation $(\stat_{1}^{\ast\ast} - \stat_{1}^{\ast})$, where $\stat_{k}^{\ast\ast}$ is the observed current state and is not necessarily equal to the ideal current state $\stat_{k}^{\ast}$ due to process noise in $\dyn$, see Fig. \ref{fig:ol_cl}(b) for illustration. 
In the following, we assume that the demonstrations are collected from a closed-loop controller solved by a DDP-based trajectory solver, i.e., instead of having $\ctrlset$ as the output, it additionally records the feedback gain $\gainK_{k}^{\ast}$. 
During the physical roll-out, the control input is recomputed as\footnote{Specifically, for the infinite-horizon LQR problem, this means the optimal gain is used for generating the demonstrations in a feedback manner.} 
\begin{equation}
\label{eq:cl_ctrl}
    \ctrl_{k}^{\ast\ast} = \ctrl_{k}^{\ast} + \gainK_{k}^{\ast} (\stat_{k}^{\ast\ast} - \stat_{k}^{\ast}).
\end{equation}
Denote the collected demonstration as $\trajset_{\sampleset}^{\ast\ast}$.
In this case, if we use the open-loop loss $L^{\mathrm{ol}}$ for this type of noisy demonstration, $\para_{t}$ does not converge to $\para^{\ast}$ as $t \to \infty$ since $\dv{L^{\mathrm{ol}}}{\para}|_{\para = \para^{\ast}}$ is a nonlinear function of noise $\{\noise_{k}\}_{k=1}^{N}$ and is not equal to $\vecbf{0}$ almost surely (This will also be validated by numerical simulations in Section \ref{subsec:simu_adv}).
To tackle this, we propose a new IRL problem:
\begin{equation}
\label{eq:bilevel_prob_cl}
\begin{aligned}
\min_{\para \in \Theta} &~~ L^{\mathrm{cl}} := \sum_{k \in \sampleset} \| \underbrace{\hat{\qfunc}_{\ctrl} + \hat{\qfunc}_{\ctrl\stat}(\stat^{\ast\ast} - \stat) + \hat{\qfunc}_{\ctrl\ctrl}(\ctrl^{\ast\ast} - \ctrl)}_{=:\rescl}\|^{2}  \\
\mathrm{s.t.} &~~ \hat{\qfunc}_{(\cdot)}, \trajset~\mathrm{with}~\ctrlset~\mathrm{being~solved~from}~\eqref{eq:prob_ineq_eq}.
\end{aligned}
\end{equation}
We refer to $L^{\mathrm{cl}}$ as the closed-loop loss since it is motivated by the closed-loop controller \eqref{eq:cl_ctrl} and captures the feedback nature.
In particular, firstly, notice that $\hat{\qfunc}_{\ctrl} \equiv \vecbf{0}$, since $\trajset$ is the optimal trajectory of system \eqref{eq:prob_ineq_eq}\footnote{It is still kept in \eqref{eq:bilevel_prob_cl} for the subsequent content of specialization.}. 
Secondly, $\rescl$ recovers \eqref{eq:cl_ctrl} if $\hat{\qfunc}_{\ctrl\ctrl}^{-1}$ is multiplied in each term and all the quantities related to $\trajset$ are replaced by the optimal trajectory $\trajset(\para^{\ast})$\footnote{An alternate form of the residual $\rescl' := (\ctrl_{k}^{\ast\ast} - \ctrl_{k}) + \gainK_{k} (\stat_{k}^{\ast\ast} - \stat_{k})$ may be more obvious to understand the design, while it breaks the tie with the subsequent content of specialization and we do not present here. Nevertheless, it is still applicable for closed-loop IRL of general nonlinear problems and shares nearly the same subsequent algorithmic computation of gradients.}.
By the second point, it can be seen that $\para^{\ast}$ is a global minimum for \eqref{eq:bilevel_prob_cl}. 
Note that currently $\rescl$ only relates the residuals of the current input to the current state, while it is still possible to consider the opposite direction, i.e., including the dynamics residual $\| \stat^{\ast\ast +} - \dyn(\stat^{\ast\ast}, \ctrl^{\ast\ast}; \para)\|_{2}^{2}$, which relates the next state to current input. 
However, due to its least-square form, this residual only works well for additive process noise but not for other types of noise. 
Nevertheless, the addition only brings a marginal overhead in terms of computation (as its required gradient term has already been computed by Algorithm \ref{alg:ipddp_traj_solver}). Alternatively, one can use this residual to initialize the parameter to be estimated.
On the other hand, if the collected demonstrations are generated in a closed-loop manner other than \eqref{eq:cl_ctrl}, e.g., model predictive control, the proposed loss can be interpreted as finding an affine time-varying feedback controller which matches the closed-loop demonstrations. 

To solve the new IRL problem \eqref{eq:bilevel_prob_cl}, one can resort to the gradient descent method similar to \eqref{eq:para_upd}: 
{\small \begin{equation*}
\begin{aligned}
        \dv{L^{\mathrm{cl}}}{\para} & = \sum_{k \in \sampleset} (\dv{\rescl}{\para})^{\top} \rescl \\
        & = \sum_{k \in \sampleset}  (\bar{\hat{\qfunc}}_{\ctrl\stat} \frac{\dstat_{k}}{\dpara}   + [(\stat^{\ast\ast} - \stat_{k})^{\top} \otimes \matbf{I}_{m}]\dvexd{\bar{\hat{\qfunc}}_{\ctrl\stat}}{\para} \\
        & \quad \quad + \bar{\hat{\qfunc}}_{\ctrl\ctrl} \frac{\dctrl_{k}}{\dpara} + [(\ctrl^{\ast\ast} - \ctrl_{k})^{\top} \otimes \matbf{I}_{m}]\dvexd{\bar{\hat{\qfunc}}_{\ctrl\ctrl}}{\para} )^{\top}\rescl \\
        & = \sum_{k \in \sampleset}  (\bar{\hat{\qfunc}}_{\ctrl\para} + [(\stat^{\ast\ast} - \stat_{k})^{\top} \otimes \matbf{I}_{m}]\dvexd{\bar{\hat{\qfunc}}_{\ctrl\stat}}{\para} \\
        & \quad \quad  \quad \quad \quad + [(\ctrl^{\ast\ast} - \ctrl_{k})^{\top} \otimes \matbf{I}_{m}]\dvexd{\bar{\hat{\qfunc}}_{\ctrl\ctrl}}{\para} )^{\top}\rescl. 
\end{aligned}
\end{equation*}}
In the above, the first equality is from the fact that both the trajectory $\trajset$ itself and the intermediate matrices $\bar{\hat{\qfunc}}_{(\cdot)}$ (which are evaluated at the current trajectory $\trajset$) are functions of learning parameter $\para$.
The second equality results from \eqref{eq:dctrldpara} and implies that it is not necessary to compute $\frac{\dctrl_{k}}{\dpara}$ and $\frac{\dstat_{k}}{\dpara}$ explicitly since the required term $\bar{\hat{\qfunc}}_{\ctrl\para}$ has been computed as part of $\bar{\hat{\qfunc}}_{\ctrl\parastat}$ in \eqref{eq:hat_Q_for_gradient}. 
Nonetheless, one can find that this term is tightly related to the first-order derivative of the trajectory w.r.t. parameter. 
However, notice that the above gradient also involves $\dvexd{\bar{\hat{\qfunc}}_{(\cdot)}}{\para}$, which is the gradient of the intermediate matrices w.r.t. the learning parameter and has not been obtained in Section \ref{subsec:backward}. 
Intuitively speaking, this relates to the second-order derivatives of the trajectory w.r.t. parameter.
This is as expected since in the open-loop loss formulation, one tries to find a parameter to match the solved trajectory with the demonstrations, while in the closed-loop one, one aims to find a parameter to match their variations in the differential sense. 

In order to compute $\dvexd{\bar{\hat{\qfunc}}_{(\cdot)}}{\para}$, we differentiate $\hat{\qfunc}_{\ctrl\ctrl}$ in \eqref{eq:hat_Q_for_gradient}\footnote{Note that the subsequent derivation is based on the interior-point DDP-based gradient solver, while a similar derivation can be easily performed on the active-set DDP-based gradient solver.} w.r.t. $\para$, i.e.,
\begin{equation}
\label{eq:hat_Quu_grad}
    \begin{aligned}
        \dvexd{\bar{\hat{\qfunc}}_{\ctrl\ctrl}}{\para} &= \dvexd{ \{\bar{\pathcost}_{\ctrl\ctrl}  + \bar{\dyn}_{\ctrl}^{\top}\bar{\vfunc}_{\parastat\parastat}^{+}\bar{\dyn}_{\ctrl} + \bar{\vfunc}_{\parastat}^{+} \odot \bar{\dyn}_{\ctrl\ctrl} \}}{\para} \\
        &\quad + \dvexd{\{\pert \bar{\inc}_{\ctrl}^{\top} [\Diag(\bar{\inc})]^{-2} \bar{\inc}_{\ctrl} - \pert ([\Diag(\bar{\inc})]^{-1} \vecbf{1}) \odot \bar{\inc}_{\ctrl\ctrl} \}}{\para} \\
        &\quad + \dvexd{ \{\pert^{-1} \bar{\eqc}_{\ctrl}^{\top}\bar{\eqc}_{\ctrl} + \pert^{-1}\bar{\eqc} \odot \bar{\eqc}_{\ctrl\ctrl} \}}{\para},\\ 
    \end{aligned}
\end{equation}
where the second and third rows denote the terms related to inequality and equality constraints. 
For the sake of clarity, we only show the derivation of the first row. The first term reads 
\begin{equation*}
\begin{aligned}
        \dvexd{\bar{\pathcost}_{\ctrl\ctrl}}{\para} & = \pvexd{\bar{\pathcost}_{\ctrl\ctrl}}{\para} + \pvexd{\bar{\pathcost}_{\ctrl\ctrl}}{\stat} \frac{\dstat}{\dpara} + \pvexd{\bar{\pathcost}_{\ctrl\ctrl}}{\ctrl} \frac{\dctrl}{\dpara},
\end{aligned}
\end{equation*}
which comes from the fact that $\bar{\pathcost}_{\ctrl\ctrl}$ is the function of $(\para, \stat, \ctrl)$. 
The second term can be obtained by using the matrix calculus \cite{magnus2019matrix}:
\begin{equation*}
\begin{aligned}
        \dvexd{\{\bar{\dyn}_{\ctrl}^{\top}\bar{\vfunc}_{\parastat\parastat}^{+}\bar{\dyn}_{\ctrl} \}}{\para} & = (\cmMat^{m,m} + \matbf{I}_{m^2})(\matbf{I}_{m} \otimes \bar{\dyn}_{\ctrl}^{\top}\bar{\vfunc}_{\parastat\parastat}^{+}) \dvexd{\bar{\dyn}_{\ctrl}}{\para} \\
        & \quad + (\bar{\dyn}_{\ctrl}^{\top} \otimes \bar{\dyn}_{\ctrl}^{\top}) \dvexd{\bar{\vfunc}_{\parastat\parastat}^{+}}{\para},
\end{aligned}
\end{equation*}
where the term $\dvexd{(\cdot)}{\para}$ involved can be obtained similarly as in the above equation.
For the third term, by the definition of tensor contraction, one has
\begin{equation*}
    \begin{aligned}
        \dvexd{\{ \bar{\vfunc}_{\parastat}^{+} \odot \bar{\dyn}_{\ctrl\ctrl} \}}{\para} & = \sum_{i=1,\dots,n}\dvexd{\{ [\bar{\vfunc}_{\parastat}^{+}]_{i} [\bar{\dyn}_{\ctrl\ctrl}]_{i} \}}{\para} \\
        & = \sum_{i=1,\dots,n}\dvexd{ [\bar{\vfunc}_{\parastat}^{+}]_{i} }{\para} [\bar{\dyn}_{\ctrl\ctrl}]_{i} + [\bar{\vfunc}_{\parastat}^{+}]_{i} \dvexd{ [\bar{\dyn}_{\ctrl\ctrl}]_{i}}{\para}.
    \end{aligned}
\end{equation*}
The second and third rows of \eqref{eq:hat_Quu_grad} and $\dvexd{\bar{\hat{\qfunc}}_{\ctrl\stat}}{\para}$ can be obtained similarly by following the above derivations.

Note that in order to accelerate the learning process, we use the Levenberg–Marquardt algorithm, i.e., updating the parameter using the following rule:
\begin{equation}
\label{eq:para_upd_lm_opt}
    [\recMat^{\top} \recMat +  \eta' \matbf{I} ] \dpara = \recMat^{\top} \rescl^{\sampleset}
\end{equation}
where $\eta'$ is a (non-negative) damping factor adjusted at each iteration, $\recMat := \vex(\{ \dv{\rescl}{\para} \}_{k \in \sampleset})$ and $\rescl^{\sampleset} := \vex( \{ \rescl \}_{k \in \sampleset} )$ are the concatenated gradient and residual terms for the closed-loop loss. 

We summarize the closed-loop IRL algorithm in Algorithm \ref{alg:cl_irl}. In line \ref{alg:irl.ddpgradcall}, Algorithm \ref{alg:ipddp_grad_solver} is called to obtain the intermediate matrices as well as the first-order gradient for both computing the loss and preparing for calculating $\dvexd{\bar{\hat{\qfunc}}_{(\cdot)}}{\para}$. Lines \ref{alg:irl.ddpgrad2bwd1} to \ref{alg:irl.ddpgrad2bwd2} detail the backward iteration for computing $\dvexd{\bar{\hat{\qfunc}}_{(\cdot)}}{\para}$.  
\begin{algorithm}
	\caption{Closed-loop IRL Algorithm}
	\label{alg:cl_irl}
	\begin{algorithmic}[1]
		\Require demonstrative trajectories $\myset{D}$, system \eqref{eq:prob_ineq_eq}, loss function $L^{\mathrm{cl}}$, initial parameter $\theta_{0}$, maximum iteration $t_{\max}$
		\Ensure $\theta$	
		\For{$t = 0, \dots, t_{\max}$}	
        \State call external solver, Algorithm \ref{alg:ipddp_traj_solver}, or active-set DDP-based trajectory solver to solve \eqref{eq:prob_ineq_eq} with $\para = \para_{t}$ (perhaps save some intermediate matrices related to the cost-to-go) 
        \State collect $\trajset$
        \State call Algorithm \ref{alg:ipddp_grad_solver} to obtain $\dv{\trajset}{\para}$ and save $\bar{\hat{\qfunc}}_{(\cdot)}$ \label{alg:irl.ddpgradcall}
        \State evaluate $\rescl$
        \State set $\dvexd{\bar{\vfunc}_{\parastat, N}}{\para} = \dvexd{\bar{\termcost}_{\parastat}}{\para}$, $\dvexd{\bar{\vfunc}_{\parastat\parastat, N}}{\para} = \dvexd{\bar{\termcost}_{\parastat\parastat}}{\para}$
        \For{$k = N-1, \dots, 0$}   \label{alg:irl.ddpgrad2bwd1} 
        \State evaluate $\dvexd{\bar{\hat{\qfunc}}_{(\cdot)}}{\para}$ using \eqref{eq:hat_Quu_grad}
        \State update $\dvexd{\bar{\vfunc}_{\parastat}}{\para}$, $\dvexd{\bar{\vfunc}_{\parastat\parastat}}{\para}$ similarly as in \eqref{eq:qfunc_to_vfunc}
        \EndFor    \label{alg:irl.ddpgrad2bwd2}
        \State collect $\dvexd{\bar{\hat{\qfunc}}_{(\cdot)}}{\para}$ to compute $\recMat$
        \State update $\para_{t}$ according to \eqref{eq:para_upd_lm_opt}
        \State $t \gets t+1$
        \EndFor
	\end{algorithmic}
\end{algorithm}

The above content only details the computational aspects of our proposed algorithm. Next, we aim to provide some theoretical characterization of the condition for recoverability, i.e., under which conditions the algorithm can find $\para^{\ast}$. 
\begin{thm}
\label{thm:rank_cond}
Suppose that the level set $\{\para \mid L^{\mathrm{cl}}(\para) \leq L^{\mathrm{cl}}(\para^{\ast})\}$ is bounded and that the residual function $\rescl$ is Lipschitz
continuously differentiable in a neighborhood of $L^{\mathrm{cl}}$. Assume that for each $t$, the approximate
solution $\dpara$ of \eqref{eq:para_upd_lm_opt} satisfies the inequality 
$$L^{\mathrm{cl}}(\para_{t}) - L^{\mathrm{cl}}(\para_{t}+\dpara) \geq c_{1} \|\recMat^{\top} \rescl^{\sampleset}\| \min(\Delta_{t}, \frac{\|\recMat^{\top} \rescl^{\sampleset}\|}{\|\recMat^{\top} \recMat\|}),$$ for some positive constant $c_{1}$, and in addition $\|\dpara\| \leq c_{2} \Delta_{t}$ for some constant $c_{2} \geq 1$, where $ \Delta_{t}$ is the trust-region radius in its counterpart trust-region method such that $\eta'(\dpara - \Delta_{t}) = 0$, then Algorithm \ref{alg:cl_irl} converges to the stationary point, i.e., $\lim_{t \to \infty} \dv{L^{\mathrm{cl}}}{\para} = \lim_{t \to \infty} \recMat^{\top} \rescl^{\sampleset} = \vecbf{0}$. Furthermore,  the learning parameter $\para^{\ast}$ can be fully recovered only if $\Rank(\recMat) = \size{\para}$. 
\end{thm}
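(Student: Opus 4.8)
The statement bundles two results—global convergence of Algorithm~\ref{alg:cl_irl} to a stationary point, and necessity of the rank condition for exact recovery—and I would establish them in turn.

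For the convergence claim, the plan is to read the update rule \eqref{eq:para_upd_lm_opt} as a Levenberg--Marquardt iteration applied to the nonlinear least-squares objective $L^{\mathrm{cl}}(\para)=\|\rescl^{\sampleset}(\para)\|^{2}$, and then to recognize the Levenberg--Marquardt step as the solution of a trust-region subproblem: the damping factor $\eta'$ acts as the multiplier for the constraint $\|\dpara\|\le\Delta_{t}$, and the stated complementarity $\eta'(\dpara-\Delta_{t})=0$ makes this correspondence exact. The three standing hypotheses are precisely the assumptions of the classical global convergence theorem for trust-region methods: boundedness of the level set $\{\para\mid L^{\mathrm{cl}}(\para)\le L^{\mathrm{cl}}(\para^{\ast})\}$ keeps the iterates in a compact set, Lipschitz-continuous differentiability of $\rescl$ supplies the Lipschitz gradient $\recMat^{\top}\rescl^{\sampleset}$ on that set, and the inequality on $L^{\mathrm{cl}}(\para_{t})-L^{\mathrm{cl}}(\para_{t}+\dpara)$ together with $\|\dpara\|\le c_{2}\Delta_{t}$ is the fraction-of-Cauchy-decrease requirement. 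I would therefore invoke that theorem (e.g.\ \cite{nocedal1999numerical}) directly to conclude $\lim_{t\to\infty}\recMat^{\top}\rescl^{\sampleset}=\vecbf{0}$; the only work is to verify that the algorithmic quantities match the hypotheses, which they are constructed to do.

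For the necessity of the rank condition I would argue at $\para^{\ast}$. First recall from the discussion preceding the theorem that $\rescl=\vecbf{0}$ at $\para=\para^{\ast}$ for every sampled instant, so $\rescl^{\sampleset}(\para^{\ast})=\vecbf{0}$ and $L^{\mathrm{cl}}(\para^{\ast})=0$ is the global minimum. Differentiating the gradient $\dv{L^{\mathrm{cl}}}{\para}=\recMat^{\top}\rescl^{\sampleset}$ once more and evaluating at $\para^{\ast}$, the terms weighted by the residual vanish because $\rescl^{\sampleset}(\para^{\ast})=\vecbf{0}$, so the Hessian of $L^{\mathrm{cl}}$ at $\para^{\ast}$ equals the Gauss--Newton matrix $\recMat^{\top}\recMat$ with $\recMat$ evaluated at $\para^{\ast}$. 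I would then proceed by contraposition: if $\Rank(\recMat)<\size{\para}$, then $\recMat^{\top}\recMat$ is singular and there is a nonzero $\vecbf{v}\in\Null(\recMat)$; the first-order expansion $\rescl^{\sampleset}(\para^{\ast}+s\vecbf{v})=\rescl^{\sampleset}(\para^{\ast})+s\recMat\vecbf{v}+o(s)=o(s)$ shows that the residual—and hence $L^{\mathrm{cl}}$—is flat along $\vecbf{v}$ to first order, i.e.\ $\vecbf{v}$ is a zero-curvature direction of $L^{\mathrm{cl}}$ at $\para^{\ast}$. Consequently $\para^{\ast}$ is not a strict local minimizer and the loss cannot distinguish it from parameters perturbed along $\vecbf{v}$, so $\para^{\ast}$ cannot be uniquely recovered; contrapositively, full recovery forces $\Rank(\recMat)=\size{\para}$. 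I would also remark that this is the expected persistent-excitation/identifiability condition and that it specializes to the classical unconstrained IOC rank condition.

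The convergence half is essentially a citation once the Levenberg--Marquardt/trust-region dictionary is in place, so the substantive part is the recoverability half, and that is where I expect the main obstacle. The delicate point is to make ``fully recovered'' precise and to bridge what is a global uniqueness statement with a local, second-order condition: rank deficiency only furnishes a first-order flat direction, so to conclude genuine non-recoverability one must either argue that this direction persists (so that $\para^{\ast}$ is non-isolated among the zero-loss parameters) or, as the ``only if'' phrasing permits, rely solely on the Hessian-degeneracy argument above. I would also take care that $\recMat$ is evaluated consistently at $\para^{\ast}$ (equivalently, at the converged iterate) and flag explicitly that the rank condition is necessary but not, in general, sufficient.
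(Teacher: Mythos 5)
Your proposal is correct, and its convergence half is exactly the paper's argument: the paper dispatches that part in one line by citing the trust-region global convergence theorem \cite[Theorem 10.3]{nocedal1999numerical}, whose hypotheses are precisely the ones listed in the theorem statement, just as you observe. For the rank-necessity half, however, you take a genuinely different route. The paper argues at the level of the algorithm: if $\Rank(\recMat)<\size{\para}$ and $\vecbf{c}\in\Null(\recMat)$, then the parameter shifted by $\vecbf{c}$ still satisfies the update equation \eqref{eq:para_upd_lm_opt}, so the Levenberg--Marquardt step, and hence the parameter the algorithm settles on, is determined only up to null-space directions of $\recMat$; strictly speaking this observation requires $\eta'=0$ (trust region inactive), since for $\eta'>0$ the damped system is uniquely solvable. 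You instead argue intrinsically about the loss landscape at $\para^{\ast}$: the residual vanishes there, the Hessian of $L^{\mathrm{cl}}$ reduces to the Gauss--Newton matrix $\recMat^{\top}\recMat$, and a null vector of $\recMat$ yields a zero-curvature direction along which the loss cannot separate $\para^{\ast}$ from its perturbations. Your version is algorithm-independent and pinpoints the identifiability failure at the true parameter, which is arguably the more fundamental statement; its cost is the caveat you yourself flag, namely that a degenerate Hessian does not by itself exclude a strict higher-order minimum, so non-recoverability is established only in the second-order sense. The paper's one-line argument has the mirror-image looseness (it conflates the update equation with a stationarity condition and tacitly assumes $\eta'=0$), so the two proofs are of comparable rigor, and both ultimately rest on the same core fact: directions in $\Null(\recMat)$ are invisible to the least-squares machinery.
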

\begin{proof}
    The first statement follows from \cite[Theorem 10.3]{nocedal1999numerical}. 
    The second statement follows from the fact that $\para_{t}+\vecbf{c}$ with $\vecbf{c}$ being a null vector of  $\recMat$ still satisfies \eqref{eq:para_upd_lm_opt} if $\Rank(\recMat) < \size{\para}$. 
\end{proof}

If LQR problem is considered, following the definition of the residual term in \eqref{eq:bilevel_prob_cl}, 
    \begin{equation}
    \begin{aligned}
        \rescl_{\mathrm{lqr}} & =  \hat{\qfunc}_{\ctrl} + \hat{\qfunc}_{\ctrl\stat}(\stat^{\ast\ast} - \stat) + \hat{\qfunc}_{\ctrl\ctrl}(\ctrl^{\ast\ast} - \ctrl) \\
        & = \hat{\qfunc}_{\ctrl\stat}\stat^{\ast\ast} + \hat{\qfunc}_{\ctrl\ctrl}\ctrl^{\ast\ast},
    \end{aligned}
    \end{equation}
where the second equality follows from the optimality condition of LQR. 
Defining $$\recMat_{\mathrm{lqr}} := [(\stat^{\ast\ast})^{\top} \otimes \matbf{I}_{m}]\dvexd{\hat{\qfunc}_{\ctrl\stat}}{\para}  + [(\ctrl^{\ast\ast})^{\top} \otimes \matbf{I}_{m}]\dvexd{\hat{\qfunc}_{\ctrl\ctrl}}{\para},$$ one can have the following result.

\begin{cor}
    The learning parameter $\para^{\ast}$ for LQR can be fully recovered only if $\Rank(\recMat_{\mathrm{lqr}}) = \size{\para}$.
\end{cor}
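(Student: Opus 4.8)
The plan is to recognize this corollary as the direct linear--quadratic specialization of Theorem~\ref{thm:rank_cond}: once I verify that the stacked Jacobian $\recMat$ appearing there reduces to $\recMat_{\mathrm{lqr}}$ in the LQR setting, the necessary rank condition transfers verbatim. Hence the entire task collapses to establishing the single identity $\dv{\rescl_{\mathrm{lqr}}}{\para} = \recMat_{\mathrm{lqr}}$ and then re-running the null-space argument from the proof of Theorem~\ref{thm:rank_cond}.

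First I would take the already-simplified residual $\rescl_{\mathrm{lqr}} = \hat{\qfunc}_{\ctrl\stat}\stat^{\ast\ast} + \hat{\qfunc}_{\ctrl\ctrl}\ctrl^{\ast\ast}$ displayed just above the statement, where the $(\stat,\ctrl)$-terms of the generic residual in \eqref{eq:bilevel_prob_cl} drop out through the LQR optimality condition $\hat{\qfunc}_{\ctrl} = \hat{\qfunc}_{\ctrl\stat}\stat + \hat{\qfunc}_{\ctrl\ctrl}\ctrl = \vecbf{0}$. The crucial structural fact I would then invoke is that for an LQR problem the optimal cost-to-go is \emph{exactly} quadratic, so the Hessian blocks $\hat{\qfunc}_{\ctrl\stat}$ and $\hat{\qfunc}_{\ctrl\ctrl}$ are constant matrices along the reproduced trajectory and are functions of $\para$ alone. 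Consequently their total $\para$-derivative coincides with $\dvexd{\hat{\qfunc}_{\ctrl\stat}}{\para}$ and $\dvexd{\hat{\qfunc}_{\ctrl\ctrl}}{\para}$, with no implicit contribution routed through $\frac{\dstat}{\dpara}$ or $\frac{\dctrl}{\dpara}$. Differentiating $\rescl_{\mathrm{lqr}}$ with $\stat^{\ast\ast},\ctrl^{\ast\ast}$ held fixed as data, and applying the vectorization identity $\matbf{A}\vecbf{x} = (\vecbf{x}^{\top}\otimes\matbf{I}_{m})\vex(\matbf{A})$, then gives $\dv{\rescl_{\mathrm{lqr}}}{\para} = [(\stat^{\ast\ast})^{\top}\otimes\matbf{I}_{m}]\dvexd{\hat{\qfunc}_{\ctrl\stat}}{\para} + [(\ctrl^{\ast\ast})^{\top}\otimes\matbf{I}_{m}]\dvexd{\hat{\qfunc}_{\ctrl\ctrl}}{\para} = \recMat_{\mathrm{lqr}}$, i.e.\ precisely the row of $\recMat$ attached to the sample at that instant.

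As a consistency check I would also recover this from the generic Jacobian used to derive \eqref{eq:para_upd_lm_opt}, which carries the extra term $\hat{\qfunc}_{\ctrl\para}$ together with the factors $(\stat^{\ast\ast}-\stat)$ and $(\ctrl^{\ast\ast}-\ctrl)$. Writing $\hat{\qfunc}_{\ctrl\para} = [\stat^{\top}\otimes\matbf{I}_{m}]\dvexd{\hat{\qfunc}_{\ctrl\stat}}{\para} + [\ctrl^{\top}\otimes\matbf{I}_{m}]\dvexd{\hat{\qfunc}_{\ctrl\ctrl}}{\para}$, which is legitimate precisely because $\hat{\qfunc}_{\ctrl}$ is linear in $(\stat,\ctrl)$ with no constant term in the LQR case, the $\stat$ and $\ctrl$ contributions cancel against the $-\stat,-\ctrl$ pieces and only the $\stat^{\ast\ast},\ctrl^{\ast\ast}$ terms survive. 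I expect this cancellation to be the main obstacle: it is exactly where the exact-quadratic structure of the value function (constant Hessian blocks, linear $\hat{\qfunc}_{\ctrl}$) is used, and it is what would fail for a general nonlinear problem where $\dvexd{\hat{\qfunc}_{\ctrl\stat}}{\para}$ would additionally pick up chain-rule terms through the trajectory; so this step must be argued on the structure rather than by analogy.

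Finally, having identified $\recMat = \recMat_{\mathrm{lqr}}$, I would close by reusing the reasoning in the proof of Theorem~\ref{thm:rank_cond}. Since $\para^{\ast}$ is a global minimizer with $\rescl_{\mathrm{lqr}}(\para^{\ast}) = \vecbf{0}$, it satisfies the stationarity condition $\recMat_{\mathrm{lqr}}^{\top}\rescl_{\mathrm{lqr}} = \vecbf{0}$ of \eqref{eq:para_upd_lm_opt}; if $\Rank(\recMat_{\mathrm{lqr}}) < \size{\para}$ there exists a nonzero $\vecbf{c} \in \Null(\recMat_{\mathrm{lqr}})$ along which the update equation \eqref{eq:para_upd_lm_opt} is still solved, so $\para^{\ast}$ cannot be distinguished from $\para^{\ast}+\vecbf{c}$. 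Hence full recovery forces $\Rank(\recMat_{\mathrm{lqr}}) = \size{\para}$, which is the claimed necessary condition.
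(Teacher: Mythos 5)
Your proposal is correct and follows essentially the route the paper intends: the paper states this corollary without an explicit proof, treating it as the immediate LQR specialization of Theorem~\ref{thm:rank_cond}, and your argument fills in exactly the two implicit steps — identifying $\dv{\rescl_{\mathrm{lqr}}}{\para} = \recMat_{\mathrm{lqr}}$ (using that for LQR the blocks $\hat{\qfunc}_{\ctrl\stat}, \hat{\qfunc}_{\ctrl\ctrl}$ depend on $\para$ but not on the trajectory, so the cancellation against the $\hat{\qfunc}_{\ctrl\para}$ term in the generic Jacobian goes through) and then rerunning the null-space argument from the proof of Theorem~\ref{thm:rank_cond}. Your consistency check via $\hat{\qfunc}_{\ctrl\para} = [\stat^{\top}\otimes\matbf{I}_{m}]\dvexd{\hat{\qfunc}_{\ctrl\stat}}{\para} + [\ctrl^{\top}\otimes\matbf{I}_{m}]\dvexd{\hat{\qfunc}_{\ctrl\ctrl}}{\para}$ is a valid and welcome elaboration of what the paper leaves unstated.
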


\begin{rem}
    Note that a two-step strategy has been proposed in \cite{xue2021inverse}, where a gain matrix $\gainK^{\ast\ast}$ is firstly solved from a least square problem \cite[Eq. (15)]{xue2021inverse} and a bi-level problem with a cost function $\Tr{(\gainK - \gainK^{\ast\ast})^{\top}(\gainK - \gainK^{\ast\ast})}$ is then iteratively solved. 
    It should be noted that Algorithm \ref{alg:cl_irl} can also be applied to this scheme if we use 
    \begin{equation*}
        \begin{aligned}
            \dvexd{\gainK}{\para} & = - \dvexd{\hat{\qfunc}_{\ctrl\ctrl}^{-1} \hat{\qfunc}_{\ctrl\stat}}{\para} \\
            & = - (\matbf{I}_{n} \otimes \hat{\qfunc}_{\ctrl\ctrl}^{-1})[(\gainK^{\top} \otimes \matbf{I}_{m}) \dvexd{\hat{\qfunc}_{\ctrl\ctrl}}{\para} + \dvexd{\hat{\qfunc}_{\ctrl\stat}}{\para}],
        \end{aligned}
    \end{equation*}
    which requires the same gradient terms derived in \eqref{eq:hat_Quu_grad}. Moreover, our presented algorithm is applicable to IRL of general nonlinear systems subject to constraints.
\end{rem}

Note that due to the nature of non-linearity, the above rank condition depends on collected demonstrations, the solved trajectory, and the current parameter. 
In the following, we shall show that under specific assumptions, $\recMat$ is linear in $\para$ and each element only depends on collected demonstrations.
Before that, we present an assumption and some definitions which will be used.
\begin{ass}
\label{ass:ioc}
    \begin{enumerate}\setlength{\parindent}{0pt}
        \item the termination condition for solving \eqref{eq:prob_ineq_eq} is set as $\trajset = \trajset_{\sampleset}^{\ast\ast}$; \label{cor:recov.ass1}
        \item the demonstrations satisfy the interior-point min-max Bellman equation \eqref{eq:minmax_bellman_ineq_eq} with perturbation $\pert$; \label{cor:recov.ass2} 
        \item the stage cost is linearly parameterized by $\para$, i.e., $\pathcost = \para^{\top}\gbf{\phi}(\stat,\ctrl)$; \label{cor:recov.ass3} 
        \item the terminal cost $\termcost$, dynamics $\dyn$, and constraints $\inc, \eqc$ are independent of $\para$ and known. \label{cor:recov.ass4}  
    \end{enumerate}
\end{ass}

Define 
\begin{equation*}
    \begin{aligned}
        \vecbf{c}_{(\cdot)} &:= \pert \inc_{(\cdot)}^{\top} [\Diag(\inc)]^{-1}\vecbf{1} + \pert^{-1} \eqc_{(\cdot)}^{\top} \eqc, \\
        \vfunc_{\stat,1:m} &:= \col( \{ \vfunc_{\stat}\}_{k=1}^{m} ), \\
        \gbf{\phi}_{(\cdot),1:m}^{\top} &:= \col( \{ \gbf{\phi}_{(\cdot)}^{\top}\}_{k=1}^{m} ), \\
        \vecbf{c}_{(\cdot), 1:m} &:= \col( \{ \vecbf{c}_{(\cdot)}\}_{k=1}^{m} ) , (\cdot) \in \{\stat, \ctrl\},\\
        \matbf{B}_{0:m} & := \Diag( \{ \dyn_{\ctrl}^{\top} \}_{k=0}^{m} ), \\
        \matbf{E}_{m+1} & := [\matbf{0}, \dots, \matbf{I}]^{\top} \in \mathbb{R}^{(m+1)n_{\stat} \times n_{\stat}},
    \end{aligned}
\end{equation*}
    \begin{equation*}
           \matbf{A}_{1:m} := \begin{bmatrix}
    \matbf{I} & - \dyn_{\stat,1}^{\top} & & \\
     & \matbf{I} & \ddots & \\
    & & \ddots & - \dyn_{\stat,m}^{\top} \\
    \matbf{0} & &  & \matbf{I}
    \end{bmatrix}. 
    \end{equation*}
Furthermore, define 
\begin{equation}
\label{eq:recMat}
    \begin{aligned}
        \recMat_{\mathrm{lin},1} &:= \gbf{\phi}_{\ctrl,1:m}^{\top} - \matbf{B}_{0:m} \matbf{A}_{1:m}^{-1} \gbf{\phi}_{\stat,1:m}^{\top}, \\
        \recMat_{\mathrm{lin},2} &:= \matbf{B}_{0:m} \matbf{A}_{1:m}^{-1}\matbf{E}_{m+1}, \\
        \recMat_{\mathrm{lin},3} &:= - \matbf{B}_{0:m} \matbf{A}_{1:m}^{-1}\vecbf{c}_{\stat, 1:m} + \vecbf{c}_{\ctrl, 1:m}, \\
        \recMat_{\mathrm{lin},1:2} &:= [\recMat_{\mathrm{lin},1}, \recMat_{\mathrm{lin},2}]. 
    \end{aligned}
\end{equation}

\begin{cor}
\label{cor:recov}
Under Assumption \ref{ass:ioc} and $\recMat_{\mathrm{lin},3} \neq \vecbf{0}$, if $\lim_{\mu \to 0} \Rank(\recMat_{\mathrm{lin},1:2}) = \size{\para}+\size{\stat}$, then the learning parameter $\para^{\ast}$ can be recovered from the demonstration as
\begin{equation*}
    \begin{aligned}
        \para^{\ast} & = [\lim_{\mu \to 0} \arg\min_{[\para^{\top}, \vfunc_{\stat, m+2}^{\top}]^{\top}} \|\rescl^{\sampleset}\|^{2}]_{1:\size{\para}} \\
        & =  [\lim_{\mu \to 0} - (\recMat_{\mathrm{lin},1:2}^{\top}\recMat_{\mathrm{lin},1:2})^{-1}\recMat_{\mathrm{lin},1:2}^{\top}\recMat_{\mathrm{lin},3}]_{1:\size{\para}}.
    \end{aligned}
\end{equation*}
\end{cor}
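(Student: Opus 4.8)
The plan is to show that under Assumption \ref{ass:ioc} the residual $\rescl^{\sampleset}$ collapses to an affine function of the unknowns $\para$ and a single boundary co-state, so that imposing $\rescl^{\sampleset}=\vecbf{0}$ becomes a linear least-squares problem whose minimizer is exactly the displayed expression. First I would use Assumption \ref{ass:ioc}(\ref{cor:recov.ass1}): because the inner solver terminates at $\trajset=\trajset_{\sampleset}^{\ast\ast}$, we have $\stat^{\ast\ast}=\stat$ and $\ctrl^{\ast\ast}=\ctrl$, so the two correction terms in the definition of $\rescl$ in \eqref{eq:bilevel_prob_cl} vanish and $\rescl=\hat{\qfunc}_{\ctrl}$. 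Invoking Assumption \ref{ass:ioc}(\ref{cor:recov.ass2}), the perturbed complementarity relations give $\lmIn=-\pert[\Diag(\inc)]^{-1}\vecbf{1}$ and $\lmEq=\pert^{-1}\eqc$; substituting these into $\hat{\qfunc}_{\ctrl}$ in \eqref{eq:hat_Q} (exactly as in the proof of Theorem \ref{thm:ddp}) eliminates the dual variables and yields $\hat{\qfunc}_{\ctrl}=\pathcost_{\ctrl}+\dyn_{\ctrl}^{\top}\vfunc_{\stat}^{+}+\vecbf{c}_{\ctrl}$ with $\vecbf{c}_{\ctrl}$ as defined before the statement. The linear parameterization $\pathcost=\para^{\top}\gbf{\phi}$ of Assumption \ref{ass:ioc}(\ref{cor:recov.ass3}) turns $\pathcost_{\ctrl}$ into $\gbf{\phi}_{\ctrl}^{\top}\para$, so $\hat{\qfunc}_{\ctrl}$ is affine in $\para$ once the co-states $\vfunc_{\stat}^{+}$ are known.

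The second step is to make the co-states explicit. At the optimum $\hat{\qfunc}_{\ctrl}=\vecbf{0}$ forces $\gaink=\vecbf{0}$, so \eqref{eq:qfunc_to_vfunc} reduces to $\vfunc_{\stat}=\hat{\qfunc}_{\stat}$; repeating the substitution above with the $(\cdot)_{\stat}$ subscript and using Assumption \ref{ass:ioc}(\ref{cor:recov.ass4}) (so $\dyn,\inc,\eqc,\termcost$ carry no $\para$-dependence) gives the backward linear recursion $\vfunc_{\stat,k}=\gbf{\phi}_{\stat,k}^{\top}\para+\dyn_{\stat,k}^{\top}\vfunc_{\stat,k+1}+\vecbf{c}_{\stat,k}$. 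Stacking this over the sampled horizon writes it as $\matbf{A}_{1:m}\vfunc_{\stat,1:m}=\gbf{\phi}_{\stat,1:m}^{\top}\para+\vecbf{c}_{\stat,1:m}+\matbf{E}_{m+1}\vfunc_{\stat,m+2}$, where the block bidiagonal $\matbf{A}_{1:m}$ has identity diagonal blocks, hence is invertible, and $\matbf{E}_{m+1}$ injects the single boundary co-state $\vfunc_{\stat,m+2}$. Inverting and substituting the resulting $\vfunc_{\stat,1:m}$ into the stacked control residual $\rescl^{\sampleset}=\gbf{\phi}_{\ctrl,1:m}^{\top}\para+\matbf{B}_{0:m}\vfunc_{\stat,1:m}+\vecbf{c}_{\ctrl,1:m}$ collects the coefficient of $\para$ into $\recMat_{\mathrm{lin},1}$, the coefficient of $\vfunc_{\stat,m+2}$ into $\recMat_{\mathrm{lin},2}$, and the constant remainder into $\recMat_{\mathrm{lin},3}$, matching \eqref{eq:recMat} term by term; thus $\rescl^{\sampleset}=\recMat_{\mathrm{lin},1:2}[\para^{\top},\vfunc_{\stat,m+2}^{\top}]^{\top}+\recMat_{\mathrm{lin},3}$.

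To close the argument I would observe that, since the demonstration is generated by $\para^{\ast}$, the true pair $[\para^{\ast\top},\vfunc_{\stat,m+2}^{\ast\top}]^{\top}$ makes $\rescl^{\sampleset}=\vecbf{0}$ and is therefore a global minimizer of $\|\rescl^{\sampleset}\|^{2}$. The system $\rescl^{\sampleset}=\vecbf{0}$ is overdetermined (residual dimension exceeds $\size{\para}+\size{\stat}$), so one solves it in the least-squares sense: the normal equations yield the stated formula $-(\recMat_{\mathrm{lin},1:2}^{\top}\recMat_{\mathrm{lin},1:2})^{-1}\recMat_{\mathrm{lin},1:2}^{\top}\recMat_{\mathrm{lin},3}$, and $\lim_{\pert\to0}\Rank(\recMat_{\mathrm{lin},1:2})=\size{\para}+\size{\stat}$ makes $\recMat_{\mathrm{lin},1:2}^{\top}\recMat_{\mathrm{lin},1:2}$ nonsingular, so the minimizer is unique and its first $\size{\para}$ entries equal $\para^{\ast}$; the hypothesis $\recMat_{\mathrm{lin},3}\neq\vecbf{0}$ rules out the degenerate all-zero solution.

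The step I expect to be the main obstacle is making the $\pert\to0$ limit rigorous. Each entry of $\vecbf{c}_{(\cdot)}$ contains the factors $\pert[\Diag(\inc)]^{-1}$ and $\pert^{-1}\eqc$, which are individually singular as $\pert\to0$; one must use the complementarity identities $\pert[\Diag(\inc)]^{-1}\vecbf{1}=-\lmIn$ and $\pert^{-1}\eqc=\lmEq$ to show that each $\vecbf{c}_{(\cdot)}$ in fact converges to the finite true-multiplier expression, so that $\recMat_{\mathrm{lin},3}$ and the limiting least-squares system are well posed. The remaining effort is routine but fiddly bookkeeping: aligning the state residual indices ($k=1,\dots,m$) with the control residual indices ($k=0,\dots,m$) when assembling $\matbf{B}_{0:m}$, $\matbf{A}_{1:m}$, and $\matbf{E}_{m+1}$, and verifying that this alignment produces precisely the blocks in \eqref{eq:recMat}.
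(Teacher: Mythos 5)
Your proposal follows essentially the same route as the paper's proof: using Assumption \ref{ass:ioc} to evaluate $\rescl^{\sampleset}$ on the demonstration, reducing it to the affine form $\recMat_{\mathrm{lin},1}\para + \recMat_{\mathrm{lin},2}\vfunc_{\stat,m+2} + \recMat_{\mathrm{lin},3}$ by stacking the backward co-state recursion and eliminating the intermediate $\vfunc_{\stat,k}$ through $\matbf{A}_{1:m}^{-1}$, and then solving the resulting least-squares problem under the rank condition as $\pert \to 0$. The only differences are presentational: you justify $\vfunc_{\stat}=\hat{\qfunc}_{\stat}$ explicitly via $\gaink=\vecbf{0}$ in \eqref{eq:qfunc_to_vfunc} where the paper invokes \eqref{eq:minmax_bellman_ineq_eq} directly, and you flag the rigor of the $\pert\to 0$ limit, a point the paper also treats only informally.
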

\begin{proof}
    By Assumption \ref{ass:ioc}-1), it follows from \eqref{eq:minmax_bellman_ineq_eq} that 
    \begin{equation}
    \label{eq:vfunc_lin}
    \begin{aligned}
        \vfunc_{\stat}(\stat^{\ast\ast}) & = \hat{\qfunc}_{\stat}(\stat^{\ast\ast}, \ctrl^{\ast\ast}) \\
        & = \pathcost_{\stat} + \dyn_{\stat}^{\top}\vfunc_{\stat}^{+} + \vecbf{c}_{\stat}\\
        & = \gbf{\phi}_{\stat}^{\top} \para + \dyn_{\stat}^{\top}\vfunc_{\stat}^{+} + \vecbf{c}_{\stat},
    \end{aligned}
    \end{equation}
    where the last equality results from Assumption \ref{ass:ioc}-3).
    Furthermore, it can be easily found that $\vfunc_{\stat}$ is always linear in $\para$.

    Stacking \eqref{eq:vfunc_lin} for $k = 0, \dots, m$, one can obtain
    \begin{equation*}
        \vfunc_{\stat, 1:m}
        = \gbf{\phi}_{\stat,1:m}^{\top} \para + \Diag( \{ \dyn_{\stat}^{\top} \}_{k=1}^{m} ) \vfunc_{\stat,2:m+1} + \vecbf{c}_{\stat,1:m}.
    \end{equation*}
    By some mathematical manipulations, one has
    \begin{equation*}
        \matbf{A}_{1:m} \vfunc_{\stat,1:m+1} + \gbf{\phi}_{\stat,1:m}^{\top} \para - \matbf{E}_{m+1} \vfunc_{\stat, m+2} + \vecbf{c}_{\stat, 1:m} = \vecbf{0}.
    \end{equation*}
    It follows from Assumption \ref{ass:ioc}-1) that
    \begin{equation}
    \label{eq:qfunc_lin}
        \begin{aligned}
            \rescl^{\sampleset} 
            & = \col(\{ \hat{\qfunc}_{\ctrl}(\stat^{\ast\ast}, \ctrl^{\ast\ast} ) \}_{k=0}^{m} ) \\
            & = \col(\{\gbf{\phi}_{\ctrl}^{\top} \para + \dyn_{\ctrl}^{\top}\vfunc_{\stat}^{+} + \vecbf{c}_{\ctrl} ) \}_{k=0}^{m} ) \\
            & = \gbf{\phi}_{\ctrl,1:m}^{\top}  \para + \matbf{B}_{0:m} \vfunc_{\stat, 1:m+1} + \vecbf{c}_{\ctrl, 1:m} \\
            & =  \recMat_{\mathrm{lin},1}  \para +
            \recMat_{\mathrm{lin},2} \vfunc_{\stat, m+2} + \recMat_{\mathrm{lin},3}.
             & \quad 
        \end{aligned}
    \end{equation}
    If $\rescl^{\sampleset} = \vecbf{0}$ and $\Rank(\recMat_{\mathrm{lin},1:2}) = \size{\para}+\size{\stat}$, one has $[\para^{\top}, \vfunc_{\stat, m+2}^{\top}]^{\top} =  - (\recMat_{\mathrm{lin},1:2}^{\top}\recMat_{\mathrm{lin},1:2})^{-1}\recMat_{\mathrm{lin},1:2}^{\top}\recMat_{\mathrm{lin},3}$.
    As $\pert \to 0$, \eqref{eq:vfunc_lin} and \eqref{eq:qfunc_lin} reduce to the non-perturbed version of Bellman principle of optimality differentiated w.r.t. the state and control, respectively, which are the equilibrium conditions for the constrained IOC problem. 
\end{proof}

It has been shown that the above rank condition only depends on the collected demonstrations and this property resembles that in \cite{molloy2018finite,jin2021inverse}. 
However, due to the introduction of constraints, the rank condition is quite different. 
Moreover, unlike \cite{molloy2020online} where only control constraints can be considered, our method can deal with general nonlinear constraints. 

\section{Numerical Experiments}
\label{sec:simu}
In this section, we first present several examples to validate the equivalence between our proposed DDP-based methods and PDP-based methods.
Then, we apply our proposed closed-loop IRL algorithm on these examples to show its advantage over open-loop IRL. 
Also, we provide an example to demonstrate the proposed recoverability conditions on both the general IRL problem and the specialized constrained inverse optimal control problem.

\subsection{System settings}
For simulations, we consider the following four systems of different dimensions (complexities), which have been commonly used in the literature \cite{amos2018differentiable, pinosky2023hybrid, lutter2023combining,plancher2017constrained,xie2017differential,jin2020pontryagin}:

\noindent
\paragraph{Cartpole} the system dynamics is given by
\begin{equation*}
\label{eq:cartpole_dyn}
\begin{aligned}
\ddot{x} & = (f+m_{\mathrm{p}} \sin(q)(l\dot{q}^{2}+g\cos(q)))/b, \\
\ddot{q} & = (-f \cos(q) - m_{\mathrm{p}} l \dot{q}^{2} \cos(q) \sin(q) \\
		& \quad \quad \quad  - (m_{\mathrm{c}} + m_{\mathrm{p}}) g \sin(q) )/(lb), \\
\end{aligned}
\end{equation*}
where $m_{\mathrm{c}}$, $m_{\mathrm{p}}$ are the masses of cart and pole, $l$ is the pole length, and $b = m_{\mathrm{c}} + m_{\mathrm{p}} \sin[2](q)$. 
The state vector $\stat$ is defined as $\stat := [x, \dot{x}, q, \dot{q}]^{\top}$, where
$x, \dot{x}$ denote the horizontal position and velocity of the cart, and $q, \dot{q}$ denote the angle and angular velocity of the pole.
The control input $\ctrl$ is the force $f$ applied to the cart. 
The control task is to drive the system to a prescribed desired state at $\stat_{\mathrm{d}} = [0, 0, \pi, 0]^{\top}$ and hence we consider the following stage and terminal costs:
\begin{equation*}
\label{eq:cartpole_cost}
\begin{aligned}
\pathcost & :=  (\stat - \stat_{\mathrm{d}})^{\top} \Diag(\para_{\stat}) (\stat - \stat_{\mathrm{d}}) + \para_{\ctrl} \|\ctrl\|^{2}, \\
\termcost & :=  (\stat - \stat_{\mathrm{d}})^{\top} \Diag(\para_{\stat}) (\stat - \stat_{\mathrm{d}}),
\end{aligned}
\end{equation*}
where $\para_{\stat}, \para_{\ctrl}$ denote the weights for state and control, and we set $\para_{\ctrl} = 0.1$ to avoid ambiguity.
In addition, we set norm-bounded constraints for both state and control vectors, i.e., $|x| \leq x_{\mathrm{ub}}$ and $|f| \leq f_{\mathrm{ub}}$, where $x_{\mathrm{ub}}$ and $f_{\mathrm{ub}}$ are the upper bounds for the cart position and the applied force, respectively.
We set $\para = \{ m_{\mathrm{c}}, m_{\mathrm{p}}, l, \para_{\stat}, x_{\mathrm{ub}}, f_{\mathrm{ub}}\}$ as the parameter to be learned. 

\noindent
\paragraph{Quadrotor} the system dynamics is given by
\begin{equation*}
\label{eq:quadrotor_dyn}
\begin{aligned}
\dot{\vecbf{p}}_{\mathrm{w}} &= \vecbf{v}_{\mathrm{w}}, & \dot{\vecbf{v}}_{\mathrm{w}} &= T_{\mathrm{b}} \matbf{R} \vecbf{e}_{z}/m - g \vecbf{e}_{z}, \\
\dot{\vecbf{q}}_{\mathrm{b}} &= \vecbf{q}_{\mathrm{b}} \oplus [0, \gbf{\omega}_{\mathrm{b}}^{\top}]^{\top}/2, & \dot{\gbf{\omega}}_{\mathrm{b}} &= \matbf{J}_{\mathrm{b}}^{-1} (\gbf{\tau}_{\mathrm{b}} -[\gbf{\omega}_{\mathrm{b}}]_{\times} \matbf{J}_{\mathrm{b}} \gbf{\omega}_{\mathrm{b}}),
\end{aligned}
\end{equation*}
where $g = 10~\mathrm{m \cdot s^{-2}}$ is the gravitational acceleration, $\vecbf{e}_{z} = [0,0,1]^{\top}$.
The state vector $\stat$ is defined as $\stat := [\vecbf{p}_{\mathrm{w}}^{\top}, \vecbf{v}_{\mathrm{w}}^{\top}, \vecbf{q}_{\mathrm{b}}^{\top}, \gbf{\omega}_{\mathrm{b}}^{\top} ]^{\top} \in \mathbb{R}^{13}$, where $\vecbf{p}_{\mathrm{w}} \in \mathbb{R}^{3}$, $\vecbf{v}_{\mathrm{w}} \in \mathbb{R}^{3}$ respectively denote the position and velocity in the world frame and $\vecbf{q}_{\mathrm{b}} \in \mathbb{R}^{4}$ (equivalent rotation representation $\matbf{R} \in SO(3)$), $\gbf{\omega}_{\mathrm{b}} \in \mathbb{R}^{3}$ respectively denote the orientation and angular velocity in the body frame. 
 $m \in \mathbb{R}$ is the mass and $\matbf{J}_{\mathrm{b}} \in \mathbb{R}^{3 \times 3}$ is the moment of inertia.
 $T_{\mathrm{b}} \in \mathbb{R}$ and $\gbf{\tau}_{\mathrm{b}} \in \mathbb{R}^{3}$ denote the overall thrust and torque in the body frame, which are generated by: 
\begin{equation*}
\label{eq:mixer_eq}
\left[\begin{smallmatrix}
T_{\mathrm{b}} \\
\gbf{\tau}_{\mathrm{b}}
\end{smallmatrix}\right]
=\left[ \begin{smallmatrix}
1 & 1 & 1 & 1 \\
0 & -l / 2 & 0 & l / 2 \\
-l / 2 & 0 & l / 2 & 0 \\
c & -c & c & -c
\end{smallmatrix} \right] \ctrl,
\end{equation*}
where $l$ is the wing length, $c$ is the thrust-torque ratio, and $\ctrl \in \mathbb{R}^{4}$ denotes the thrust generated by four propellers.
The control task is to drive the system to the desired state at $[\vecbf{0}_{3}, \vecbf{0}_{3}, \vecbf{q}_{\mathrm{d}}, \mathbf{0}_{3}]^{\top}$.
Similar stage and terminal costs can be considered for this example except that the cost term for orientation should be $\para_{\vecbf{q}_{\mathrm{b}}}\tr(\matbf{I}_{3} -  \matbf{R}_{\mathrm{d}}^{\top}\matbf{R})/2$.
In addition, we set norm-bounded constraints for both state and control vectors, i.e., $\|\vecbf{p}_{\mathrm{w}}\| \leq r$ and $\|\ctrl\|_{\infty} \leq u_{\mathrm{ub}}$, where $r$ and $u_{\mathrm{ub}}$ are the radius of safe area and the upper bound of thrust, respectively.
We set $\para = \{ m, \matbf{J}_{\mathrm{b}}, l, \para_{\stat}, r, u_{\mathrm{ub}}\}$ as the parameter to be learned.

\noindent
\paragraph{Two-link robot arm} the system dynamics is given by
{\small \begin{equation*}
\begin{aligned}
\begin{bmatrix}\ddot{q}_{1} \\ \ddot{q}_{2}
\end{bmatrix} & = M^{-1}\left(\begin{bmatrix}\tau_{1} \\ \tau_{2}
\end{bmatrix} - m_{2}l_{1}l_{2} \begin{bmatrix} -\dot{q}_{2}^{2} - 2\dot{q}_{1}\dot{q}_{2} \\ \dot{q}_{1}^{2}
\end{bmatrix} /2 - \right .\\
& \left . \begin{bmatrix} m_{1}l_{1}g \cos(q_{1})/2 + m_{2}g(l_{2}\cos(q_{1}+q_{2})/2 + l_{1}\cos(q_{1})) \\ m_{2}gl_{2}\cos(q_{1}+q_{2})/2\end{bmatrix}\right), \\
\end{aligned}
\end{equation*}}
where $m_{i}, l_{i}, I_{i} = m_{i}l_{i}^{2}/12, i \in \mathcal{I}_{2}$ denote the link mass, link length, and angular momentum, respectively;
{\small
\begin{equation*}
\begin{aligned}
M & = \left[\begin{matrix} m_{1}l_{1}^{2}/4 + I_{1} + m_{2} (l_{1}^{2} + l_{2}^{2}/4 + 2l_{1} + l_{2}\cos(q_{2}/2) ) + I_{2} \\ m_{2}(l_{2}^{2}/4+l_{1}l_{2}\cos(q_{2})/2 + I_{2})
\end{matrix}\right. \\
&  \phantom{xxxxxxxxxxxxxxxxxxxx} \left. \begin{matrix}  m_{2}(l_{2}^{2}/4+l_{1}l_{2}\cos(q_{2})/2 + I_{2})\\  m_{2}l_{2}^{2}/4+I_{2}
\end{matrix} \right]. 
\end{aligned}
\end{equation*}}
The state vector $\stat$ is defined as $\stat := [q_{1},q_{2}, \dot{q}_{1}, \dot{q}_{2}]^{\top}$, which is the concatenation of the angles and angular velocities of both links, and the control input $\ctrl := [\tau_{1}, \tau_{2}]^{\top}$ is the concatenation of torques.
The control task is to drive the system to the desired state at $\stat_{\mathrm{d}} = [\pi/2, 0, 0, 0]^{\top}$.
In addition, we set norm-bounded constraints for both state and control vectors, i.e., $|q_{i}| \leq q_{\mathrm{ub}}$ and $\|\ctrl\|_{\infty} \leq u_{\mathrm{ub}}$, where $q_{\mathrm{ub}}$ and $u_{\mathrm{ub}}$ are the upper bounds of the joint angle and the torque, respectively.
We set $\para = \{l_{1}, l_{2}, \para_{\stat}, q_{\mathrm{ub}}, u_{\mathrm{ub}}\}$ as the parameter to be learned.

\noindent
\paragraph{Rocket}
the system dynamics is given by
\begin{equation*}
\begin{aligned}
\dot{\vecbf{p}}_{\mathrm{w}} &= \vecbf{v}_{\mathrm{w}}, & \dot{\vecbf{v}}_{\mathrm{w}} &=  \matbf{R} \gbf{\tau}/m - g \vecbf{e}_{z}, \\
\dot{\vecbf{q}}_{\mathrm{b}} &= \vecbf{q}_{\mathrm{b}} \oplus [0, \gbf{\omega}_{\mathrm{b}}^{\top}]^{\top}/2, & \dot{\gbf{\omega}}_{\mathrm{b}} &= \matbf{J}_{\mathrm{b}}^{-1} ([\vecbf{r}_{\mathrm{gp}}]_{\times} \ctrl -[\gbf{\omega}_{\mathrm{b}}]_{\times} \matbf{J}_{\mathrm{b}} \gbf{\omega}_{\mathrm{b}}),
\end{aligned}
\end{equation*}
where $\vecbf{r}_{\mathrm{gp}} \in \mathbb{R}^{3}$ is the gimbal-point position vector and $\ctrl \in \mathbb{R}^{3}$ is the vectored thrust.
The control task is to drive the system to the desired state at $[\mathbf{0}_{3}, \mathbf{0}_{3}, \vecbf{q}_{\mathrm{d}}, \mathbf{0}_{3}]^{\top}$.	
In addition, we set norm-bounded constraints for both state and control vectors, i.e., $\tr(\matbf{I}_{3} -  \matbf{R}_{\mathrm{d}}^{\top}\matbf{R})/2 \leq \alpha_{\mathrm{ub}}$ and $\|\ctrl\|_{2} \leq u_{\mathrm{ub}}$, where $\alpha_{\mathrm{ub}}$ and $u_{\mathrm{ub}}$ are the upper bounds of the tilt angle and the vectored thrust, respectively.
We set $\para = \{ m, \matbf{J}_{\mathrm{b}}, \para_{\stat}, \alpha_{\mathrm{ub}}, u_{\mathrm{ub}}\}$ as the parameter to be learned.

\subsection{PDP-based vs. DDP-based methods for gradient computation}
\label{subsec:simu_pdp_ddp_comp}
We first consider the unconstrained IRL problem with open-loop loss. 
For the above-mentioned four examples, we temporarily exclude the norm-bounded constraints and their involved upper bounds from the optimal control problem and the learning parameters, respectively.
We use both the PDP-based \cite{jin2020pontryagin} and our proposed DDP-based algorithms to compute the required gradient.
For the sake of clarity, we only run the gradient descent for $20$ steps for this comparison.
Figure \ref{fig:ddp_pdp_comp_grad_diff} shows the difference between gradients computed by two algorithms. 
Figure \ref{fig:ddp_pdp_comp_comp_time} shows the computational time for the gradient computation in each gradient descent step adopting both algorithms.
Next, we present the comparison of SafePDP \cite{jin2021safe} and our proposed IPDDP-based algorithm, which is used for the IRL problem with constraints.  
Similarly, the gradient difference and computational time are recorded in Figs. \ref{fig:ipddp_safepdp_comp_grad_diff}, \ref{fig:ipddp_safepdp_comp_comp_time}, respectively. Additionally, we implement the BarrierDDP-based method mentioned in Remark \ref{rem:barrierDDP}, which incorporates the constraints into stage cost via barrier functions. 
Based on the above results, we have the following comments.
\begin{figure}[h]
	\centering
		\includegraphics[width=0.9\linewidth]{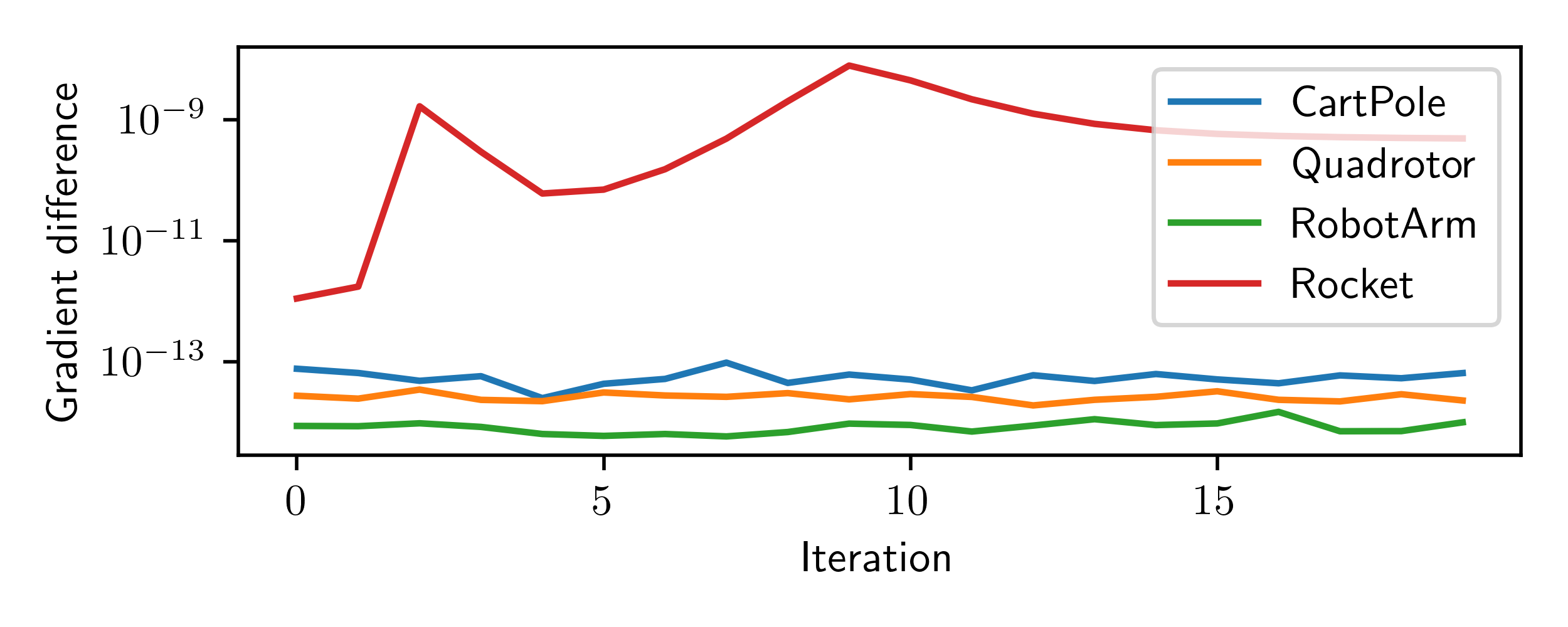}
	\caption{The difference between the gradients computed by PDP-based and proposed DDP-based algorithms on unconstrained problems.}
	\label{fig:ddp_pdp_comp_grad_diff}
\end{figure}

\begin{figure}[h]
	\centering
		\includegraphics[width=0.9\linewidth]{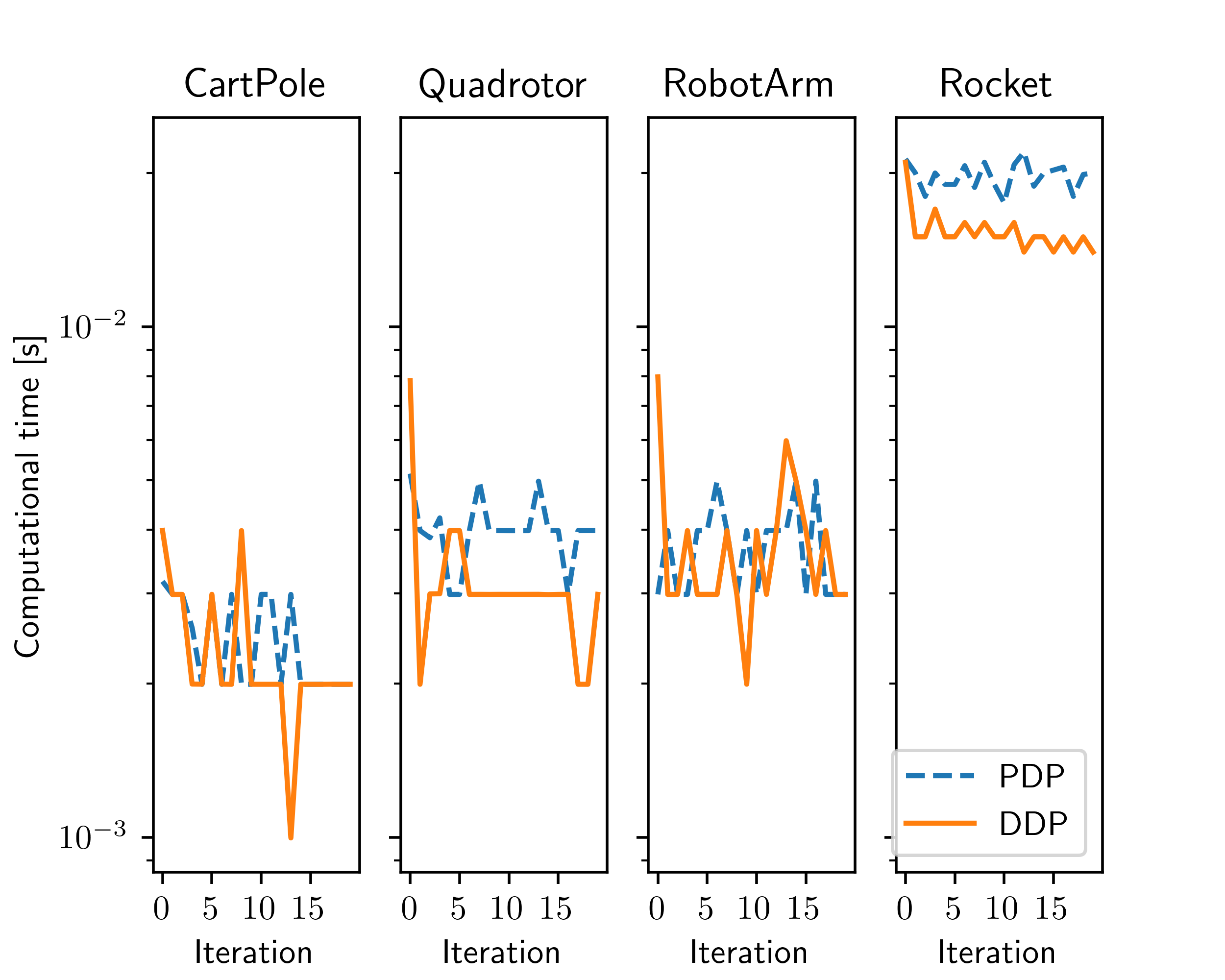}
	\caption{The computational time for each call of PDP-based and proposed DDP-based algorithms on unconstrained problems.}
	\label{fig:ddp_pdp_comp_comp_time}
\end{figure}

\begin{figure}[h]
	\centering
		\includegraphics[width=0.9\linewidth]{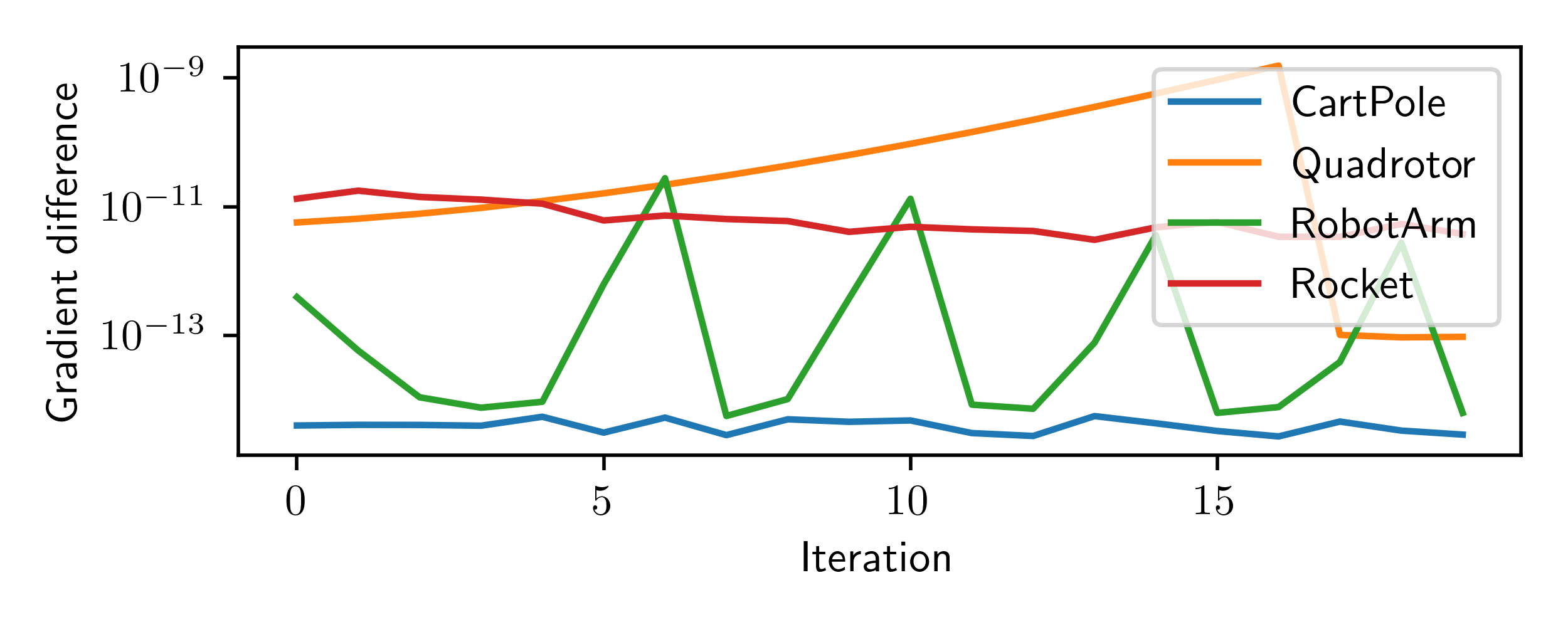}
	\caption{The difference between the gradients computed by PDP-based and proposed DDP-based algorithms on constrained problems.}
	\label{fig:ipddp_safepdp_comp_grad_diff}
\end{figure}

\begin{figure}[h]
	\centering
		\includegraphics[width=0.9\linewidth]{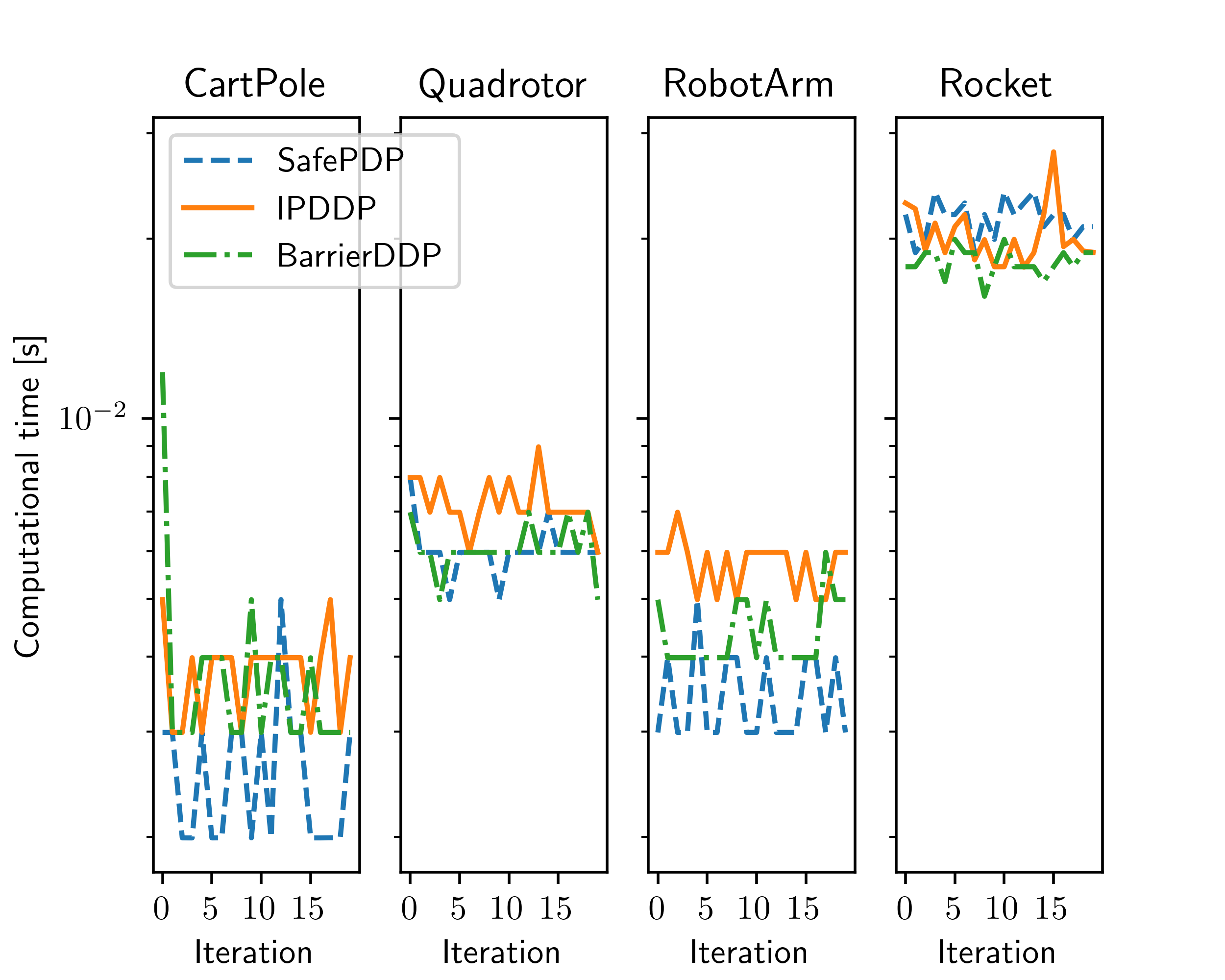}
	\caption{The computational time for each call of PDP-based and proposed DDP-based algorithms on constrained problems.}
	\label{fig:ipddp_safepdp_comp_comp_time}
\end{figure}

1) In terms of gradient difference, it can be seen from Figs. \ref{fig:ddp_pdp_comp_grad_diff} and \ref{fig:ipddp_safepdp_comp_grad_diff} that the residual is negligible for the tested examples, which verifies that our theoretical result of the equivalence of the gradient computation from the algorithms. 

2) It can be found from Fig. \ref{fig:ddp_pdp_comp_comp_time} that for the system with a lower dimension (cartpole and robot arm), the computational time is marginally the same, while for those with a higher dimension (quadrotor and rocket), DDP-based algorithm is faster since our derivation is more compact in the sense that it uses a vectorized form of many small terms which are also used in PDP-based algorithms.

3) As seen from Fig. \ref{fig:ipddp_safepdp_comp_comp_time}, compared to SafePDP, IPDDP-based algorithm is marginally worse for the first three examples while marginally better in the fourth example. 
    The reason is that although the compact derivation saves the computational time (as explained in Fig. \ref{fig:ddp_pdp_comp_comp_time}), IPDDP-based algorithm introduces the dual variables as the control variable, which increases the problem size and leads to a bit longer computational overhead for symbolic evaluation of \eqref{eq:hat_Q_for_gradient}.
    However, this is not the case for BarrierDDP since its implementation does not increase the problem size as in IPDDP-based algorithm while inheriting the advantage of DDP over PDP on problems with higher dimensions, which can be seen from Fig. \ref{fig:ipddp_safepdp_comp_comp_time}.

\subsection{Advantages of closed-loop IRL over open-loop IRL}
\label{subsec:simu_adv}
We define the following metrics to evaluate the performance of our proposed algorithms.
\begin{itemize}
    \item \textbf{Parameter residual}, which  measures the error between the learned parameter $\para$ and ground truth $\para^{\ast}$, i.e.,
        $$\resi_{\mathrm{para}}(\para) := \|\para - \para^{\ast}\|^{2},$$
    $\resi_{\mathrm{para}} = 0$ means exact recovery of the true parameter.
    \item \textbf{Trajectory residual}, measuring the distance between the demonstration trajectories $\trajset(\para^{\ast})$ and the rollout trajectories $\trajset_{\mathrm{rollout}}(
    \para)$, i.e., 
    $$\resi_{\mathrm{traj}}(\para) := \|\trajset(\para^{\ast}) - \trajset_{\mathrm{rollout}}(
    \para
    )\|_{2}^{2},$$
    This metric resembles the open-loop loss $L^{\mathrm{ol}}$ while differs in that the rollout trajectories $\trajset_{\mathrm{rollout}}(
    \para)$ are not obtained by directly solving \eqref{eq:prob_ineq_eq} but by performing the feedback policy $\{\gaink, \gainK\}$ on the system with the true dynamics, i.e., $\dyn(\cdot;\para^{\ast})$, which is possibly contaminated by a process noise.

    \item \textbf{Suboptimality gap}, which measures the performance gap between the testing demonstrations $\trajset(\para^{\ast})$ and the rollout trajectories $\trajset_{\mathrm{rollout}}(
    \para)$ evaluated at the performance index under parameter $\para^{\mathrm{e}}$, i.e.,
    $$\resi_{\mathrm{sub}}(\para; \para^{\mathrm{e}}) := \mpccost(\trajset_{\mathrm{rollout}}(
    \para); \para^{\mathrm{e}}) -\mpccost(\trajset(\para^{\ast}); \para^{\mathrm{e}}),$$  
     Specifically, $\para^{\mathrm{e}}$ can be chosen among the true  $\para^{\ast}$ and the final value of the learned parameters. 
    Note that this suboptimality gap can be negative even if $\para = \para^{\ast}$ due to different noise realizations. 

\end{itemize}

\begin{figure}[!t]
	\centering
		\includegraphics[width=0.9\linewidth]{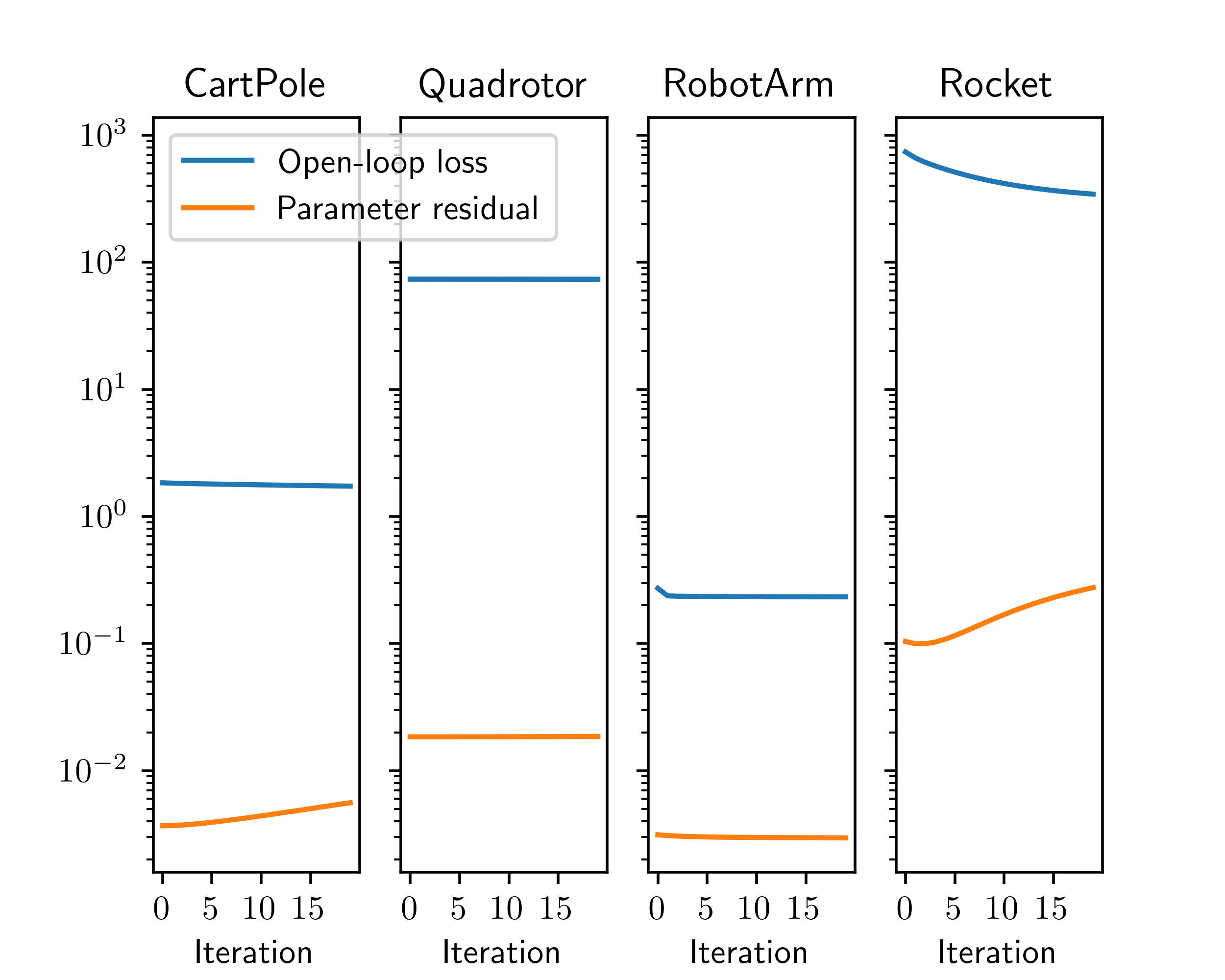}
	\caption{Traces of loss and parameter estimation error by adopting PDP-based and proposed DDP-based algorithms on unconstrained problems. The stepsizes are set as $10^{-3}, 10^{-4}, 10^{-2}, 10^{-4}$, and the horizons are set as $N=12,10,10,40$.}
	\label{fig:ddp_pdp_comp_loss}
\end{figure}

\begin{figure}[!t]
	\centering
		\includegraphics[width=0.9\linewidth]{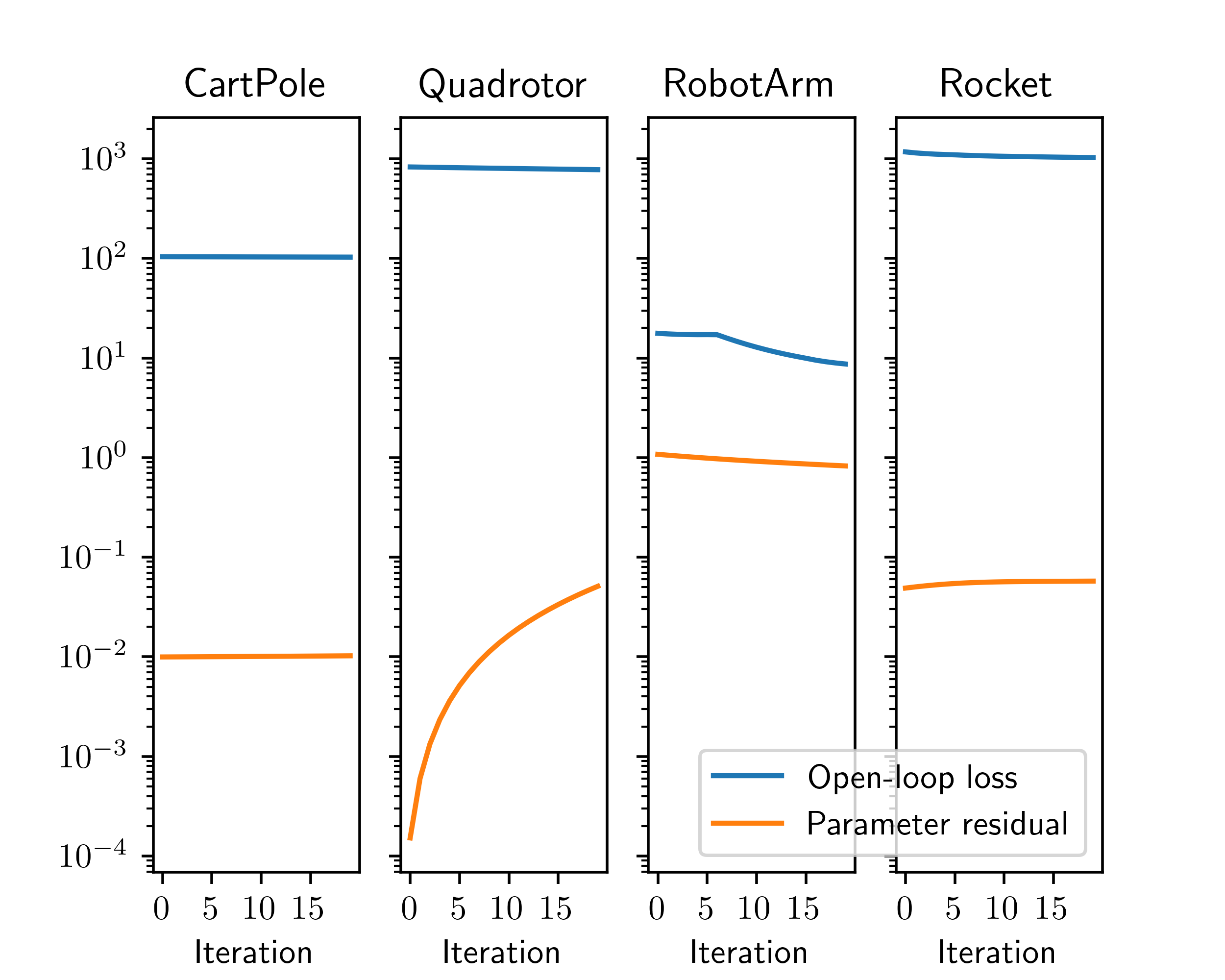}
	\caption{Traces of loss and parameter estimation error by adopting PDP-based and proposed DDP-based algorithms on constrained problems.}
	\label{fig:ipddp_safepdp_comp_loss}
\end{figure}

\begin{figure}[!t]
	\centering
		\includegraphics[width=0.9\linewidth]{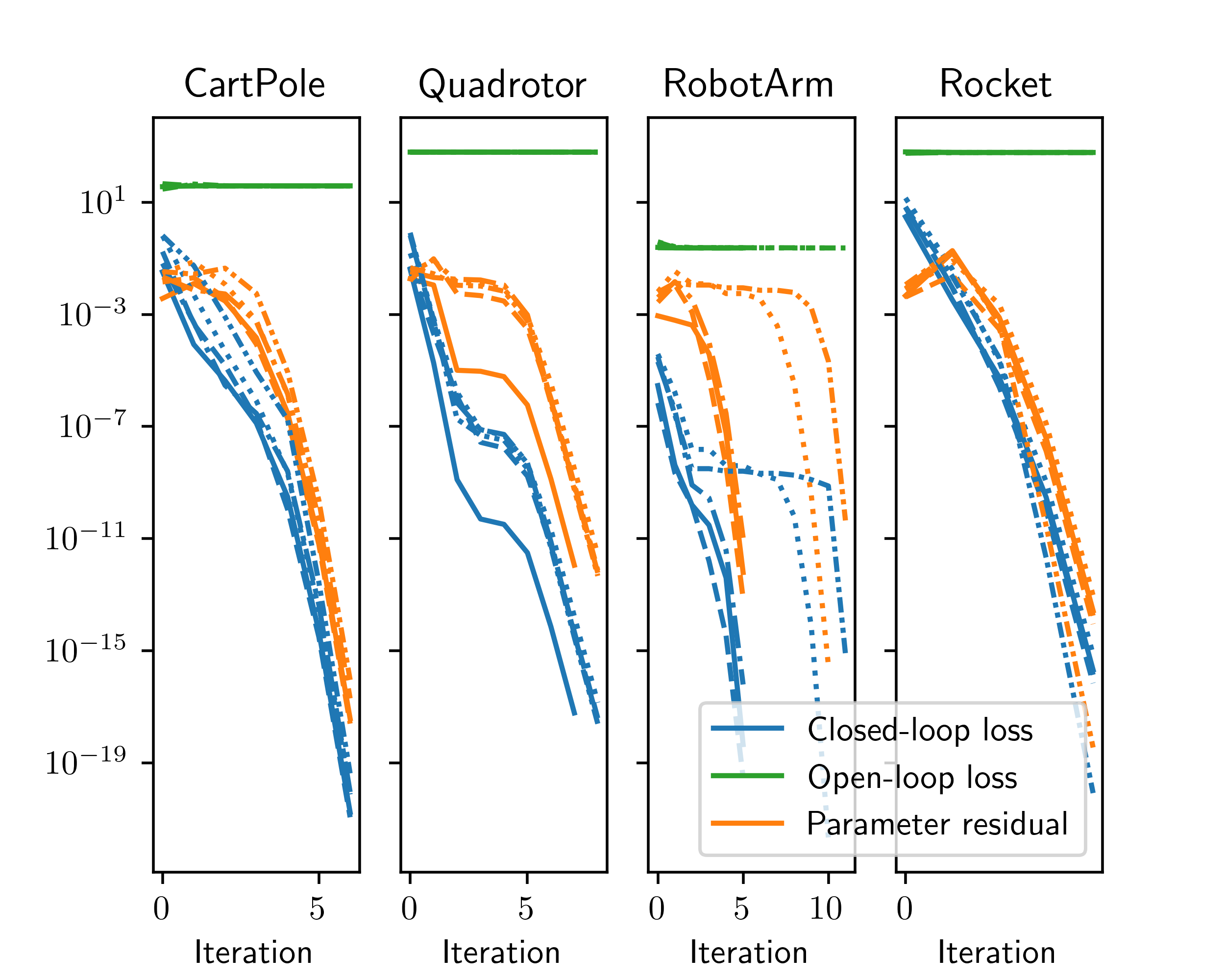}
	\caption{Traces of loss and parameter estimation error by adopting Algorithm \ref{alg:cl_irl}.}
	\label{fig:ddp_LM_new_loss}
\end{figure}

\begin{figure*}[!t]
	\centering
		\includegraphics[width=0.9\linewidth]{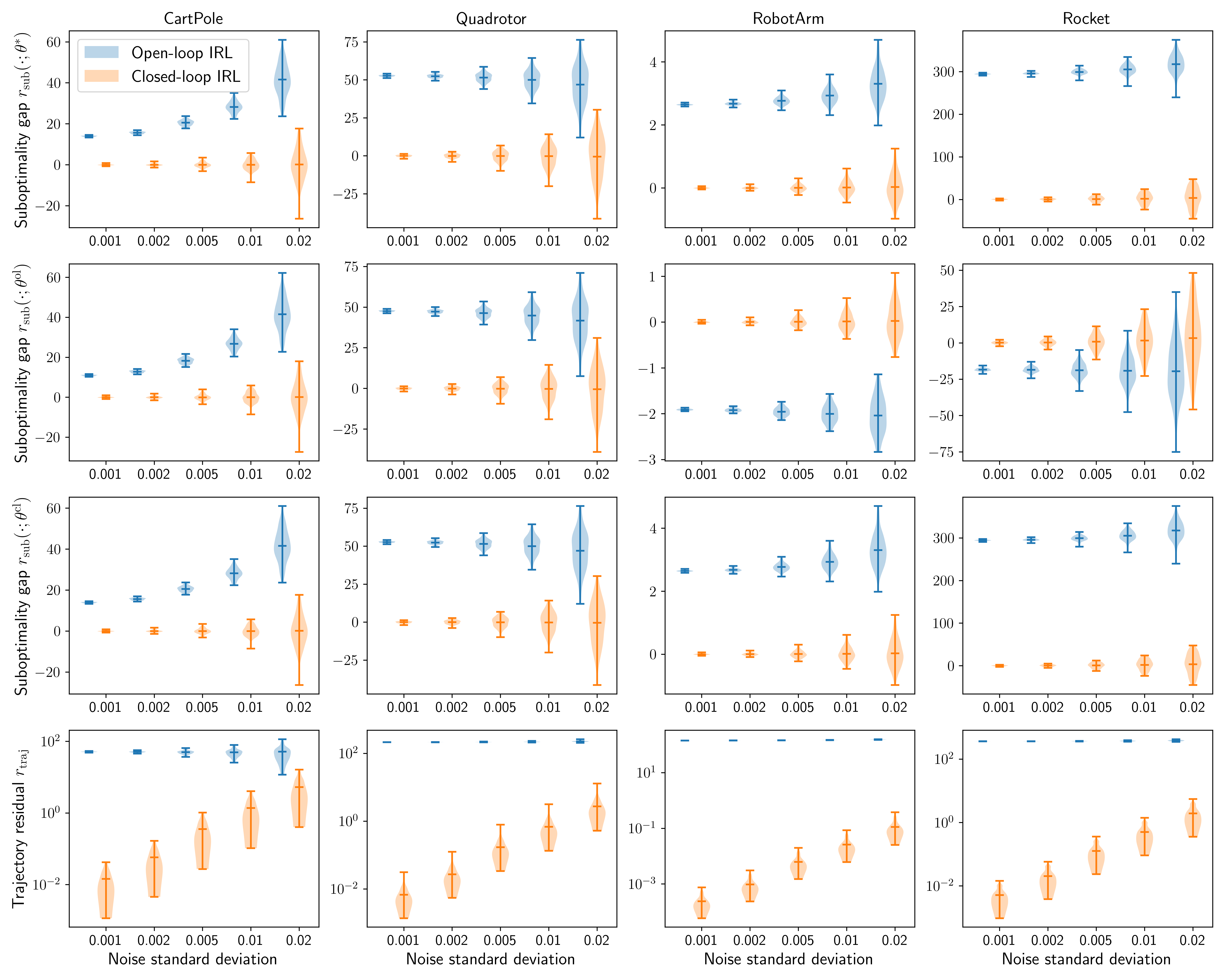}
	\caption{Performance evaluation on different metrics with parameters learned from open-loop and closed-loop IRL algorithms. The lower and upper bars denote the range and the middle bar denotes the mean. The shaded area shows the probability density of the data at different values.}
	\label{fig:perfeval}
\end{figure*}

We first present a qualitative comparison between the open-loop and closed-loop IRL algorithms. 
For open-loop IRL, we use the gradients calculated in generating Figs. \ref{fig:ddp_pdp_comp_grad_diff} and \ref{fig:ipddp_safepdp_comp_grad_diff} to update the parameter according to Algorithm \ref{alg:ol_irl} and record the trace of open-loop loss $L^{\mathrm{ol}}$ and the parameter residual $\resi_{\mathrm{para}}$ in Figs. \ref{fig:ddp_pdp_comp_loss} and  \ref{fig:ipddp_safepdp_comp_loss}. 
For closed-loop IRL, we implement Algorithm \ref{alg:cl_irl} for $5$ trials in each simulation example and record the trace of closed-loop loss $L^{\mathrm{cl}}$ and parameter residual $\resi_{\mathrm{para}}$ in Fig. \ref{fig:ddp_LM_new_loss}, where each type of line denotes a different trial. 
For the first trial denoted by solid lines, it uses the same initial condition as that in Fig. \ref{fig:ddp_pdp_comp_loss}.
In the meantime, we record its open-loop loss $L^{\mathrm{ol}}$ during the learning process, denoted by the green lines.
In order to quantitatively demonstrate the advantages of our proposed closed-loop IRL, we test the learned parameter $\para^{\mathrm{ol}}$ and $\para^{\mathrm{cl}}$ from the algorithms in the following new setting which is different for training.
Specifically, we set the horizon as $20$, randomly choose a new initial condition, and use $\para^{\mathrm{ol}}$ and $\para^{\mathrm{cl}}$ to compute its corresponding feedback policy.
During rollout, we randomly add multiplicative process noise to the system dynamics and record the entire trajectory.
We repeat the simulation 100 times for each algorithm under noise of different standard deviations.
Additionally, we go through the same process with the true parameter $\para^{\ast}$ to generate the test dataset.
Then, we evaluate these trajectories with the above-defined sub-optimality gaps $\resi_{\mathrm{sub}}(\para; \para^{\mathrm{e}})$ with $\para^{\mathrm{e}} \in \{\para^{\ast}, \para^{\mathrm{ol}}, \para^{\mathrm{cl}}\}$ and trajectory residual $\resi_{\mathrm{traj}}$, as shown in each row in Fig. \ref{fig:perfeval}. Based on the above results, we have the following comments.

 1) As seen from Fig. \ref{fig:ddp_pdp_comp_loss}, the loss is decreasing slowly as expected for a gradient descent algorithm. 
    Further, as explained in Sec. \ref{sec:cl_irl}, due to the different nature of demonstration (closed-loop) and loss function (open-loop), the optimizing direction for $L^{\mathrm{ol}}$ does not necessarily coincide with the optimizing direction for the parameter residual $\resi_{\mathrm{para}}$. 
    This can be seen from the rocket example (the last column of Fig. \ref{fig:ddp_pdp_comp_loss}), where the parameter residual is indeed increasing.
    A similar phenomenon can be observed from Fig. \ref{fig:ipddp_safepdp_comp_loss}, i.e., the parameter residuals for cartpole, quadrotor, and rocket systems increase even the open-loop losses decrease. 
    
 2)  It can be found from Fig. \ref{fig:ddp_LM_new_loss} that different from open-loop IRL, the parameter residual of closed-loop IRL decreases as the closed-loop loss decreases.
    In the meantime, the open-loop loss is recorded (not used for iteration), from which one can find that it remains a large value even if the parameter residual is negligible.
    This is expected since our closed-loop design has incorporated the closed-loop nature of demonstrations while not seeking to minimize the discrepancy between demonstrated and reproduced trajectories.
    Additionally, owing to the usage of the LM algorithm, it only takes tens of iterations to converge to a very small residual, which is significantly faster than the gradient-descent-based closed-loop IRL.
    
3) For the first three rows of Fig. \ref{fig:perfeval}, the range and variance of suboptimality gaps for both algorithms increase as the standard deviation of noise increases, while the mean of those for closed-loop IRL is approximately zero, indicating that it achieves a similar level of performance (in the sense of cost function) as the policy induced from the true parameter.
    As seen from the first row of Fig. \ref{fig:perfeval}, closed-loop IRL significantly outperforms open-loop IRL in terms of suboptimality gap evaluated at true parameter, i.e. $\resi_{\mathrm{sub}}(\para; \para^{\ast})$. 
    The third row which corresponds to the suboptimality gap evaluated at closed-loop IRL learned parameter $\para^{\mathrm{cl}}$ resembles the first row since the parameter residual $\resi_{\mathrm{para}}(\para^{\mathrm{cl}})$ is negligible. 
    We cannot guarantee the advantage of closed-loop IRL over open-loop IRL in terms of suboptimality gap evaluated at open-loop IRL learned parameter $\para^{\mathrm{ol}}$ (the second row of Fig. \ref{fig:perfeval}), since in this case the latter is exactly optimized under $\para^{\mathrm{ol}}$ and is expected to outperform the former.  
    Nevertheless, we observe that the former still outperforms the latter in the cartpole and quadrotor example and they are close in the rocket example, since in these cases open-loop IRL wrongly estimates the parameter in system dynamics, while the rollout is performed on the dynamic system with the true parameter $\para^{\ast}$.  
    
 4) As seen from the last row of Fig. \ref{fig:perfeval}, closed-loop IRL outperforms open-loop IRL by at least one order in terms of trajectory residual $\resi_{\mathrm{traj}}$, which means that the rollout trajectory generated from learned parameter is much closer to the one generated from the true parameter. 
    Different from the previous three rows where the mean for closed-loop IRL is always approximately zero, the mean in this row increases as the standard deviation of noise increases, this is because different noise realizations lead to distinct rollout trajectories and hence a strictly positive trajectory residual $\resi_{\mathrm{traj}}$, and the difference between two trajectories increases. 
    This can also be understood with a simplified case where the rollout trajectory is assumed to be a linear function of parameter with additive noise, then under negligible parameter residual, i.e., $\para = \para^{\ast}$, the mean of trajectory residual $\resi_{\mathrm{traj}}$ is exactly two times of the variance of the noise. 
    
 5) As can be observed from all of the subplots in Fig. \ref{fig:perfeval}, the ranges of data for two algorithms overlap (or are going to be overlapping) with each other as the noise gets larger, this is because the rollout trajectory deviate too much from the nominal trajectory which is used for computing the feedback policy, and hence the policy cannot be guaranteed to perform well in this case.

\subsection{Properties of closed-loop IRL} 
In the previous section, we have demonstrated the advantages of our proposed closed-loop IRL over the open-loop one by implicitly assuming that both algorithms are training with sufficient data. 
In this section, we aim to provide an in-depth analysis on how much data is required.
As before, we first present a set of qualitative examples, where we set $|\sampleset| = 2$, i.e., $2$ sampling instants within horizon $N$, and apply Algorithm \ref{alg:cl_irl} subsequently. 
We use the same initial conditions ($5$ trials) as in Fig. \ref{fig:ddp_LM_new_loss} and record the trace of loss $L^{\mathrm{cl}}$ and parameter residual $\resi_{\mathrm{para}}$ in Fig. \ref{fig:ddp_LM_new_loss_shorterHorizon}. 
Next, we vary the length of demonstration $|\sampleset|$ from $1$ to $10$, and only record the final closed-loop loss $L^{\mathrm{cl}}$ and parameter residual $\resi_{\mathrm{para}}$, as shown in Fig. \ref{fig:ddp_LM_new_loss_shorterHorizon_by_length}.
\begin{figure}[h]
	\centering
		\includegraphics[width=0.9\linewidth]{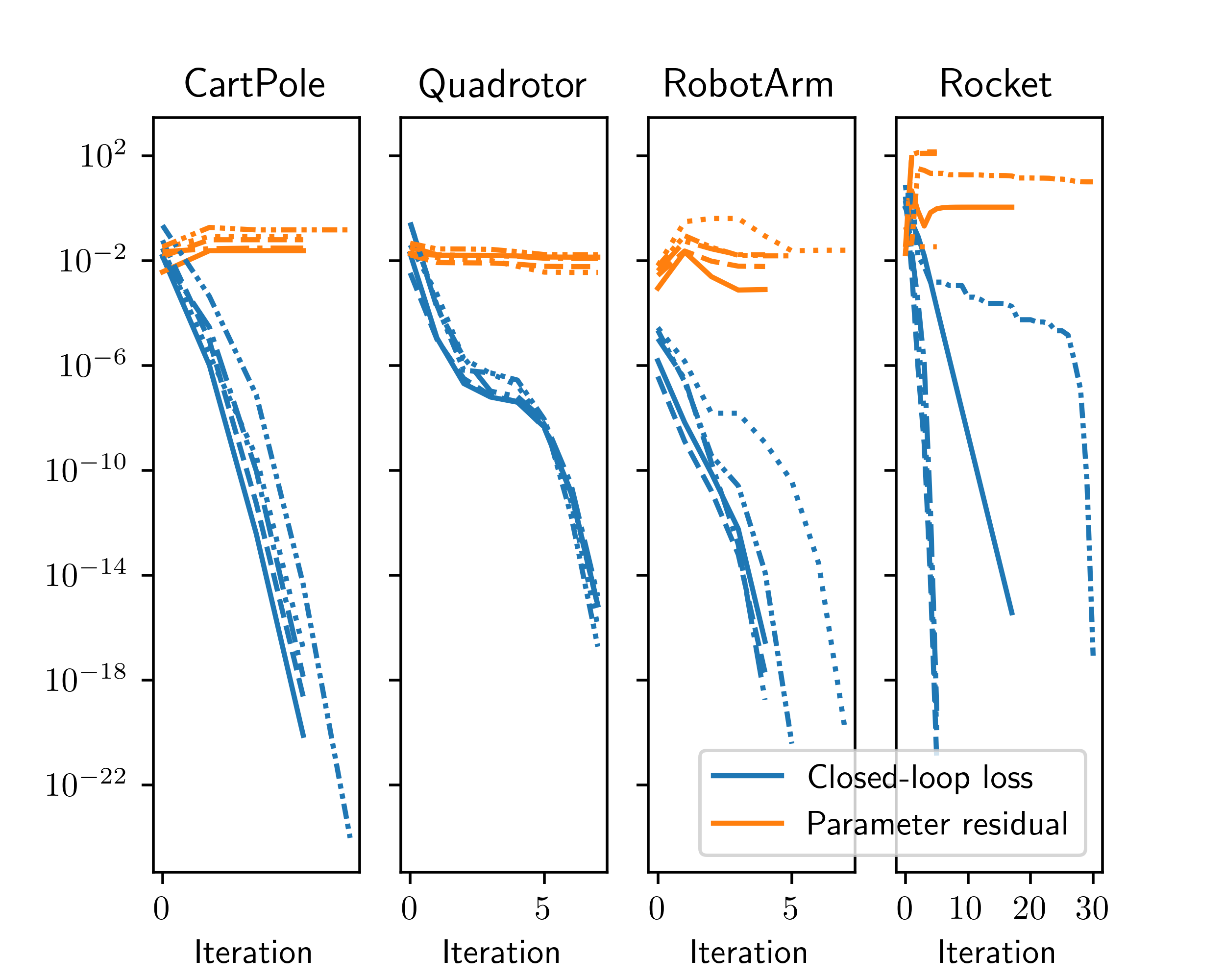}
	\caption{Traces of loss and parameter estimation error by adopting Algorithm \ref{alg:cl_irl} with a short length of demonstrations ($|\sampleset|=2$).}
 \label{fig:ddp_LM_new_loss_shorterHorizon}
\end{figure}

\begin{figure}[h]
	\centering
		\includegraphics[width=0.9\linewidth]{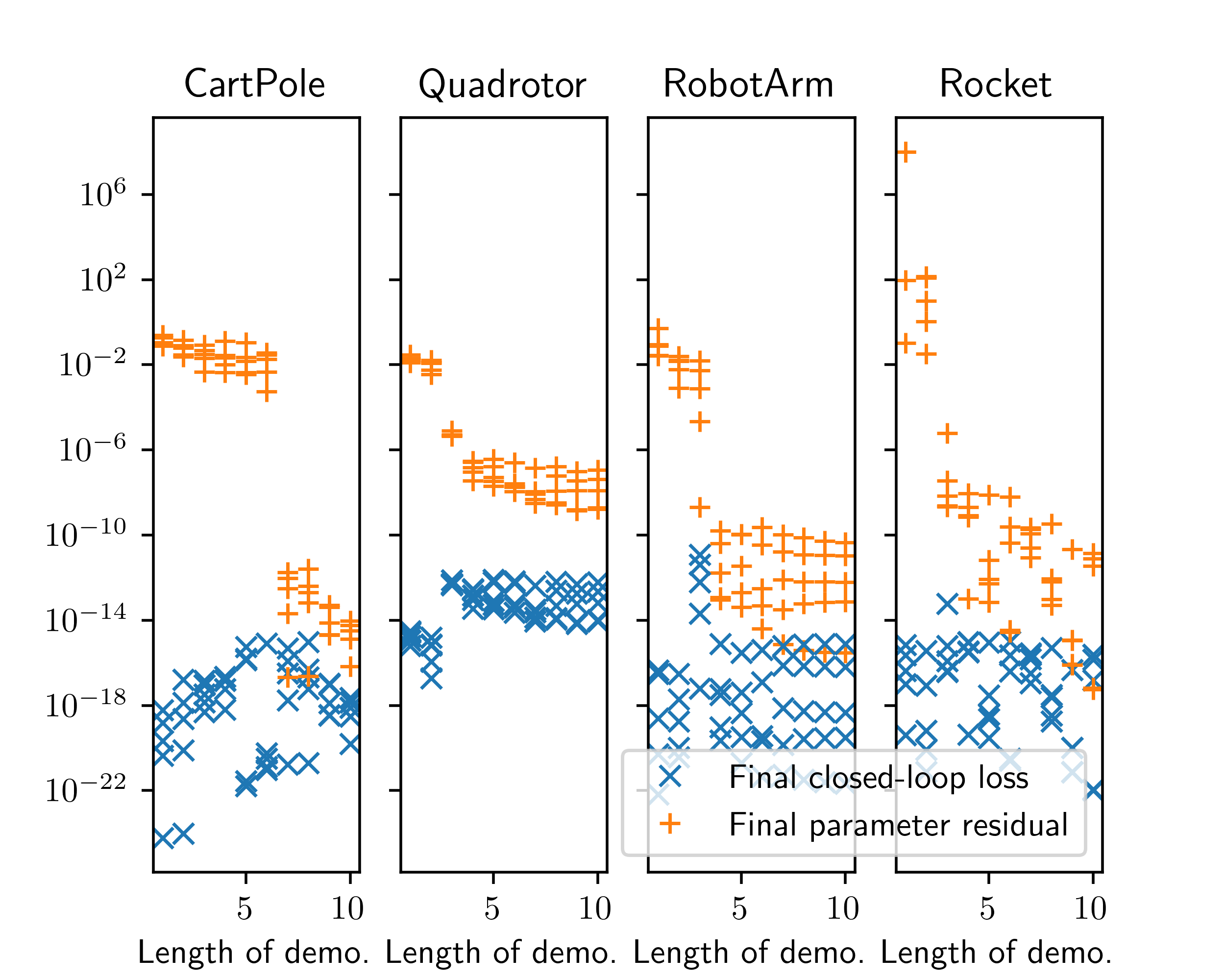}
	\caption{Final loss and parameter estimation error by adopting Algorithm \ref{alg:cl_irl} with different lengths of demonstrations.}
 \label{fig:ddp_LM_new_loss_shorterHorizon_by_length}
\end{figure}
It can be found from Fig. \ref{fig:ddp_LM_new_loss_shorterHorizon} that although the closed-loop loss $L^{\mathrm{cl}}$ decreases rapidly, the parameter residual stops updating and remains a non-negligible value. 
The reason is that for $\Rank(\recMat) < \size{\para}$, there exists another set of parameters except for $\para^{\ast}$ such that the closed-loop loss $L^{\mathrm{cl}}$ is zero. 
This result can be more easily seen in Fig. \ref{fig:ddp_LM_new_loss_shorterHorizon_by_length}. 
One can find an obvious parameter residual $\resi_{\mathrm{para}}$ drop when $|\sampleset|$ is near to $\lceil \size{\para}/\size{\ctrl} \rceil$ since in this case $\Rank(\recMat) = \size{\para}$ in general, e.g. for the quadrotor example, $\lceil \size{\para}/\size{\ctrl} \rceil = \lceil 9/4 \rceil = 3$.
For a longer length of demonstration, i.e., $|\sampleset| > \lceil \size{\para}/\size{\ctrl} \rceil$, both the closed-loop loss $L^{\mathrm{cl}}$ and the parameter residual $\resi_{\mathrm{para}}$ remain negligible values since $\Rank(\recMat)$ is non-decreasing w.r.t. the increase of $|\sampleset|$.

\subsection{Constrained inverse optimal control} 

In this section, we present an example of an LQR problem to validate Corollary \ref{cor:recov}. 
We consider the linear system $\stat^{+} = [\begin{smallmatrix}
    -1&1 \\
    0&1 
\end{smallmatrix}] \stat + [\begin{smallmatrix}
    1 \\
    3
\end{smallmatrix}] \ctrl$ with the stage cost 
$\pathcost := \stat^{\top}\Diag(\para_{\stat})\stat + \para_{\ctrl}\ctrl^{\top}\ctrl$
and the terminal cost $\termcost := 0$, where the true parameter $\para^{\ast} = [\para_{\stat}^{\ast\top}, \para_{\ctrl}^{\ast\top}]^{\top} = [0.1, 0.3, 0.6]^{\top}$.
Alternatively, one can rewrite $\pathcost = \gbf{\phi}^{\top}\para$ with $\gbf{\phi} = [\stat^{\top}\otimes\stat^{\top}, \ctrl^{\top}\otimes\ctrl^{\top}]^{\top}$, which satisfies Assumption \ref{cor:recov}-3).
We use the inequality constraint $\|[\stat]_{1}\ctrl\| \leq 0.1,$
which is a more general nonlinear constraint than the control-only constraint considered in \cite{molloy2020online}.
We generate the initial state $\stat_{0}$ randomly and produce a trajectory with horizon $N=50$.
For this trajectory, we use different lengths (from $1$ to $100$) of observation and perturbation to construct the matrix $\recMat_{\mathrm{lin},i}, i = 1,2,3$ as defined in \eqref{eq:recMat}. 
The rank of $\recMat_{\mathrm{lin},1:2}$ and the parameter residual are recorded in Fig. \ref{fig:ioc}.
\begin{figure}[h]
	\centering
		\includegraphics[width=\linewidth]{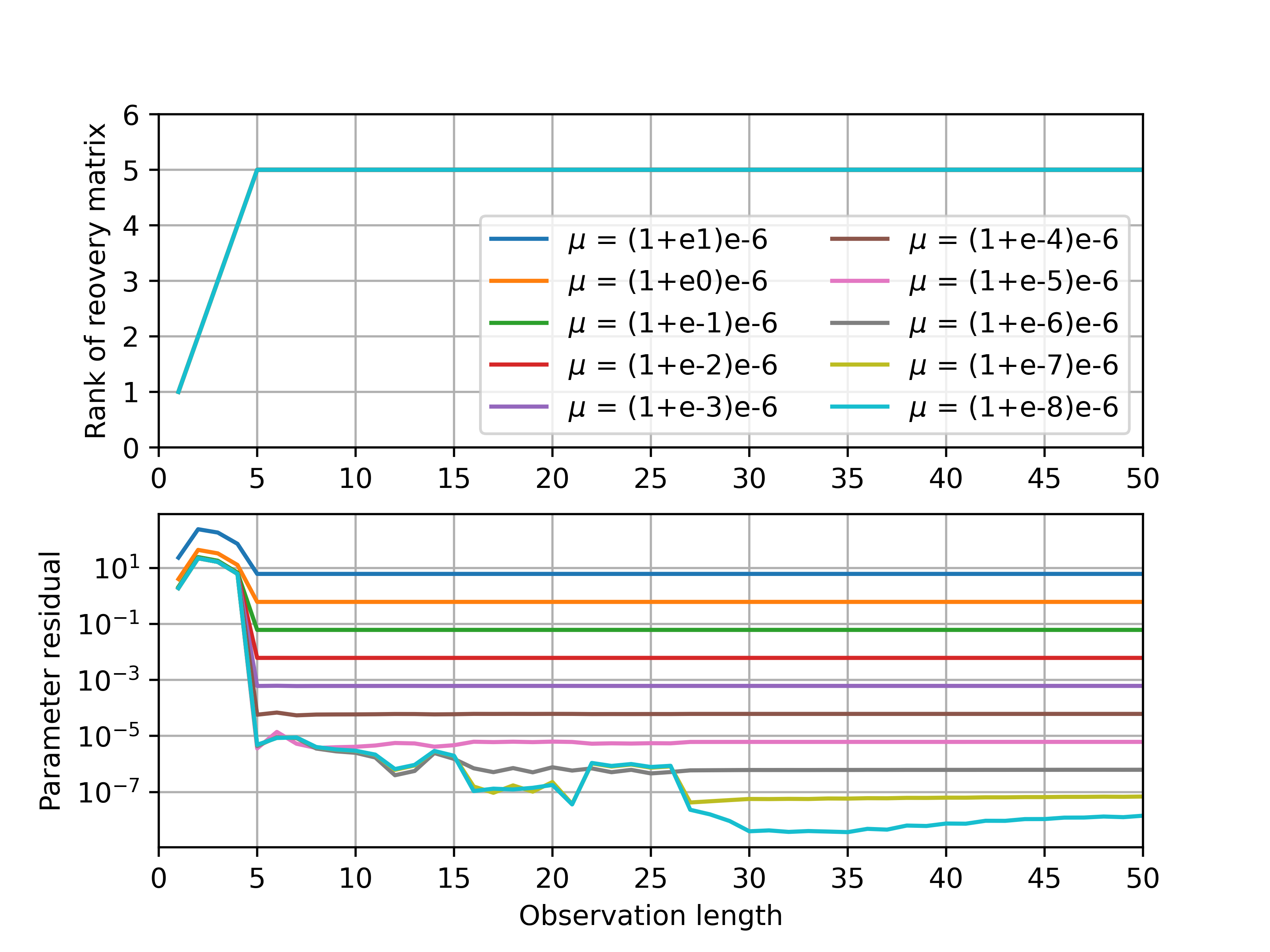}
	\caption{Rank of $\recMat_{\mathrm{lin},1:2}$ and parameter residual w.r.t. different observation length and perturbation.}
	\label{fig:ioc}
\end{figure}
It can be found that when the observation length is not long enough, i.e., $|\sampleset| < \lceil (\size{\para}+\size{\stat})/\size{\ctrl} \rceil = 5$, $\recMat_{\mathrm{lin},1:2}$ is rank-deficient, and it will be rank $5$ when it is sufficiently long, i.e., $|\sampleset| \geq 5$.
On the other hand, if $\recMat_{\mathrm{lin},1:2}$ is rank-deficient, the solution is not unique and may be of no physical meaning. If $\recMat_{\mathrm{lin},1:2}$ is full column rank, the parameter residual denotes the error between the true parameter and estimated parameter under assumed perturbation $\pert$, and it decreases as $\pert$ decreases.

\section{Real-world Experiments}
\label{sec:exp}
In this section, we aim to demonstrate the advantages of our proposed closed-loop IRL over open-loop IRL via a real-world task, quadrotor navigation in partially unknown environments.
\begin{figure}[!t]
	\centering
		\includegraphics[width=0.9\linewidth]{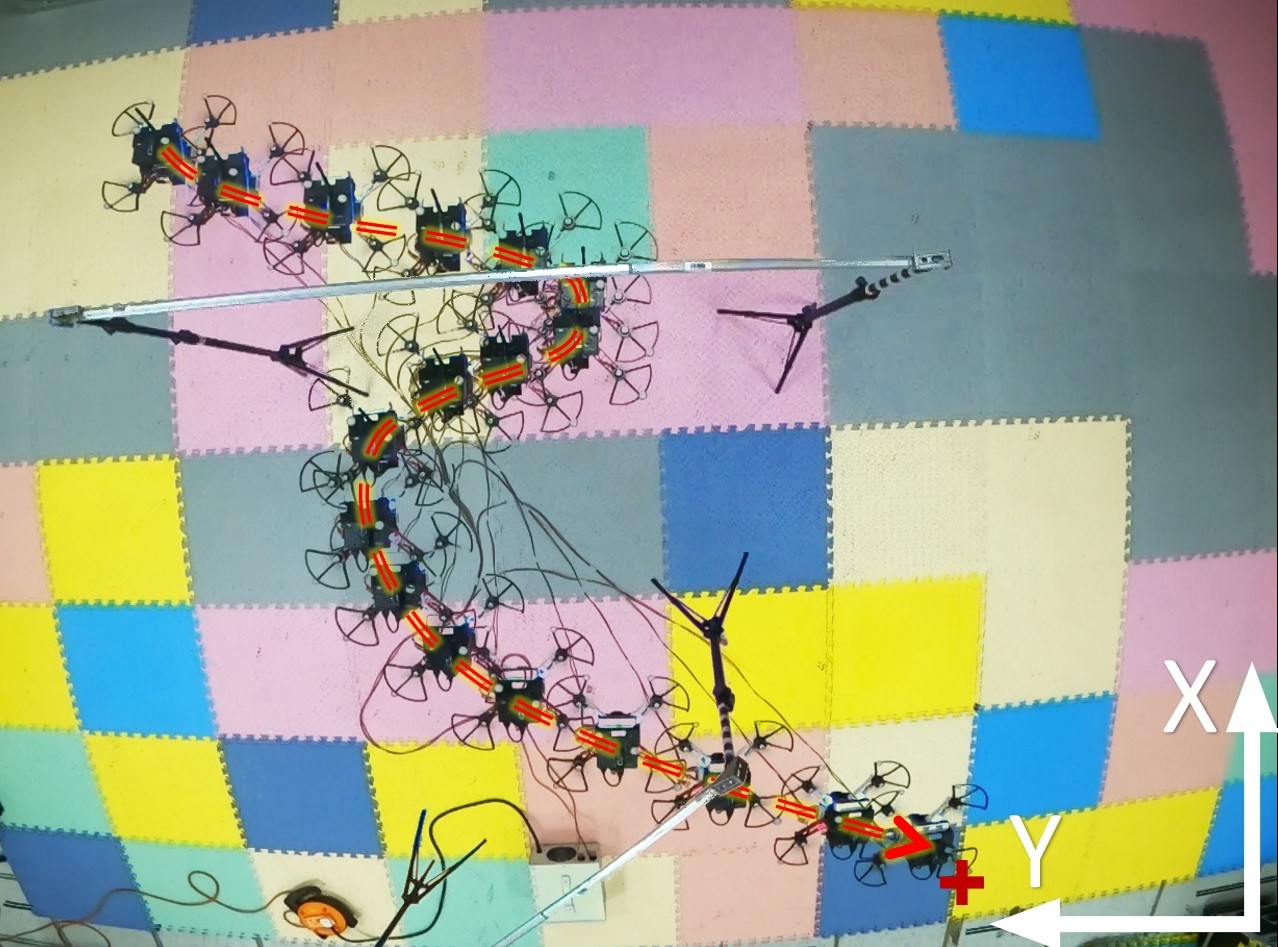}
	\caption{Quadrotor robot experiment system setup (top view). A self-made quadrotor is powered via a cable connected to a ground power supply, it relies on the motion capture system (not shown) for localization, and uses an onboard computer for high-level trajectory planning and a flight controller for low-level tracking. The task is to navigate the quadrotor from the starting position (up-left) to the goal position (bottom-right red plus sign) while flying through two gates (formed by the vertical pole and two tripods) sequentially.}
	\label{fig:uav_setup}
\end{figure}
\subsubsection{Experiment setup}
We verify the advantage of our proposed approach using a self-made tethered quadrotor in a $5 \mathrm{m} \times 5 \mathrm{m} \times 2 \mathrm{m}$ indoor area, which is equipped with a motion capture system. 
Specifically, as seen in Fig. \ref{fig:uav_setup}, the quadrotor is connected to a ground power supply using a cable to support long-duration operation. 
It uses a motion capture system for localization in the environment and is equipped with an i7 computer for onboard computation. 
The quadrotor performs trajectory planning onboard by solving an optimal control, with a linear dynamics model as commonly used in drone control \cite{allen2019real}. 
Denote the position, velocity, and acceleration by $\vecbf{p}$, $\vecbf{v}$, and $\ctrl$, respectively. 
Due to the physical limitations and safety considerations, we set $\|\vecbf{v}\|_{\infty} \leq 1$ and $\|\ctrl\|_{\infty} \leq 0.5$ to limit both the velocity and acceleration. 
With the partially unknown information, the task of trajectory planning is to minimize the following stage cost and terminal cost
\begin{equation*}
\label{eq:uav_exp_cost}
\begin{aligned}
\pathcost & :=  \para_{1} \exp{ -0.01(k - k_{\mathrm{g,1}})^{2}} \| \vecbf{p} - \vecbf{c}_{\mathrm{g,1}} \|^{2} \\
            & \quad + 
                \para_{2} \exp{-0.1(k - k_{\mathrm{g,2}})^{2}} \| \vecbf{p} - \vecbf{c}_{\mathrm{g,2}} \|^{2}
                + \para_{\ctrl} \| \ctrl \|^{2}, \\
\termcost & :=  \|\vecbf{p} - \vecbf{p}_{\mathrm{d}}\|^{2}.
\end{aligned}
\end{equation*}
respectively.
This type of formulation has been used in \cite{jin2022learning, sharma2022correcting}. 
Here, the cost function only encodes the approximate locations of each gate, $\vecbf{c}_{\mathrm{g},1}$ and $\vecbf{c}_{\mathrm{g},2} $, which can be represented by the position of any point on the gates. 
By partially unknown environment, we mean the accurate size (geometry) of the gate is unknown, which is typically required for navigation.  
Therefore, we aim to learn the cost function weights, which encode how the quadrotor safely flies through the gate. 
In the testing and generalization scenarios, we will vary the location of the two gates.
Note that in this case, the stage cost is time-dependent, as mentioned in Sec. \ref{sec:prob}, all of our presented methods still apply.
We assume that $\para_{\ctrl} = 0.01$ to avoid ambiguity and set $\para :=  [\para_{1}, \para_{2}]^{\top}$ as the learning parameter, where $\para_{i} \geq 0, i=1,2$.
We add an additional constraint $\vecbf{1}^{\top}\para = 1$ on the learning parameter.  
The planned high-level trajectory is tracked by a low-level cascaded PID controller. 
Note that both the physical setup (disturbance brought by power cable during motion) and software stack (hierarchical control architecture) necessitate the use of closed-loop control.

\paragraph{Training, test and generalization settings}
As seen in Fig. \ref{fig:uav_setup}, we collect the demonstration trajectory by recording the real-time position obtained by the motion capture system and the high-level control command sent to the low-level controller. 
We set the initial position as $[1.5,1,1]^{\top}$ and the initial velocity as $\vecbf{0}$.
The desired position is set as $[-0.5,-1,1]^{\top}$.
We set $\vecbf{c}_{\mathrm{g},i}$ as the center of two gates with $\vecbf{c}_{\mathrm{g},1} = [1,0,1]^{\top}$ and $\vecbf{c}_{\mathrm{g},2} = [0.5, 1,1]^{\top}$, the height of both gates as $2 \mathrm{m}$, and the width of two gates as $1\mathrm{m}$ and $1.5 \mathrm{m}$.
The planning horizon is set as $N=30$.
In the sequel, we shall refer to this setting as both the training and test setting.
 
Different from the environment for training and testing, we will set new ones by varying the following settings (The other setting is kept the same as the training setting.):
\begin{enumerate}
\renewcommand{\labelenumi}{\arabic{enumi})}
    \item longer planning horizon with $N=40$;
    \item new initial conditions i) $[1.5, 1.5, 1]^{\top}$ and ii) $[1.5,0.5,1]^{\top}$;
    \item new desired position $[0,-1.5,0]^{\top}$; 
\item new gate position  $\vecbf{c}_{\mathrm{g},2} = [0,1,1]^{\top}$, which is further away from gate $1$. 
\end{enumerate}

With these new settings, we use the learned parameters to compute the feedback policy for the high-level trajectory of the quadrotor. 
We check if the trajectories executed in the real-world experiments can successfully complete the task goal: flying through two gates sequentially and arriving at the vicinity of the desired position.  
We use the following metrics 
\begin{itemize}
    \item \textbf{minimum distance to each gate center}, i.e., $\min_{k \in \myset{N}} \|\vecbf{p}_{k} - \vecbf{c}_{\mathrm{g}, i} \|, i = 1,2,$
    \item \textbf{final distance to the goal}, i.e., $\|\vecbf{p}_{N} - \vecbf{p}_{\mathrm{d}} \|$,
\end{itemize}
to quantitatively evaluate the generalization performance.

\begin{figure*}[ht!]
    \centering
    \begin{subfigure}[b]{0.32\textwidth}
        \includegraphics[width=\textwidth]{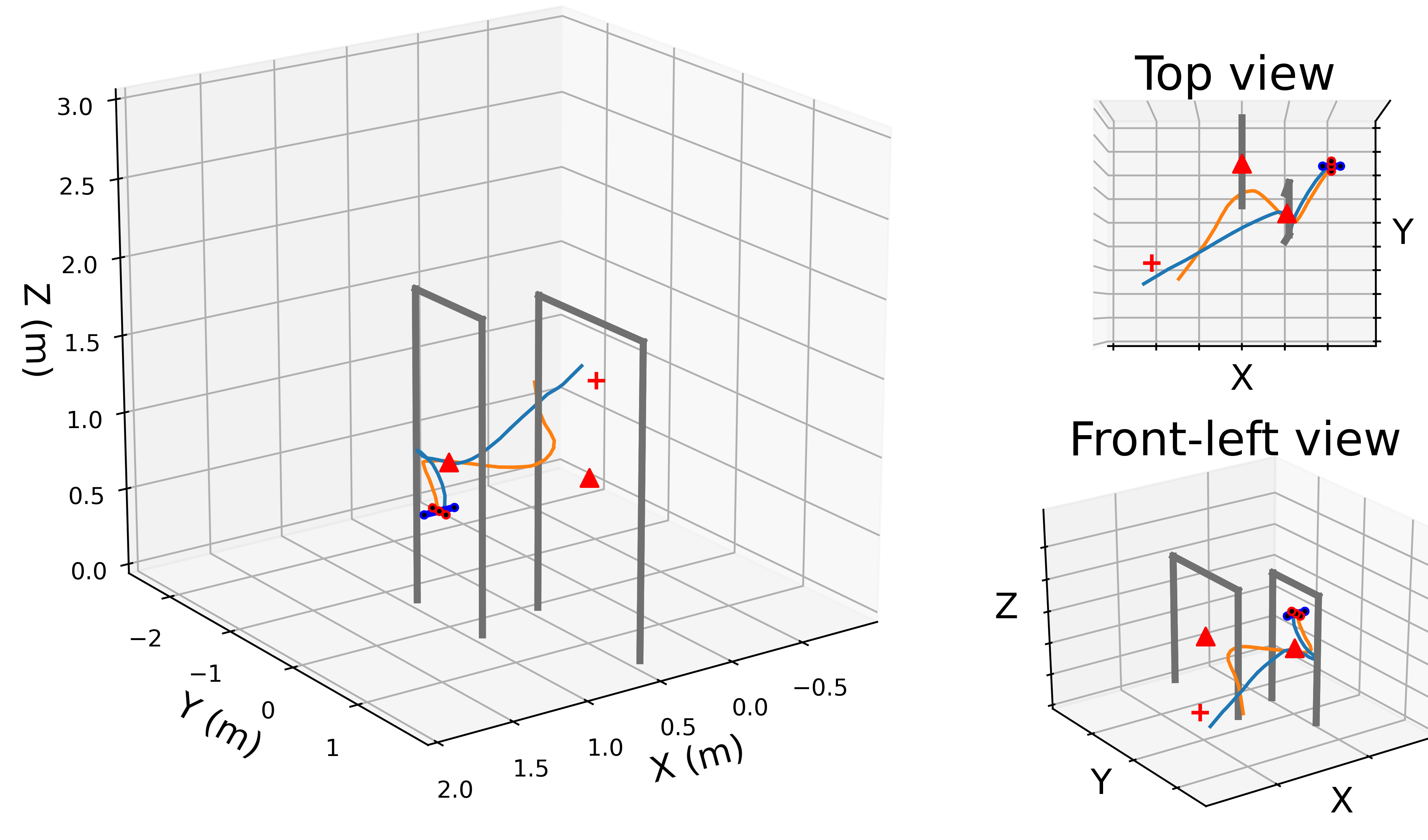}
        \caption{Trajectory planning of the learned cost function test under the training setting. \\}
        \label{fig:uav_gen_none}
    \end{subfigure}
    \hfill 
    \begin{subfigure}[b]{0.32\textwidth}
        \includegraphics[width=\textwidth]{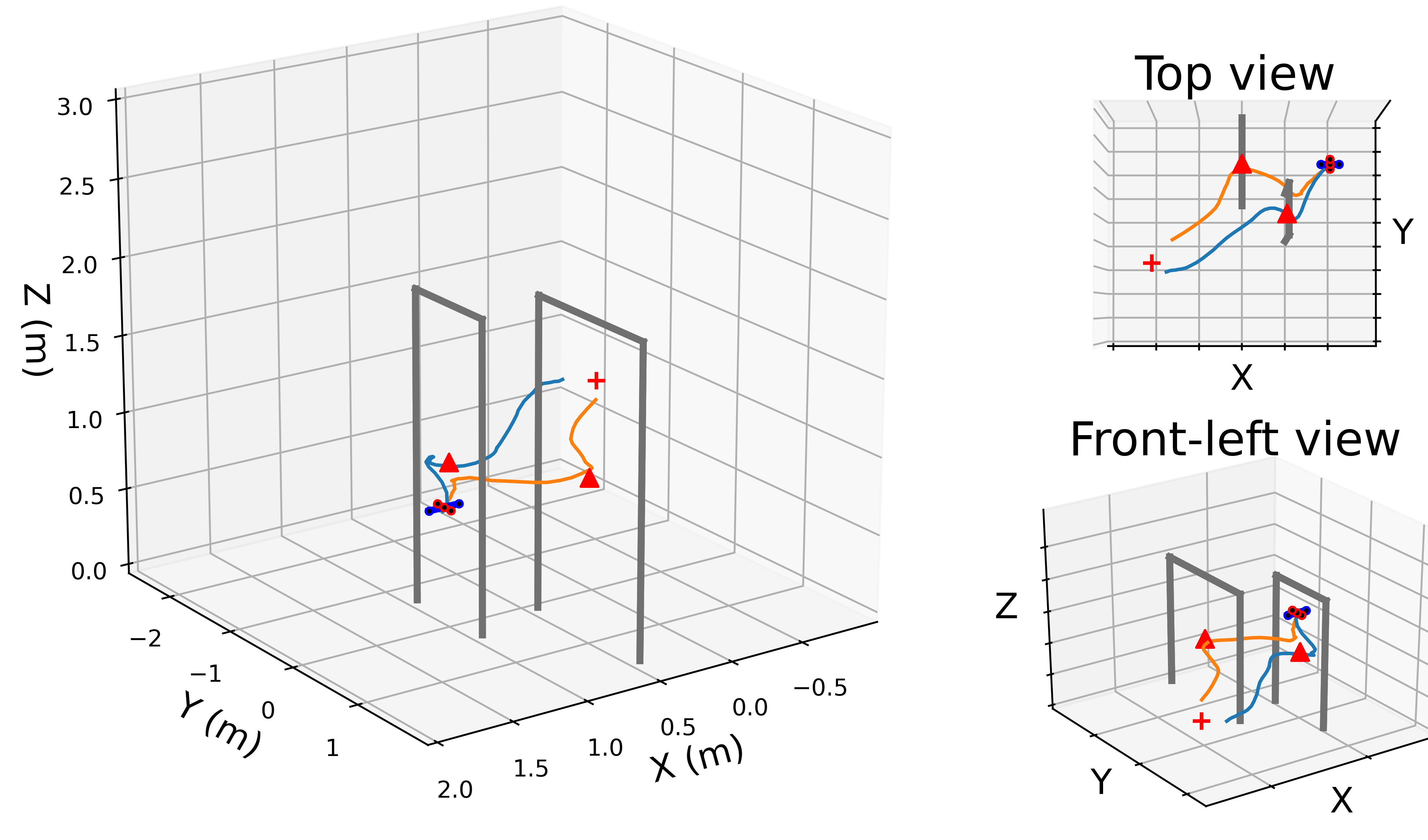}
        \caption{Generalization of the learned cost function on trajectory planning with a longer horizon.}
      \label{fig:uav_gen_hori}
   \end{subfigure}
   \hfill
    \begin{subfigure}[b]{0.32\textwidth}
        \includegraphics[width=\textwidth]{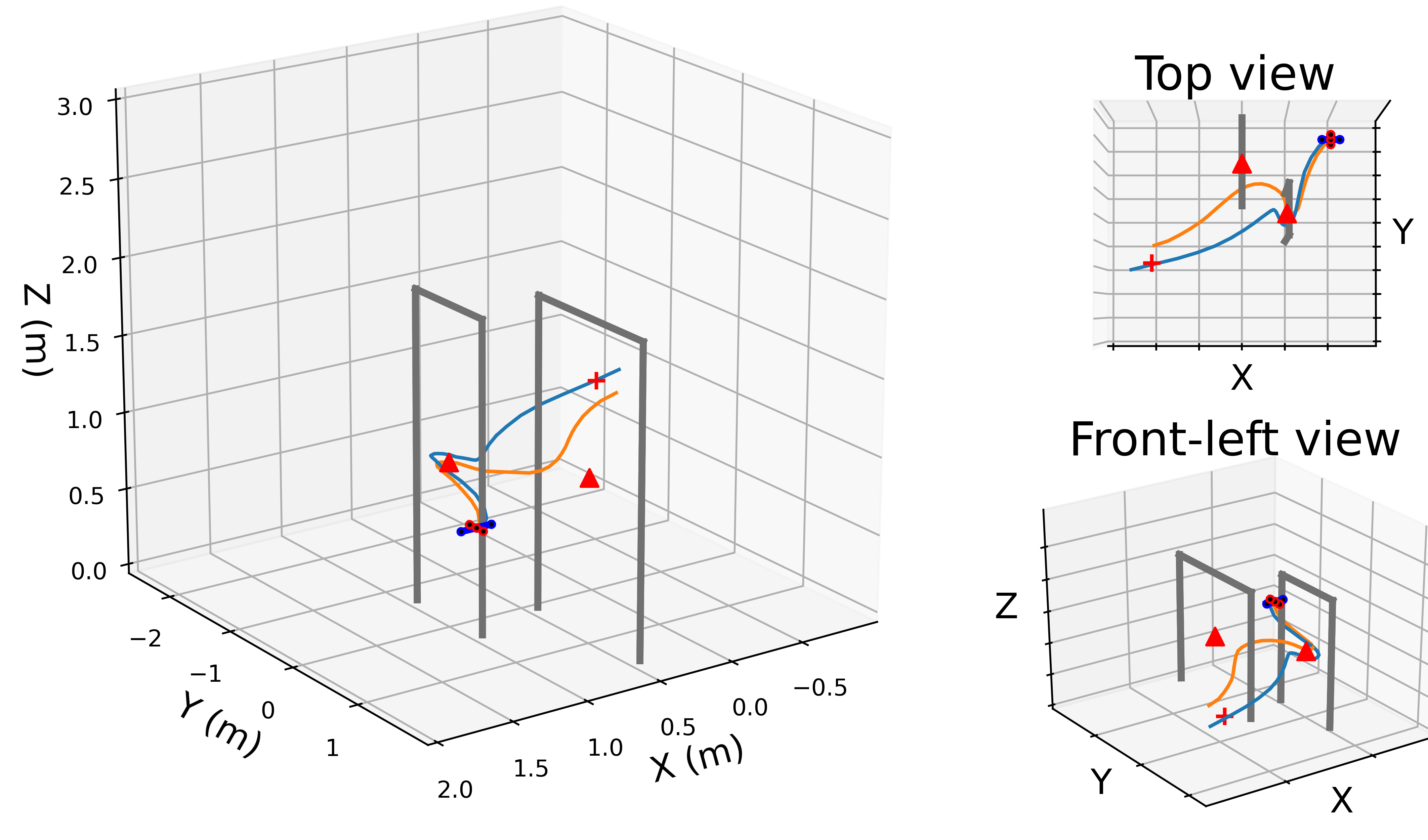}
        \caption{Generalization of the learned cost function on trajectory planning with new initial condition i).}
        \label{fig:uav_gen_ini1}
    \end{subfigure}
    
    \begin{subfigure}[b]{0.32\textwidth}
        \includegraphics[width=\textwidth]{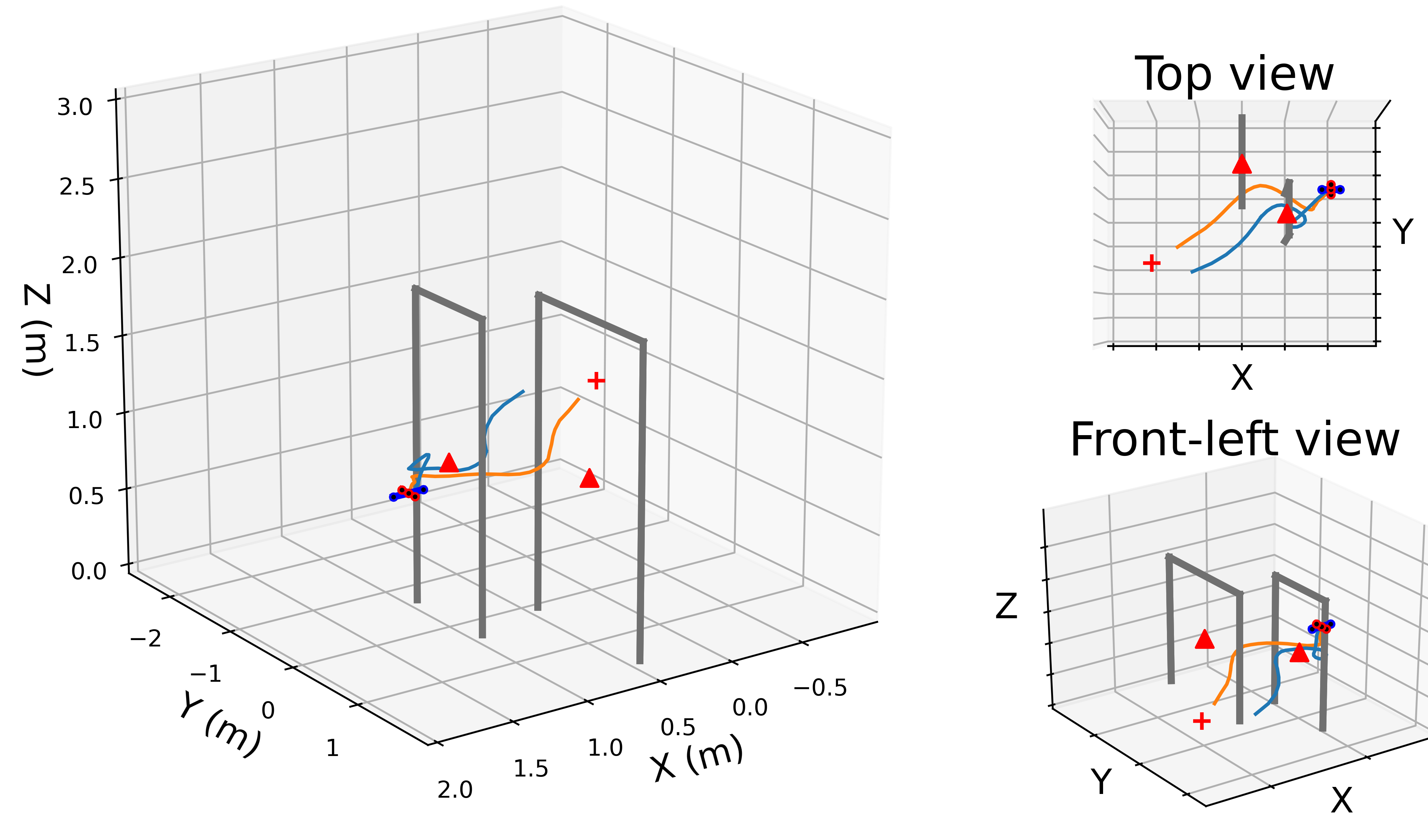}
        \caption{Generalization of the learned cost function on trajectory planning with new initial condition ii).}
        \label{fig:uav_gen_ini2}
    \end{subfigure}
    \hfill
    \begin{subfigure}[b]{0.32\textwidth}
        \includegraphics[width=\textwidth]{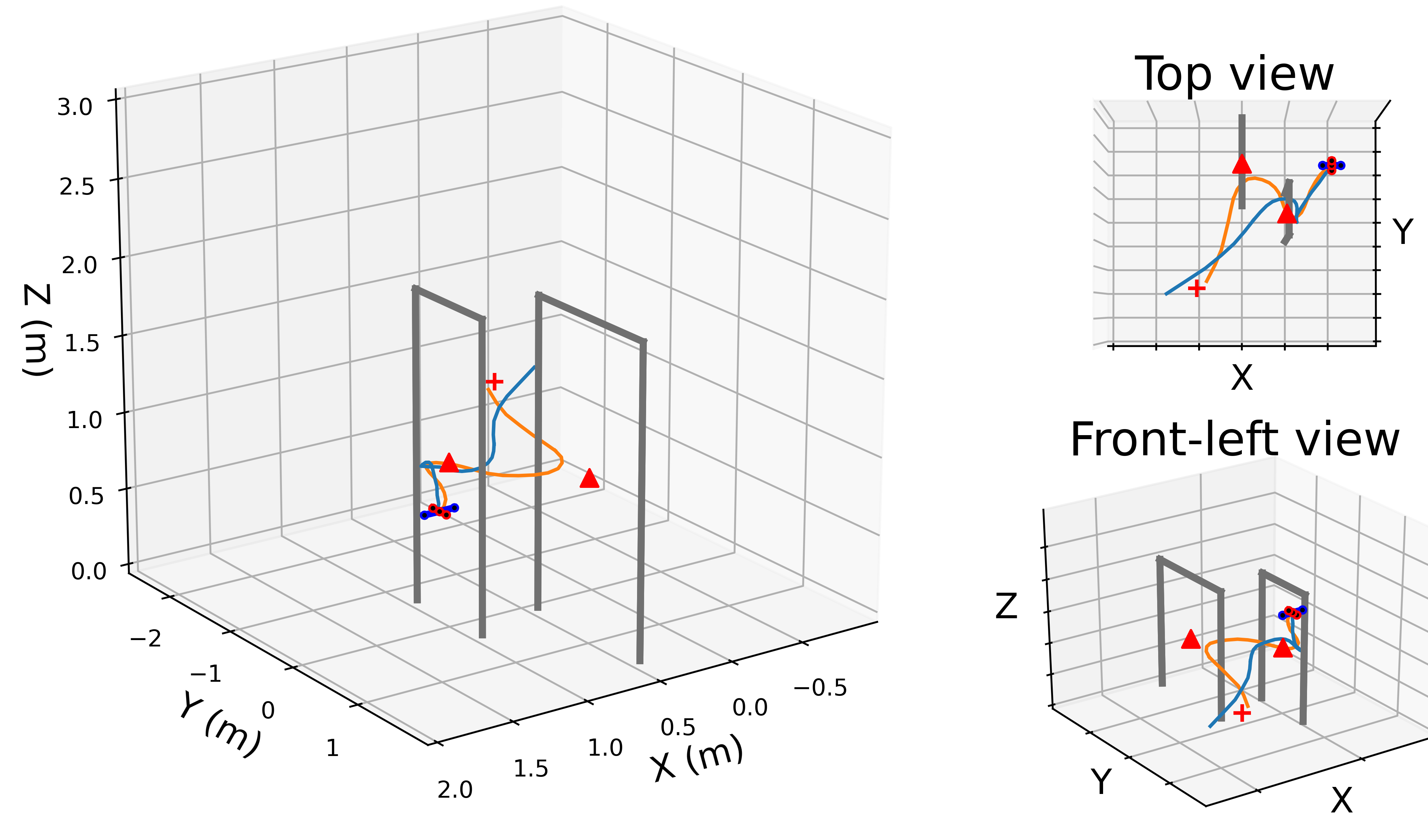}
        \caption{Generalization of the learned cost function on trajectory planning with a new desired position.}
        \label{fig:uav_gen_end}
    \end{subfigure}
    \hfill
    \begin{subfigure}[b]{0.32\textwidth}
        \includegraphics[width=\textwidth]{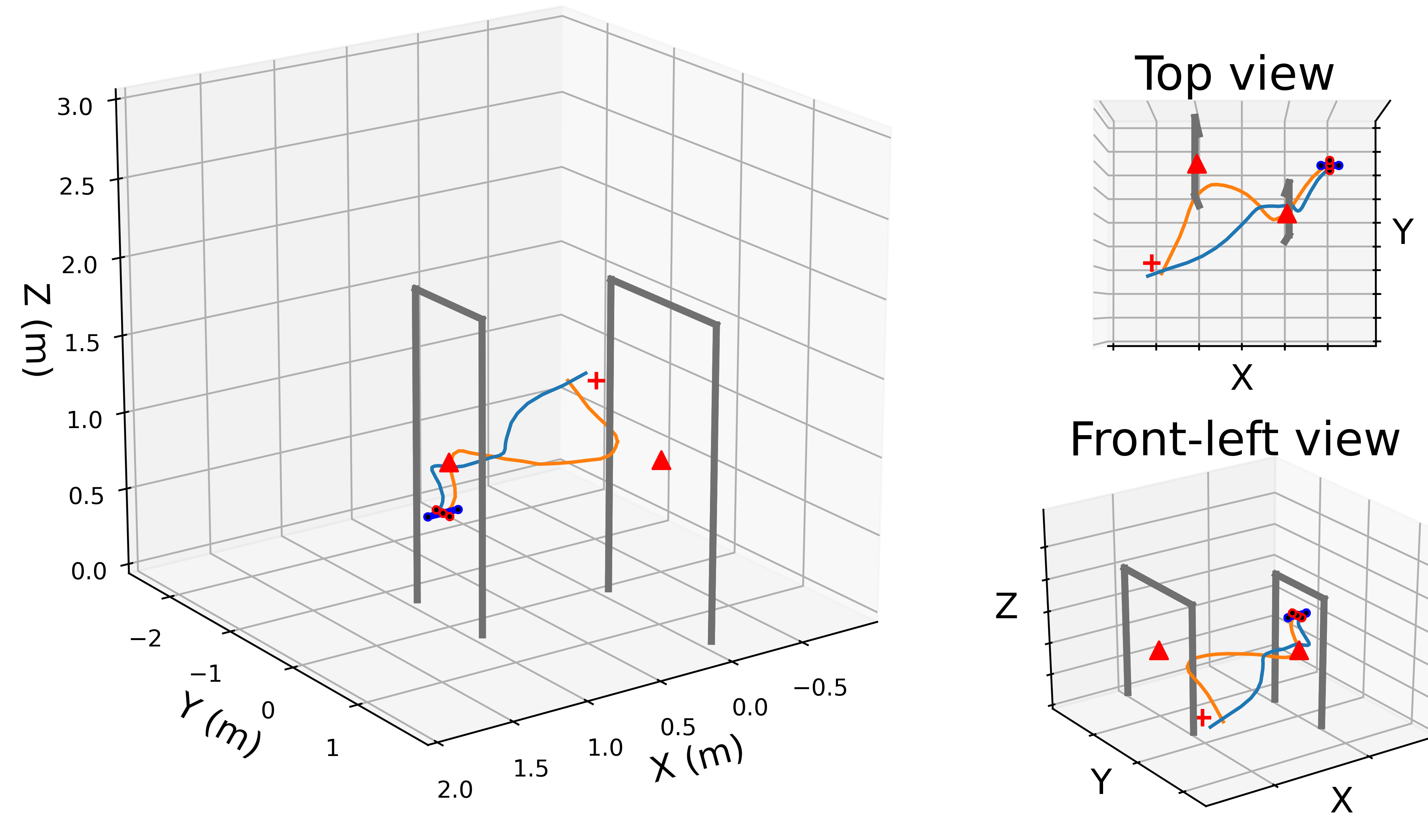}
        \caption{Generalization of the learned cost function on trajectory planning with a new gate position.}
        \label{fig:uav_gen_gate}
    \end{subfigure}
    \caption{Trajectory planning of the learned cost function (a) test under the training setting and its generalization to new settings, i.e., (b) longer horizon, (c)(d) new initial positions, (e) new desired position, (f) new gate position. 
    \textit{All the experiments are performed in the area shown in Fig. \ref{fig:uav_setup} and trajectories are recorded by the motion capture system.}
    OL and CL trajectories are denoted by blue and orange solid lines, respectively.
    The initial state of the quadrotor is denoted by a red-blue icon. 
    Red plus sign is the desired position. 
    We use gray bars and red triangles to denote the gate and its center, respectively. 
    All the quantitative measures are presented in Table \ref{tab:uav_measure}.
    }
    \label{fig:test_and_gen}
\end{figure*}

\subsubsection{Results and analysis}

We run both open-loop and closed-loop algorithms with the above-collected demonstration. 
The final learned parameters given by these algorithms are $\para^{\mathrm{ol}} = [0.74, 0.26]^{\top}$ and $\para^{\mathrm{cl}} = [0.45, 0.55]^{\top}$, respectively.
In the sequel, we shall refer to the trajectories generated by parameters learned from open-loop and closed-loop IRL as OL and CL trajectories, respectively.
By checking the value of these parameters, one can expect that OL trajectory will put more weight on gate $1$ and less weight on gate $2$ than CL trajectory.

We first test the performance in the test setting.  
A set of trajectories (one OL trajectory and one CL trajectory) is recorded in Fig. \ref{fig:uav_gen_none}. 
We further test the generalization of the learned parameter, or equivalently, cost function in the generalization settings. 
The generalization of the learned cost function to a longer planning horizon of $N=40$ is shown in Fig. \ref{fig:uav_gen_hori}.
 Figures \ref{fig:uav_gen_ini1} and \ref{fig:uav_gen_ini2} show the generalization of the learned cost function to new initial positions, which can be easily seen from the top view.
Figure \ref{fig:uav_gen_end} and \ref{fig:uav_gen_gate} present the generalization of the learned cost function to a new desired position and new gate positions, as seen from the top view.
\textit{Note that all the experiments are performed in the area shown in Fig. \ref{fig:uav_setup} and trajectories are recorded by the motion capture system and visualized in Fig. \ref{fig:test_and_gen}.}
Furthermore, for each case, we have repeated $5$ times and computed quantitative measures for the recorded trajectories, as shown in Table \ref{tab:uav_measure}.
Based on these results, we have the following comments.
\begin{table*}[h]\centering
\caption{Measure of the trajectory planning test and its generalization results. All the values are averaged for $5$ trials. }

\begin{tabular}{c|cc|cc|cc}
\hline 
\multirow{2}*{Scenario}  & \multicolumn{2}{c|}{Minimum distance to gate $1$ center} & \multicolumn{2}{c|}{Minimum distance to gate $2$ center}  & \multicolumn{2}{c}{Final distance to the goal} \\ \cline{2-7}
                          & \multicolumn{1}{p{2.5cm}<{\centering}}{CL}        & \multicolumn{1}{p{2.5cm}<{\centering}|}{OL}  & \multicolumn{1}{p{2.5cm}<{\centering}}{CL}        & \multicolumn{1}{p{2.5cm}<{\centering}|}{OL}& \multicolumn{1}{p{1.5cm}<{\centering}}{CL}        & \multicolumn{1}{p{1.5cm}<{\centering}}{OL}           \\
\hline \hline

Fig. \ref{fig:uav_gen_none}  & 0.20 & 0.10 & 0.58 & 0.87  & 0.41 & 0.46\\
Fig. \ref{fig:uav_gen_hori}  & 0.09 & 0.05 & 0.28 & 0.89 & 0.40 & 0.34 \\
Fig. \ref{fig:uav_gen_ini1} & 0.06 & 0.08 & 0.65 & 0.78  & 0.39 & 0.56\\
Fig. \ref{fig:uav_gen_ini2} & 0.10 & 0.09 & 0.60 & 0.94 & 0.43 & 0.55 \\
Fig. \ref{fig:uav_gen_end}  & 0.10 & 0.07 & 0.48 & 0.86  & 0.28 & 0.43 \\
Fig. \ref{fig:uav_gen_gate} & 0.08 & 0.07 & 0.63 & 1.14 & 0.34 & 0.29 \\
\hline
\end{tabular}
\label{tab:uav_measure}
\end{table*}
From the test of learned weights, as seen from Fig. \ref{fig:uav_gen_none} and Table \ref{tab:uav_measure},  CL trajectory takes a larger detour on flying through gate $2$ than OL trajectory. 
The average final distance to the goal is slightly smaller.
Under the longer horizon, new initial positions, new desired position, and new gate position, Fig. \ref{fig:uav_gen_hori}-\ref{fig:uav_gen_gate} show that the generalized CL trajectories still fly through two gates sequentially, and arrive at the vicinity of the desired position. 
However, generalized OL trajectories fail to fly through gate $2$. 
Specifically, as seen in Fig. \ref{fig:uav_gen_hori} and Table \ref{tab:uav_measure}, with a longer planning horizon, both OL and CL trajectories will be closer to gate $1$ center. 
Then, both of them take a larger detour towards the center of gate $2$. 
In this case, the apex of the CL trajectory gets closer to the center of gate $2$ and results in a large drop in terms of minimum distance to gate $2$ center.
However, this is not the case for OL trajectory, since the increase of horizon only reshapes its segment near gate $1$.
Nevertheless, this detour changes the velocity profile of OL trajectory and results in a smaller terminal velocity and overshoot.

Note that for the generalization of gate position in Fig. \ref{fig:uav_gen_gate}, the movement of gate $2$ enlarges the width of the curve (see the top view) due to the attraction force from its center.
However, this (discrete) change of environment is too significant to be followed by a continuous adaptation of the trajectory, which results in an increase of minimum distance to gate $2$ center, especially for OL trajectory as it does not reshape in $Y$-direction but the gate moves further away in $X$-direction.
We also report a failure case where we further move gate $2$ away from the initial position, i.e., $\vecbf{c}_{\mathrm{g},2} = [-0.2,1,1]^{\top}$, as visualized in Fig. \ref{fig:uav_gen_gate_fail}. 
It can be seen that the CL trajectory fails to reshape itself to fly through gate $2$. 
Moreover, the change of gate position completely alters the landscape of the cost function, enlarging the minimum distance to gate $1$ and failing to arrive at the vicinity of the desired position.
This result clearly shows the bound of the generalizability, i.e., the learned cost function can only be applied to some unseen scenarios that are close to the training setting.

\begin{figure}[!t]
	\centering
		\includegraphics[width=0.45\textwidth]{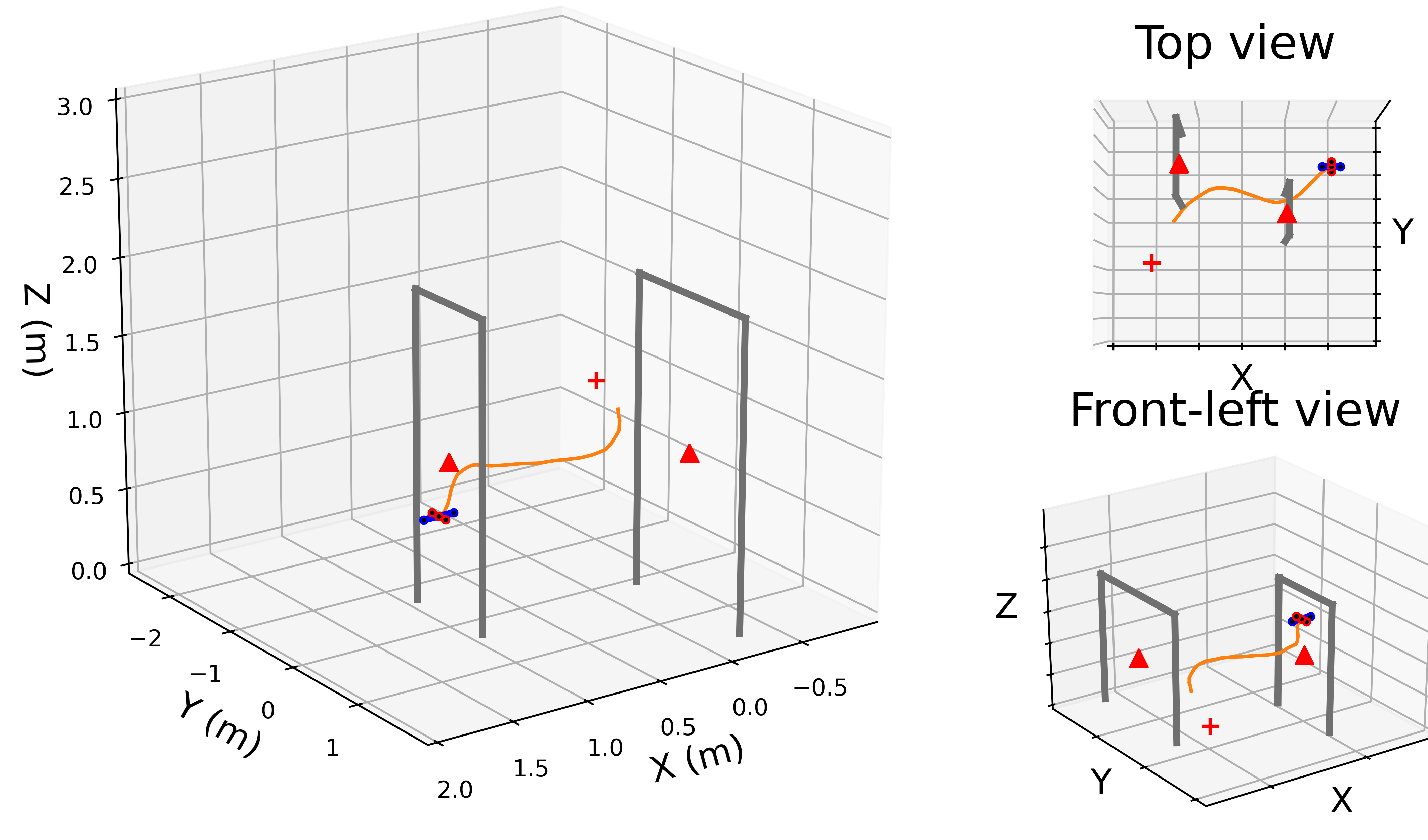}
	\caption{Failure case of generalization of the CL learned cost function on trajectory planning with a new gate position. The legend is the same as that in Fig. \ref{fig:test_and_gen}.}
	\label{fig:uav_gen_gate_fail}
\end{figure}

\section{Conclusion}
\label{sec:conclu}
In this work, we have proposed a DDP-based framework for IRL with general constraints, where the DDP was exploited to compute the gradient required in the outer loop.
We have established the equivalence between DDP-based and PDP-based methods in terms of computation.
In addition, inspired by the DDP condition, we have proposed the closed-loop IRL with the closed-loop loss function to capture the nature of collected demonstrations. 
Moreover, we have shown that this new formulation can be reduced to a general constrained IOC problem under certain conditions, which leads to a generalized recoverability condition. 
Simulations and experiments demonstrated the superiority of the closed-loop algorithm.
Future work can be on the extension of this framework to the multi-agent systems and stochastic systems.

\bibliographystyle{IEEEtran}
\bibliography{ref.bib}

\end{document}